\documentclass{article}

\PassOptionsToPackage{numbers, compress}{natbib}

\usepackage[preprint]{neurips_2026}

\usepackage[utf8]{inputenc} %
\usepackage[T1]{fontenc}    %
\usepackage{hyperref}       %
\usepackage{url}            %
\usepackage{booktabs}       %
\usepackage{amsfonts}       %
\usepackage{nicefrac}       %
\usepackage{microtype}      %
\usepackage{xcolor}         %

\usepackage{amsmath}
\usepackage{amssymb}
\usepackage{mathtools}
\usepackage{amsthm}

\usepackage{algorithm}
\usepackage{algorithmic}

\usepackage{wrapfig}

\usepackage[capitalize,noabbrev]{cleveref}

\usepackage{colortbl}
\usepackage{subcaption}

\theoremstyle{plain}
\newtheorem{theorem}{Theorem}[section]
\newtheorem{proposition}[theorem]{Proposition}
\newtheorem{lemma}[theorem]{Lemma}

\theoremstyle{definition}
\newtheorem{definition}[theorem]{Definition}

\theoremstyle{remark}

\makeatletter
\newtheorem*{rep@thm}{\rep@title}
\newcommand{\newreptheorem}[2]{%
\newenvironment{rep#1}[1]{%
 \def\rep@title{#2 \ref{##1}}%
 \begin{rep@thm}}%
 {\end{rep@thm}}}
\makeatother

\usepackage[textsize=tiny]{todonotes}
\usepackage{multirow}

\title{
A Single Deep Preference-Conditioned Policy for Learning Pareto Coverage Sets
}

\author{%
  Akihiro Kubo$^{1}$ \quad Kosuke Nakanishi \quad Shin Ishii$^{1,2,3}$ \\
  $^{1}$ International Research Center for Neurointelligence, The University of Tokyo, Tokyo, Japan \\
  $^{2}$ ATR Neural Information Analysis Laboratories, Kyoto, Japan \\
  $^{3}$ Department of Information Science, Kyoto University, Kyoto, Japan \\
  Correspondence: \texttt{kubo-a@ircn.jp}
}

\begin{document}

\maketitle

\begin{abstract}
Preference-conditioned multi-objective reinforcement learning aims to learn a single policy that captures trade-offs across preferences, 
but under nonlinear scalarization the uniqueness and continuity of the preference-to-solution correspondence remain unclear.
We study this problem in tabular multi-objective Markov decision processes (MDPs) 
using smooth Tchebycheff scalarization as a monotone utility.
Under mild interior conditions on the preference set, 
we prove that each preference induces a unique Pareto-optimal return vector and that this vector depends Lipschitz-continuously on the preference, 
providing a principled foundation for preference sweeping toward dense Pareto-front coverage.
To compute these targets, we formulate the problem over occupancy measures and derive Concave Mirror Descent Policy Iteration (CMDPI), 
which achieves an $O(1/k)$ objective-suboptimality rate.
We further show that each update is equivalent to solving a Kullback-Leibler-regularized MDP with the previous policy as reference, 
yielding a policy-iteration interpretation and finite-iterate policy continuity across preferences.
We instantiate the update as a deep actor-critic algorithm preserving previous-policy regularization.
On eight MO-Gymnasium tasks, it achieves the best average hypervolume rank among recent baselines and strong expected-utility performance. 
Continuous-control experiments indicate gains beyond the discrete-action setting.
\end{abstract}

\section{Introduction}
Multi-Objective Reinforcement Learning (MORL) optimizes multiple possibly competing reward signals, so the central object is not a single optimal policy
but a set of Pareto-optimal trade-offs (Pareto set / Pareto front) \citep{white1982multiobjective,Van-Moffaert2014-bk,roijers2013survey}.
With deep reinforcement learning, MORL has been extended to high-dimensional and continuous-control domains \citep{mossalam2016modrl,pmlr-v119-xu20h}.
A prominent direction is \emph{preference-conditioned} learning, where a user-specified preference vector $\omega$ is input to a single deep model
that outputs diverse trade-off policies on demand \citep{yang2019generalized,reymond2022pareto,basaklar2022pd,pmlr-v119-xu20h,liu2025cmorl,liu2025pslmorl}.
We focus on \emph{dense coverage} of Pareto-optimal objective vectors: by scanning preferences, we aim to reach a neighborhood of any Pareto-optimal objective vector
in the objective space \citep{roijers2013survey,pirotta2015continuouspareto,Van-Moffaert2014-bk,reymond2019paretodqn,pmlr-v119-xu20h}.

A standard route is to scalarize the objective vector by a utility function and solve a single-objective problem.
In MORL taxonomies, if user utility is restricted to positive weighted sums, one targets a \emph{convex coverage set (CCS)};
if one allows arbitrary monotonically increasing utilities, one targets a \emph{Pareto coverage set (PCS)} \citep{roijers2013survey,hayes2022practical}.
CCS-oriented methods admit clean formulations (often compatible with dynamic programming / linear programming), but even for convex attainable sets,
the optimal objective vector for a fixed preference can be non-unique (e.g., a whole supporting face), which makes the target ambiguous and can induce preference-wise switching/discontinuity
\citep{boyd2004convex,roijers2013survey,Van-Moffaert2014-bk,abels2019dynamic}.
In practice, they also tend to concentrate on extreme (vertex) solutions, making dense coverage challenging \citep{roijers2013survey,Van-Moffaert2014-bk}.
PCS-oriented approaches can reach non-vertex trade-offs, but the resulting problems are typically more complex; achieving stable optimization often relies on objective modifications
(e.g., strong regularization) or heuristics, which may deviate from the Pareto front \citep{geist2019theory,lu2023multiobjective,peng2025esr}.
Figure~\ref{fig:highlights} illustrates this linear--nonlinear tension.

To bridge this gap, we adopt the Smooth Tchebycheff (STCH) scalarization \citep{Lin2024stch},
a smooth approximation \citep{beck2012smoothing} of a Tchebycheff-type nonlinear utility \citep{bowman1976relationship}.
For discounted multi-objective MDPs with finite state/action spaces, the attainable objective set $\mathcal{J}$ is a convex polytope \citep{Li2024-jv}.
For any preference
\[\omega \in \Omega^m_{++} := \{ \omega \in \mathbb{R}^m \mid \sum_l \omega_l = 1,\ \omega_l > 0\ \forall l \},\]
we prove that the STCH-induced optimal objective point $\mathbf{J}^\omega$ is uniquely defined in the objective space and is Pareto-optimal
(Propositions~\ref{lem:stch_unique} and~\ref{lem:pareto2stch}).
Moreover, the map $\omega \mapsto \mathbf{J}^\omega$ is Lipschitz continuous on
\[\Omega^m_{\delta} := \{ \omega \in \Omega^m \mid \omega_l \ge \delta\ \forall l \}\]
 for any $\delta>0$
(Theorem~\ref{thm:omega-lipsitz}),
which justifies stable preference scanning.
Consequently, $\{\mathbf{J}^\omega\}$ yields an $\varepsilon$-dense approximation of the Pareto-optimal objective set, in the sense stated in the paper.

\begin{wrapfigure}{r}{0.45\linewidth}
    \centering
    \includegraphics[width=\linewidth, trim=10 14 8 10, clip]
    {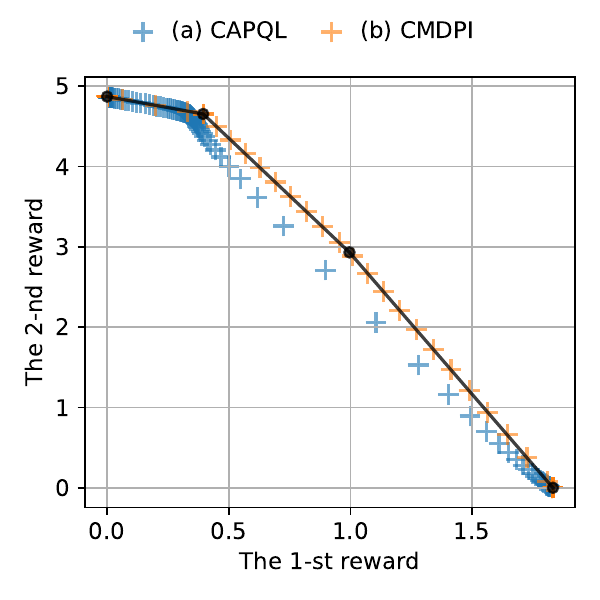}
\vspace{-2mm}
\caption{
Illustration of a key limitation of existing methods: 
linear scalarization recovers only vertex solutions of the Pareto front (black curve), 
while CAPQL exhibits biased coverage. 
CMDPI (ours) achieves denser and more uniform Pareto-optimal coverage.
}
\label{fig:highlights}
\end{wrapfigure}

To solve the resulting nonlinear scalarized problem \emph{without altering the objective}, we derive mirror descent in the occupation-measure space.
Using the Bregman divergence induced by negative conditional entropy and the relative smoothness property (Proposition~\ref{lem:om-rel_smooth}),
we obtain an $O(1/k)$ convergence guarantee under the Lu--Freund--Nesterov framework \citep{lu2017relsmooth}.
Each iteration admits an implementation via policy evaluation based on a soft Bellman fixed point and policy improvement by exponential reweighting
\citep{geist2019theory,moreno2024efficient}.
This equivalence provides an explicit iterative policy-update view of nonlinear MORL (often formulated as convex optimization over occupation measures)
\citep{Agarwal2022-lx,geist2022concave}, and makes the method suitable for deep implementations.
Importantly, at every finite iteration $k$, the update yields a policy $\pi_k^\omega$ that is continuous in $\omega$.
This stands in contrast to previous single-network MORL approaches~\citep{reymond2022pareto,yang2019generalized,basaklar2022pd}, which typically assume (rather than verify) such preference-to-policy regularity
and rely on function approximation to generalize across preferences.

Finally, although we do not provide a theoretical analysis under function approximation, 
we implement the core update in deep model-free and model-based actor--critic frameworks 
and empirically evaluate it on MO-Gymnasium~\citep{felten2023toolkit}, demonstrating improved Pareto-front approximation quality and favorable sample efficiency.
Our contributions are:
\begin{enumerate}
    \item For the STCH-scalarized rewards problem, we establish uniqueness and Pareto-optimality of $\mathbf{J}^\omega$ and Lipschitz continuity of $\omega \mapsto \mathbf{J}^\omega$,
    implying $\varepsilon$-dense coverage via preference scanning.
    \item We derive an $O(1/k)$-convergent mirror-descent algorithm in occupation-measure space that solves the nonlinear scalarized problem without objective modification.
    \item We rewrite the update in an equivalent policy-iteration form and provide a theoretical motivation for deep preference-conditioned learning,
    including $\omega$-continuity at finite iterations.
\end{enumerate}

\section{Preliminaries}
\subsection{Multi-objective discounted MDP}
We consider a discounted multi-objective Markov decision process (MOMDP) specified by a finite state space $\mathcal S$, a finite action space $\mathcal A$, a discount factor $\gamma\in(0,1)$, a transition kernel $P(\cdot\mid s,a)$, and an $m$-dimensional vector reward $\mathbf r(s,a)\in\mathbb R^m$.
The initial distribution $p_0\in\Delta(\mathcal S)$ (the probability simplex over $\mathcal S$) is assumed to be strictly positive over states, i.e., $p_0(s)>0$ for all $s\in\mathcal S$.

For a stationary randomized policy $\pi(\cdot\mid s)\in\Delta(\mathcal A)$, define
\[
\mathbf V^\pi(s)
:=\mathbb E\!\left[\sum_{t\ge 0}\gamma^t \mathbf r(s_t,a_t)\mid S_0=s,\pi\right],\quad
\mathbf J(\pi):=\mathbb E_{s\sim p_0}[\mathbf V^\pi(s)],
\]
and let $\Pi$ be the set of such policies with achievable set $\mathcal J:=\{\mathbf J(\pi):\pi\in\Pi\}$.

\subsection{Pareto optimality in MOMDP}
For $\mathbf x,\mathbf y\in\mathbb R^m$, we say that $\mathbf y$ \emph{(strictly) dominates} $\mathbf x$ if
$y_i\ge x_i$ for all $i$ and $y_j>x_j$ for some $j$.
An objective vector $\mathbf x\in\mathcal J$ is \emph{Pareto optimal} if there is no $\mathbf y\in\mathcal J$ that dominates $\mathbf x$.
A policy $\pi$ such that $\mathbf J(\pi)=\mathbf x$ for some Pareto-optimal $\mathbf x$ is called a \emph{Pareto-optimal policy}.
The set of all Pareto-optimal objective vectors $\mathcal P(\mathcal J)$ is called the \emph{Pareto front}, and a set of policies whose image in the objective space equals the Pareto front is called a \emph{Pareto coverage set (PCS)}~\citep{roijers2013survey,hayes2021practical}.
A central goal in MOMDPs is to find a PCS.

For a stationary policy $\pi$, define
\[
\mu_\pi(s,a):=(1-\gamma)\mathbb E_\pi\!\left[\sum_{t\ge0}\gamma^t \mathbf 1\{s_t=s,a_t=a\}\right],\quad
\rho_\pi(s):=\sum_a \mu_\pi(s,a),
\]
and let $\mathcal M:=\{\mu_\pi:\pi\in\Pi\}$. Then
\begin{equation}
\mathbf J(\pi)=\tfrac{1}{1-\gamma}\sum_{s,a}\mu_\pi(s,a)\mathbf r(s,a),
\label{eq:J-linear-mu}
\end{equation}
and $\mathcal M$ (hence $\mathcal J$) is a convex polytope~\citep{Li2024-jv,altman2021constrained}.

\subsection{Smooth Tchebycheff (STCH) scalarization}
We define the Smooth Tchebycheff (STCH) utility function\footnote{~\citet{Lin2024stch} study the corresponding minimization form; all statements in this paper follow from the equivalent maximization formulation obtained by flipping the signs of the objectives.}
as
\begin{equation}
    \label{def:stch}
    u(\mathbf f,\omega)
    := - \tau\log\sum_{k=1}^m \exp\!\left(\frac{\omega_k (I_k - f_k)}{\tau}\right).
\end{equation}
Here, $\omega\in\Omega^m_{++}$ is a \emph{preference vector} and $\tau>0$ is a smoothing parameter.
The vector $\mathbf I=(I_1,\ldots,I_m)^\top$ is referred to as a \emph{utopia point} in this paper, and is chosen such that each component $I_k$ upper-bounds the attainable values of $f_k$.

As shown by ~\citet{Lin2024stch}, as $\tau\to 0$, the STCH utility $u(\cdot,\omega)$ converges pointwise to the (non-smooth) weighted Tchebycheff scalarization.
Thus, $\tau$ controls the degree of smoothing via the log-sum-exp approximation.

In tabular settings, the utopia point can be chosen explicitly.
In deep or sample-based implementations, however, $\mathbf I$ is generally unknown and must be estimated from data; the estimation procedure used in this work is described in Appendix~\ref{app:impl}.

The function $u(\cdot,\omega)$ is concave, continuous, and strictly increasing component-wise, with
\begin{equation}
\nabla_{\mathbf f}u(\mathbf f,\omega)\in\mathbb R^m_{++}\qquad \forall \mathbf f\in\mathbb R^m,
\label{prop:grad_plus}
\end{equation}
where $\mathbb R^m_{++}:=\{x\in\mathbb R^m:\ x_i>0\ \forall i\}$. We consider the scalarized MOMDP
\begin{equation}
\max_{\pi\in\Pi} u(\mathbf J(\pi),\omega),
\label{eq:stch_mo_dp}
\end{equation}
which admits a unique optimal objective vector when the achievable set is polyhedral, despite $u(\cdot,\omega)$ not being strictly concave in general~\citep{Lin2024stch}.

\begin{proposition}[Uniqueness in the objective space]
\label{lem:stch_unique}
Fix $\omega\in\Omega^m_{++}$ and assume that $\mathcal J$ is a convex polytope.
Then problem~\eqref{eq:stch_mo_dp} admits a \emph{unique} optimal objective vector $\mathbf J^\omega\in\mathcal J$, and $\mathbf J^\omega$ is Pareto optimal.
\end{proposition}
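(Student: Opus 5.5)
The plan is to prove three things in order: existence of a maximizer, Pareto optimality of every maximizer, and uniqueness of the maximizer in objective space. Each step uses only that $\mathcal J$ is convex and compact, which holds because it is a polytope, together with the structure of the log-sum-exp function.

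\textbf{Existence.} Problem~\eqref{eq:stch_mo_dp} is equivalent to $\max_{\mathbf f\in\mathcal J}u(\mathbf f,\omega)$. The function $u(\cdot,\omega)$ is continuous and $\mathcal J$ is compact, so a maximizer exists.

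\textbf{Pareto optimality.} Let $\mathbf x$ be any maximizer, and suppose some $\mathbf y\in\mathcal J$ dominates it. Then $\mathbf d:=\mathbf y-\mathbf x\ge 0$ with $\mathbf d\neq 0$. By \eqref{prop:grad_plus} and the mean value theorem,
\[
u(\mathbf y,\omega)-u(\mathbf x,\omega)=\nabla_{\mathbf f}u(\xi,\omega)^\top\mathbf d>0
\]
for some $\xi$ on the segment $[\mathbf x,\mathbf y]$. This contradicts the optimality of $\mathbf x$. Hence every maximizer is Pareto optimal, and the same holds for $\mathbf J^\omega$ once uniqueness is shown.

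\textbf{Uniqueness: structure of $u$.} Write $u(\mathbf f,\omega)=-g(D(\mathbf I-\mathbf f))$, where
\[
g(\mathbf z)=\tau\log\sum_k e^{z_k/\tau},\qquad D=\mathrm{diag}(\omega),
\]
and $D$ is invertible because $\omega\in\Omega^m_{++}$. The Hessian of $g$ is $\tau^{-1}(\mathrm{diag}(p)-pp^\top)$, where $p$ is the softmax vector with all entries $p_k>0$. This is a weighted covariance matrix, so $\mathbf v^\top\nabla^2g\,\mathbf v=\tau^{-1}\mathrm{Var}_p(\mathbf v)\ge 0$, with equality iff $\mathbf v\in\mathrm{span}(\mathbf 1)$. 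Consequently, for any direction $\mathbf d$, the map $t\mapsto u(\mathbf x+t\mathbf d,\omega)$ has second derivative $-\mathbf d^\top D\nabla^2 g\,D\mathbf d$. This is strictly negative everywhere unless $D\mathbf d\parallel\mathbf 1$, that is, unless $\mathbf d\parallel \mathbf w:=(1/\omega_1,\dots,1/\omega_m)^\top$.

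\textbf{Uniqueness: the argument.} Suppose $\mathbf x\neq\mathbf y$ are both maximizers. Since $\mathcal J$ is convex and $u$ is concave, every point of $[\mathbf x,\mathbf y]$ lies in $\mathcal J$ and is also optimal, so $u$ is constant on the segment. A strictly concave function of one variable cannot be constant on an interval. Therefore the direction must be degenerate: $\mathbf y-\mathbf x=t\mathbf w$ for some $t\neq 0$. All entries of $\mathbf w$ are strictly positive, so one of $\mathbf x,\mathbf y$ strictly dominates the other in every coordinate. By strict monotonicity this gives it a strictly larger utility, which contradicts the fact that both are optimal. Hence the optimal objective vector $\mathbf J^\omega$ is unique.

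The main obstacle is precisely this uniqueness step, since $u$ is not strictly concave. The key is to isolate the single flat direction of log-sum-exp, namely translation along $\mathbf 1$, pulled back through $D$ to the direction $\mathbf w$. One then observes that this flat direction is strictly positive, so it is exactly the direction that monotonicity rules out.
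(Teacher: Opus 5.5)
Your proof is correct. Existence and Pareto optimality match the paper: compactness plus continuity, and strict componentwise monotonicity, which you justify via the mean value theorem and \eqref{prop:grad_plus}.

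The uniqueness step takes a different route. The paper never analyzes where $u$ fails to be strictly concave. It passes to the strictly increasing transform $F_\omega(\mathbf c):=\exp(-u(\mathbf c,\omega)/\tau)=\sum_i\exp(\omega_i(I_i-c_i)/\tau)$, which has the same optimizers. $F_\omega$ is separable with a diagonal Hessian whose entries are strictly positive, so it is strictly convex and has a unique minimizer over the convex set $\mathcal J$. You instead stay with $u$. You identify the kernel of the log-sum-exp Hessian as $\mathrm{span}(\mathbf 1)$ and pull it back to the single flat direction $\mathbf w=(1/\omega_1,\dots,1/\omega_m)^\top$. You then exclude that direction by strict monotonicity, since $\mathbf w\in\mathbb R^m_{++}$.

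The paper's route is shorter and gives more. On a compact set $F_\omega$ is strongly convex uniformly in $\omega\in\Omega^m_\delta$, and that is exactly the ingredient reused in the proof of Theorem~\ref{thm:omega-lipsitz}. Your route explains why uniqueness survives even though $u$ is not strictly concave: the only non-strict direction is strictly positive, so monotonicity rules it out. Its template (locate the directions where strict concavity fails, then check they are sign-definite) does not rely on finding a convexifying reparametrization such as $\exp(-u/\tau)$. Neither argument uses polyhedrality; convexity and compactness of $\mathcal J$ suffice in both.
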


\begin{proposition}
\label{lem:pareto2stch}
If $\mathbf J\in\mathcal J$ is Pareto optimal, then there exists $\omega\in\Omega^m_{++}$ such that $\mathbf J$ is an optimal objective vector of~\eqref{eq:stch_mo_dp}.
\end{proposition}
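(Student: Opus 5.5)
The plan is to use the first-order optimality condition for the concave program $\max_{\mathbf f\in\mathcal J}u(\mathbf f,\omega)$ and to choose $\omega$ so that the gradient of $u$ at the given Pareto point $\mathbf J$ points along a strictly positive outer normal of $\mathcal J$ at $\mathbf J$. The argument has two steps: first obtain such a strictly positive supporting vector $\lambda$ from Pareto optimality and polyhedrality, then invert the map $\omega\mapsto\nabla_{\mathbf f}u(\mathbf J,\omega)$ up to scaling.

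\textbf{Step 1 (strictly positive supporting vector).} Since $\mathcal J$ is a polytope, write $\mathcal J=\{\mathbf y: A\mathbf y\le \mathbf b\}$ and consider the linear program
\[
\max_{\mathbf y,\mathbf s}\ \mathbf 1^\top\mathbf s\quad\text{s.t.}\quad A\mathbf y\le\mathbf b,\ \ \mathbf y-\mathbf s\ge\mathbf J,\ \ \mathbf s\ge 0 .
\]
It is feasible, since $(\mathbf J,0)$ is admissible. Pareto optimality of $\mathbf J$ forces its optimal value to be $0$: any feasible $\mathbf s\neq0$ would give some $\mathbf y\in\mathcal J$ dominating $\mathbf J$. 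Attach multipliers $\eta\ge0$ to $A\mathbf y\le\mathbf b$ and $\lambda\ge0$ to $\mathbf y-\mathbf s\ge\mathbf J$. Stationarity in the free variable $\mathbf y$ gives $A^\top\eta=\lambda$, and stationarity in $\mathbf s\ge0$ gives $\lambda\ge\mathbf 1$. Zero duality gap gives $\mathbf b^\top\eta-\lambda^\top\mathbf J=0$. Hence for all $\mathbf y\in\mathcal J$,
\[
\lambda^\top\mathbf y=\eta^\top A\mathbf y\le\eta^\top\mathbf b=\lambda^\top\mathbf J .
\]
So $\lambda\in\mathbb R^m_{++}$ supports $\mathcal J$ at $\mathbf J$. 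This is the step I regard as the crux: for general convex sets, Pareto points need not admit strictly positive normals, and polyhedrality enters exactly through exact LP duality. After normalization we may assume $\sum_k\lambda_k=1$.

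\textbf{Step 2 (matching the gradient).} Set $d_k:=I_k-J_k\ge0$, using that $I_k$ upper-bounds $f_k$ on $\mathcal J$. Differentiating \eqref{def:stch} gives
\[
\partial_{f_k}u(\mathbf J,\omega)=\frac{\omega_k e^{\omega_k d_k/\tau}}{\sum_j e^{\omega_j d_j/\tau}} .
\]
It therefore suffices to find $\omega\in\Omega^m_{++}$ and $c>0$ with $\omega_k e^{\omega_k d_k/\tau}=c\lambda_k$ for every $k$.

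For each $k$, the map $t\mapsto te^{td_k/\tau}$ is a continuous strictly increasing bijection of $[0,\infty)$ onto itself. So for each $c>0$ there is a unique $\omega_k(c)>0$ solving the $k$-th equation, and $\omega_k(c)$ is continuous and strictly increasing in $c$, tending to $0$ as $c\to0$ and to $\infty$ as $c\to\infty$. Consequently $\sum_k\omega_k(c)$ is continuous and strictly increasing from $0$ to $\infty$, and by the intermediate value theorem there is a unique $c^*$ with $\sum_k\omega_k(c^*)=1$. Taking $\omega:=\omega(c^*)\in\Omega^m_{++}$ gives $\nabla_{\mathbf f}u(\mathbf J,\omega)=\kappa\lambda$ with $\kappa=c^*/\sum_j e^{\omega_jd_j/\tau}>0$.

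\textbf{Step 3 (conclude).} By concavity of $u(\cdot,\omega)$, for every $\mathbf y\in\mathcal J$,
\[
u(\mathbf y,\omega)\le u(\mathbf J,\omega)+\nabla_{\mathbf f}u(\mathbf J,\omega)^\top(\mathbf y-\mathbf J)=u(\mathbf J,\omega)+\kappa\,\lambda^\top(\mathbf y-\mathbf J)\le u(\mathbf J,\omega).
\]
Hence $\mathbf J$ maximizes $u(\cdot,\omega)$ over $\mathcal J$. Since $\mathbf J=\mathbf J(\pi)$ for some $\pi\in\Pi$, $\mathbf J$ is an optimal objective vector of \eqref{eq:stch_mo_dp}. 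By Proposition~\ref{lem:stch_unique}, it is moreover the unique one, $\mathbf J=\mathbf J^\omega$.
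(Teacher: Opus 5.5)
Your proof is correct and has the same structure as the paper's proof. First, a strictly positive normal $\lambda$ is obtained at the Pareto point: the paper gets it by a cone-separation/tangent-cone argument on the active constraints, and you get it by LP duality on an auxiliary dominance program. Then $t\mapsto te^{td_k/\tau}$ is inverted so that $\nabla u$ is aligned with $\lambda$, and first-order optimality of the concave program finishes the argument.

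Your intermediate-value choice of $c^*$ with $\sum_k\omega_k=1$ is worth keeping. The paper's appendix only produces $\omega\in\mathbb R^m_{++}$, for an arbitrary $c>0$. STCH is not invariant under rescaling $\omega$, since rescaling $\omega$ by $s$ is equivalent to replacing $\tau$ by $\tau/s$. So your normalization step is what actually places $\omega$ in $\Omega^m_{++}$, as the main-text statement requires.
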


As a consequence, $\{\mathbf J^\omega:\omega\in\Omega^m_{++}\}$ coincides with the Pareto front.

Since the objective $\mathbf J^\omega$ is uniquely determined for each $\omega$, we can consider the mapping from $\omega\in\Omega^m_{++}$ to the objective space:
\begin{theorem}
\label{thm:omega-lipsitz}
For $\delta\in(0,1)$, the mapping $\phi:\Omega^m_\delta\to\mathcal J$ defined by $\phi(\omega):=\mathbf J^\omega$ is Lipschitz continuous on $\Omega^m_\delta$.
\end{theorem}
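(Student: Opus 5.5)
The plan is to derive Lipschitz continuity from two ingredients. The first is a \emph{uniform quadratic growth} bound of $u(\cdot,\omega)$ around its unique maximizer $\mathbf J^\omega$ over the polytope $\mathcal J$. The second is a \emph{Lipschitz-in-$\mathbf f$ bound on the $\omega$-perturbation} of $u$. Concretely, I aim to show that there exists $c>0$, depending only on $\delta$, $\tau$, $\mathbf I$ and $\mathcal J$, such that
\[
u(\mathbf J^\omega,\omega)-u(\mathbf f,\omega)\ \ge\ c\,\|\mathbf f-\mathbf J^\omega\|^2\qquad\forall\,\mathbf f\in\mathcal J,\ \forall\,\omega\in\Omega^m_\delta .
\]
Given this, fix $\omega,\omega'\in\Omega^m_\delta$ and write $\mathbf x=\mathbf J^\omega$ and $\mathbf x'=\mathbf J^{\omega'}$. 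Adding the two growth inequalities, one centered at $\mathbf x$ for $\omega$ and one centered at $\mathbf x'$ for $\omega'$, gives
\[
2c\|\mathbf x-\mathbf x'\|^2\le h(\mathbf x)-h(\mathbf x'),\qquad h(\mathbf f):=u(\mathbf f,\omega)-u(\mathbf f,\omega').
\]
Now $\nabla_{\mathbf f}h=\nabla_{\mathbf f}u(\mathbf f,\omega)-\nabla_{\mathbf f}u(\mathbf f,\omega')$. Since $u$ is $C^2$ jointly and $\mathcal J\times\Omega^m_\delta$ is compact, the mixed derivative $\nabla_\omega\nabla_{\mathbf f}u$ is bounded there, by some constant $C$. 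Hence $|h(\mathbf x)-h(\mathbf x')|\le C\|\omega-\omega'\|\,\|\mathbf x-\mathbf x'\|$, and dividing yields $\|\mathbf x-\mathbf x'\|\le \tfrac{C}{2c}\|\omega-\omega'\|$.

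The main obstacle is the uniform growth bound, because $u(\cdot,\omega)$ is not strictly concave. To handle it, I change variables to $z_k=\omega_k(I_k-f_k)/\tau$, so that $u=-\tau\,\mathrm{LSE}(z)$. The Hessian of $\mathrm{LSE}$ is $\mathrm{diag}(p)-pp^\top$ with $p=\mathrm{softmax}(z)$. On the bounded set of $z$ arising from $\mathcal J\times\Omega^m_\delta$, each $p_k$ is bounded below, so this Hessian is uniformly positive definite on $\mathbf 1^\perp$. Its only null direction is $\mathbf 1$, which in $\mathbf f$-space is $\mathbf d_\omega=(1/\omega_1,\dots,1/\omega_m)$. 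These directions are uniformly nondegenerate because $\omega_k\ge\delta$.

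Along $\mathbf d_\omega$, curvature is absent, but first-order information takes over. The gradient satisfies $\nabla_{\mathbf f}u\in\mathbb R^m_{++}$ by \eqref{prop:grad_plus}, and moreover $\nabla_{\mathbf f}u\cdot\mathbf d_\omega=\sum_k p_k=1$ exactly. At the optimum, the first-order condition gives $\nabla u(\mathbf J^\omega)\cdot(\mathbf f-\mathbf J^\omega)\le 0$ for all $\mathbf f\in\mathcal J$.

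I then decompose $\mathbf f-\mathbf J^\omega=\alpha\,\mathbf d_\omega+\mathbf v$ with $\mathbf v$ in the complementary subspace, and use a second-order Taylor expansion with the uniform Hessian bound on $\mathbf v$. This gives
\[
u(\mathbf J^\omega)-u(\mathbf f)\ \ge\ -\nabla u\cdot(\mathbf f-\mathbf J^\omega)+c_1\|\mathbf v\|^2 .
\]
The remaining task is to control $|\alpha|$ by the first-order gap plus $\|\mathbf v\|$. Here I would use polyhedrality, via a Hoffman-type error bound, on the finitely many faces of $\mathcal J$. Take a direction $\mathbf e=\mathbf f-\mathbf J^\omega$ for which both the linear gap $-\nabla u\cdot\mathbf e$ and $\|\mathbf v\|$ are small relative to $\|\mathbf e\|$. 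Such an $\mathbf e$ would be nearly parallel to $\mathbf d_\omega$, and hence nearly equal to $\alpha\,\mathbf d_\omega$. But then $\nabla u\cdot\mathbf e\approx\alpha$, so a small linear gap forces $\alpha\gtrsim 0$. Since $\mathbf e$ is nearly parallel to $\mathbf d_\omega$, whose entries are all positive, a feasible direction nearly equal to $\alpha\,\mathbf d_\omega$ with $\alpha>0$ would point from $\mathbf J^\omega$ into the strictly-better orthant. That contradicts the Pareto optimality of $\mathbf J^\omega$ from Proposition~\ref{lem:stch_unique}.

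I would make this quantitative by a compactness-and-contradiction argument. Suppose sequences $\omega_n\in\Omega^m_\delta$ and $\mathbf f_n\in\mathcal J$ violate the bound with constants $c_n\to0$. Pass to subsequences along which $\omega_n$ converges and the minimal faces containing $\mathbf J^{\omega_n}$ are constant; this is possible since $\mathcal J$ has finitely many faces and hence finitely many tangent cones. Normalized directions $(\mathbf f_n-\mathbf J^{\omega_n})/\|\mathbf f_n-\mathbf J^{\omega_n}\|$ then converge to a feasible direction in a fixed polyhedral tangent cone. In the limit this direction must be a positive multiple of $\mathbf d_{\bar\omega}$ while remaining feasible, contradicting Pareto optimality at the limit point. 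Continuity of $\omega\mapsto\mathbf J^\omega$, needed to pass to the limit, follows beforehand from uniqueness and Berge's maximum theorem. The largest distances are bounded by $\operatorname{diam}\mathcal J$, so local quadratic growth upgrades to global growth with a possibly smaller $c$. I expect the careful handling of this contradiction argument, together with the interplay between linear and quadratic growth regimes, to be where most of the work lies.
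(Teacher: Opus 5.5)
Your proof is correct, but you get the key uniform-curvature estimate by a different and much heavier route than the paper.

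\textbf{The paper's route.} The paper never works with the degenerate function $u$ directly. It minimizes the monotone transform
$\Phi(\mathbf c,\omega)=\exp(-u(\mathbf c,\omega)/\tau)=\sum_i\exp(\omega_i(I_i-c_i)/\tau)$, which has the same optimizer. This function is separable, so its Hessian is diagonal, with entries bounded below by a constant $\mu>0$ depending only on $\delta$, $\tau$ and the gap between $\mathbf I$ and $\mathcal J$, uniformly over $\omega\in\Omega^m_\delta$. The null direction $\mathbf d_\omega$ that you treat as the main obstacle therefore never appears. The paper then subtracts the two normal-cone optimality conditions, uses monotonicity of $N_{\mathcal C}$ together with a bound $U$ on $\nabla_\omega\nabla_{\mathbf c}\Phi$, and obtains the explicit constant $L=U/\mu$. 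This works for any compact convex $\mathcal C$, not only polytopes.

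\textbf{Comparison.} Your growth-summing step is the function-value analogue of that strong-monotonicity argument. The real difference is how uniform curvature is obtained. Your Hessian splitting plus the face-selection and compactness argument uses polyhedrality and gives a non-explicit constant. What your route does produce is quadratic growth of $u$ itself, which would upgrade CMDPI's $O(1/k)$ objective gap to $\|\mathbf J(\pi_k)-\mathbf J^\omega\|=O(k^{-1/2})$. However, that also follows from the paper's route, since
$u(\mathbf J^\omega,\omega)-u(\mathbf f,\omega)=\tau\log\bigl(\Phi(\mathbf f,\omega)/\Phi(\mathbf J^\omega,\omega)\bigr)\ge\tau\bigl(\Phi(\mathbf f,\omega)-\Phi(\mathbf J^\omega,\omega)\bigr)/\max_{\mathcal J}\Phi(\cdot,\omega)$.
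Here $\max_{\mathcal J}\Phi(\cdot,\omega)$ is uniformly bounded over $\Omega^m_\delta$.

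\textbf{Your main obstacle is easier than you expect.} Let $g:=-\nabla_{\mathbf f}u(\mathbf J^\omega,\omega)\cdot\mathbf e\ge 0$. The identity $\nabla_{\mathbf f}u\cdot\mathbf d_\omega=1$ gives $\alpha=-g-\nabla_{\mathbf f}u\cdot\mathbf v$. Since $\|\nabla_{\mathbf f}u\|\le 1$, this yields $|\alpha|\le g+\|\mathbf v\|$ for either sign of $\alpha$. Combined with $g\le\mathrm{diam}\,\mathcal J$ and $\|\mathbf d_\omega\|\le\sqrt m/\delta$, you get $\|\mathbf e\|^2\le K\,(g+c_1\|\mathbf v\|^2)$ directly, with explicit $K$. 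No Hoffman bound, Berge continuity or separate Pareto argument is needed, because first-order optimality $g\ge 0$ already carries that information. In your contradiction sequence, $\hat g_n\to 0$ and $\hat{\mathbf v}_n\to 0$ force the limiting coefficient to be $0$, not positive, which contradicts unit norm immediately.

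\textbf{If you keep the face argument.} Draw the Pareto contradiction at $\mathbf J^{\omega_n}\in\mathrm{ri}\,F$, whose tangent cone is $T_F$, not at $\mathbf J^{\bar\omega}$. The tangent cone at the limit point can be strictly smaller than $T_F$, so the limit direction need not be feasible there.
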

Proofs of Propositions~\ref{lem:stch_unique} and ~\ref{lem:pareto2stch}  and Theorem~\ref{thm:omega-lipsitz} are deferred to Appendix~\ref{app:stch_unique} and~\ref{app:omega-lipsitz}.

Since $\Omega^m_\delta\rightarrow \Omega^m_{++}$ as $\delta\downarrow 0$, Propositions~\ref{lem:stch_unique}--\ref{lem:pareto2stch} imply the following ``dense coverage'' interpretation:
for any Pareto-optimal objective vector $\mathbf J^\star$ and any $\varepsilon>0$, there exists $\omega\in\Omega^m_{++}$ such that
$\|\mathbf J^\omega-\mathbf J^\star\|\le\varepsilon$.
We use the term \emph{dense coverage} in this sense.
Importantly, Proposition~\ref{lem:stch_unique} allows us to describe convergence toward the unique target $\mathbf J^\omega$ even when the corresponding optimal policy is not unique.

\subsection{Entropy geometry and KL/Bregman identity}
For two policies $\pi,\pi_0$ with $\pi_0(a\mid s)>0$ for all $(s,a)$, define the occupancy-weighted conditional KL by
\begin{align}
\Gamma_\gamma(\pi,\pi_0)
:= \sum_{s,a}\mu_\pi(s,a)\log\frac{\pi(a\mid s)}{\pi_0(a\mid s)} 
= \sum_s \rho_\pi(s)\,\mathrm{KL}\!\left(\pi(\cdot\mid s)\ \middle\|\ \pi_0(\cdot\mid s)\right).
\label{eq:Gamma_def}
\end{align}

For an occupancy measure $\mu>0$ (componentwise), define the negative conditional entropy $\psi_\gamma$
and the induced Bregman divergence $D_{\psi_\gamma}$:
\[
\psi_\gamma(\mu):=\sum_{s,a}\mu(s,a)\log\frac{\mu(s,a)}{\rho_\mu(s)},
\quad
D_{\psi_\gamma}(\mu\mid\mu_0):=
\psi_\gamma(\mu)-\psi_\gamma(\mu_0)-\langle\nabla\psi_\gamma(\mu_0),\mu-\mu_0\rangle.
\]
The following lemma shows that $D_{\psi_\gamma}$ corresponds to the policy-space regularizer $\Gamma_\gamma$.

\begin{lemma}[Correspondence of $\Gamma_\gamma$ and $D_{\psi_\gamma}$ (Proof in Appendix~\ref{app:mu-breg_pi-kl})]
\label{lem:mu-breg_pi-kl}
Assume $p_0(s)>0$ for all $s$.
Let $\mu,\mu'\in\mathcal M$ correspond to policies $\pi\leftrightarrow\mu$ and $\pi'\leftrightarrow\mu'$. Then
\begin{equation}
\Gamma_\gamma(\pi,\pi')=D_{\psi_\gamma}(\mu\mid\mu')
\label{eq:mu-breg_pi-kl}
\end{equation}
\end{lemma}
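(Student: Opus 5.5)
The plan is to compute the Bregman divergence $D_{\psi_\gamma}(\mu\mid\mu')$ explicitly from the definition of $\psi_\gamma$ and compare the result term by term with $\Gamma_\gamma(\pi,\pi')$. Since $p_0(s)>0$ for every $s$, every state has $\rho_{\mu'}(s)\ge(1-\gamma)p_0(s)>0$. The conditional policy is therefore recovered as $\pi'(a\mid s)=\mu'(s,a)/\rho_{\mu'}(s)$, and similarly for $\pi$. Hence
\[
\psi_\gamma(\mu)=\sum_{s,a}\mu(s,a)\log\pi(a\mid s).
\]
If $\pi'$ has zero entries, $\mu'$ is not componentwise positive and $\nabla\psi_\gamma(\mu')$ is undefined. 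So, as in the definition of $\Gamma_\gamma$, I will assume $\pi'(a\mid s)>0$. I will adopt the convention $0\log 0=0$ for $\mu$.

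The first step is the gradient. Using $\partial\rho_\mu(s)/\partial\mu(s,a)=1$, we get
\[
\frac{\partial\psi_\gamma}{\partial\mu(s,a)}=\log\frac{\mu(s,a)}{\rho_\mu(s)}+1-\sum_{a'}\frac{\mu(s,a')}{\rho_\mu(s)}=\log\frac{\mu(s,a)}{\rho_\mu(s)}.
\]
The $+1$ and the derivative of the $\log\rho$ term cancel because the conditional probabilities sum to one. It follows that $\langle\nabla\psi_\gamma(\mu'),\mu'\rangle=\psi_\gamma(\mu')$. This reflects the fact that $\psi_\gamma$ is positively homogeneous of degree one, so Euler's identity applies.

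Substituting into the Bregman formula, the two $\psi_\gamma(\mu')$ terms cancel, leaving
\[
D_{\psi_\gamma}(\mu\mid\mu')=\sum_{s,a}\mu(s,a)\log\pi(a\mid s)-\sum_{s,a}\mu(s,a)\log\pi'(a\mid s)=\sum_{s,a}\mu(s,a)\log\frac{\pi(a\mid s)}{\pi'(a\mid s)}.
\]
This is exactly the first expression in \eqref{eq:Gamma_def}, because $\mu=\mu_\pi$. The second form, $\sum_s\rho_\pi(s)\mathrm{KL}(\pi(\cdot\mid s)\,\|\,\pi'(\cdot\mid s))$, follows by writing $\mu_\pi(s,a)=\rho_\pi(s)\pi(a\mid s)$.

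The main point requiring care is well-definedness rather than computation. It has two parts. First, the map $\pi\leftrightarrow\mu$ must be a bijection, so that $\mu(s,a)/\rho_\mu(s)$ really equals $\pi(a\mid s)$ on every state. This is where positivity of $p_0$ enters, through $\rho_\pi\ge(1-\gamma)p_0>0$. Second, boundary cases where $\mu$ has zeros must be handled. There I would either use the $0\log0$ convention, or note that $\psi_\gamma$ extends continuously to the boundary and that the identity persists by continuity.
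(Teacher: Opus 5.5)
Your proposal is correct and follows essentially the same route as the paper's proof: compute $\nabla\psi_\gamma(\mu')(s,a)=\log\pi'(a\mid s)$ for componentwise-positive $\mu'$, substitute into the Bregman formula, and let the $\psi_\gamma(\mu')$ terms cancel (your Euler-identity remark is a compact way of writing that cancellation), leaving $\sum_{s,a}\mu(s,a)\log(\pi(a\mid s)/\pi'(a\mid s))$. Your explicit full-support assumption on $\pi'$ matches the paper's restriction $\mu_0\in\mathrm{ri}(\mathcal M)$, and your $0\log 0$ convention for $\mu$ makes precise a boundary case that the paper leaves implicit.
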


\subsection{Relative smoothness and mirror descent}
\label{ss:rel-smooth}
The following notion is central to convergence analyses of mirror descent for convex optimization.
\begin{definition}[Relative smoothness]
Given a convex function $f$ and a distance-generating function $\psi$, we say that $f$ is \emph{$L$-relatively smooth with respect to $\psi$} if there exists $L>0$ such that, for all $x,y$,
\begin{equation}
f(y)\le f(x)+\langle\nabla f(x),y-x\rangle + L D_\psi(y\mid x).
\label{def:rel_smooth}
\end{equation}
\end{definition}

If $f$ is $L$-relatively smooth, mirror descent~\citep{lu2017relsmooth} with the update
\begin{equation}
x_{k+1} \in \arg\min_{x}\left\{\langle\nabla f(x_k),x\rangle + L D_\psi(x\mid x_k)\right\}
\label{eq:md_update}
\end{equation}
achieves an $O(1/k)$ rate in objective suboptimality:
\begin{equation}
f(x_k) - f^\ast \le \frac{L}{k}\, D_\psi(x^\ast\mid x_0),
\label{eq:md_gap_bound}
\end{equation}
where $f^\ast:=\min_x f(x)$ and $x^\ast\in\arg\min_x f(x)$ (not necessarily unique).
In our applications, we will instantiate $x$ with an occupancy measure and $\psi$ with $\psi_\gamma$.

\section{Mirror-Descent Policy Iteration under Fixed Preferences}
\subsection{Occupancy-Measure Formulation}

The problem in~\eqref{eq:stch_mo_dp} can be written over $\mu\in\mathcal{M}$ using~\eqref{eq:J-linear-mu} as
\[
\bar{\mathbf J}(\mu):=\tfrac{1}{1-\gamma}\sum_{s,a}\mu(s,a)\mathbf r(s,a),\quad
f(\mu):=-u(\bar{\mathbf J}(\mu),\omega).
\]
yielding the convex program
\begin{equation}
\min_{\mu\in\mathcal M} f(\mu).
\label{eq:convom-mu}
\end{equation}
Convexity follows since $u$ is concave and $\bar{\mathbf J}$ is linear. By Proposition~\ref{lem:stch_unique}, the optimal objective $\mathbf J^\omega$ is unique; hence we analyze value/objective convergence, i.e., $f(\mu_k)\to f^\star$ and $\mathbf J(\pi_k)\to \mathbf J^\omega$.

\medskip
The core idea of this work is to solve~\eqref{eq:convom-mu} by mirror descent over the discounted occupancy-measure polytope $\mathcal M$. Specifically, at iteration $k$ we perform
\begin{equation}
    \label{eq:md-mu}
    \mu_{k+1} =
    \arg\min_{\mu\in\mathcal M}
    \left\{\langle \nabla f(\mu_k),\mu\rangle + L\, D_{\psi_\gamma}(\mu\mid\mu_k)\right\}.
\end{equation}

\begin{lemma}[Occupancy-measure relative smoothness (Proof in Appendix~\ref{app:om-rel_smooth})]
    \label{lem:om-rel_smooth}
    The function $f$ is relatively smooth with respect to $\psi_\gamma$: there exists $L>0$ such that~\eqref{def:rel_smooth} holds on $\mathcal M$.
\end{lemma}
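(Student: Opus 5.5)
The plan is to split relative smoothness into two pieces: an ordinary smoothness bound for $f$ in the Euclidean (or $\ell_1$) geometry, and a strong-convexity-type lower bound on $D_{\psi_\gamma}$ over $\mathcal M$. Combining the two gives~\eqref{def:rel_smooth}.

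\textbf{Step 1: smoothness of $f$.} Write $f(\mu)=-u(A\mu,\omega)$, where $A:\mathbb R^{|\mathcal S||\mathcal A|}\to\mathbb R^m$ is the linear map $\mu\mapsto\frac{1}{1-\gamma}\sum_{s,a}\mu(s,a)\mathbf r(s,a)$. Up to sign, $-u$ is a log-sum-exp of affine functions of $\mathbf f$ with coefficients $\omega_k/\tau$. Its Hessian is $\frac{1}{\tau}\mathrm{diag}(\omega)(\mathrm{diag}(p)-pp^\top)\mathrm{diag}(\omega)$, with $p$ the softmax weights, and this is bounded in operator norm by $\max_k\omega_k^2/\tau\le 1/\tau$. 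Therefore
\[
f(\nu)\le f(\mu)+\langle\nabla f(\mu),\nu-\mu\rangle+\tfrac{1}{2\tau}\|A(\nu-\mu)\|_2^2 .
\]
Each coordinate of $A(\nu-\mu)$ has absolute value at most $\frac{R}{1-\gamma}\|\nu-\mu\|_1$ with $R:=\max_{s,a,l}|r_l(s,a)|$. This gives $f(\nu)\le f(\mu)+\langle\nabla f(\mu),\nu-\mu\rangle+\frac{L_1}{2}\|\nu-\mu\|_1^2$ with $L_1:=mR^2/(\tau(1-\gamma)^2)$.

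\textbf{Step 2: lower bound $D_{\psi_\gamma}$ by $\|\nu-\mu\|_1^2$ on $\mathcal M$.} Apply Lemma~\ref{lem:mu-breg_pi-kl}, which gives $D_{\psi_\gamma}(\nu\mid\mu)=\Gamma_\gamma(\pi_\nu,\pi_\mu)=\sum_s\rho_\nu(s)\,\mathrm{KL}(\pi_\nu(\cdot|s)\|\pi_\mu(\cdot|s))$. We then need $\|\nu-\mu\|_1$ in terms of policy differences. The performance-difference / simulation-lemma identity for occupancies gives
\[
\|\nu-\mu\|_1\le\frac{1}{1-\gamma}\sum_s\rho_\nu(s)\,\|\pi_\nu(\cdot|s)-\pi_\mu(\cdot|s)\|_1 .
\]
To derive it, write $\mu^\top=(1-\gamma)p_0^\top(I-\gamma P_\pi)^{-1}$ in state-action form and telescope the difference of the two resolvents, which costs a factor $1/(1-\gamma)$. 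Next apply Pinsker state-wise, then Jensen/Cauchy--Schwarz with the weights $\rho_\nu$, which sum to 1:
\[
\Bigl(\sum_s\rho_\nu(s)\|\pi_\nu-\pi_\mu\|_1\Bigr)^2\le\sum_s\rho_\nu(s)\|\pi_\nu-\pi_\mu\|_1^2\le 2\sum_s\rho_\nu(s)\mathrm{KL}(\pi_\nu\|\pi_\mu).
\]
This yields $\|\nu-\mu\|_1^2\le\frac{2}{(1-\gamma)^2}D_{\psi_\gamma}(\nu\mid\mu)$.

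\textbf{Step 3: combine.} Steps 1 and 2 together give~\eqref{def:rel_smooth} on $\mathcal M$ with $L=L_1/(1-\gamma)^2=mR^2/(\tau(1-\gamma)^4)$, a deliberately loose constant. We restrict attention to $\mu$ with full-support policies so that $\psi_\gamma$ is differentiable; this is automatic along mirror-descent iterates from an interior start, and otherwise we extend by continuity, treating $D=+\infty$ when the support fails.

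The main obstacle is Step 2. The occupancy-difference bound must be weighted by $\rho_\nu$ and not by $\rho_\mu$, to match the weighting in $\Gamma_\gamma(\pi_\nu,\pi_\mu)$. I will try the resolvent identity $\nu-\mu=\gamma(\ldots)(P_{\pi_\nu}-P_{\pi_\mu})$ arranged so that the first factor is the occupancy of $\nu$. If that ordering fails, I will use the symmetric version with $\rho_\mu$ and convert it via $\rho_\nu\ge(1-\gamma)p_0$, using $p_0>0$ at the cost of a factor $1/\min_s p_0(s)$ in $L$. Positivity of $p_0$ is assumed anyway.
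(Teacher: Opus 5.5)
Your proposal is correct and follows essentially the same route as the paper: an $\ell_1$ descent lemma for $f$, combined with $\|\nu-\mu\|_1^2\le\frac{2}{(1-\gamma)^2}D_{\psi_\gamma}(\nu\mid\mu)$ (obtained from the $\rho_\nu$-weighted occupancy-difference bound, Jensen, and Pinsker), yields $L=L_1/(1-\gamma)^2$. The differences are cosmetic: you bound the Hessian in operator norm and pass from $\ell_2$ to $\ell_1$, getting $mR^2/\tau$ where the paper uses the entrywise bound $\omega_{\max}^2R_{1,\max}^2/(4\tau)$; and the $\rho_\nu$-first resolvent ordering you plan to try does go through with $(I-\gamma P_{\pi_\mu})^{-1}$ on the right, so the $1/\min_s p_0(s)$ fallback is not needed.
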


By Lemma~\ref{lem:om-rel_smooth} and the standard mirror-descent convergence result for relatively smooth objectives (Section~\ref{ss:rel-smooth}), we obtain~\eqref{eq:md_gap_bound} for any $k\ge 1$, where $x_k$ is occupation measure $\mu_k$ at iteration $k$ and $\mu^\star$ denotes occupancy measure of a Pareto optimal policy for the fixed $\omega$.

\subsection{From occupancy update to a KL-regularized policy update}
\label{ss:om2pi-kl}

The optimization~\eqref{eq:md-mu} is convex in $\mu$ and does not directly yield an implementable policy update. However, Lemma~\ref{lem:mu-breg_pi-kl} shows that
$D_{\psi_\gamma}(\mu\mid\mu_k)$ can be identified with $\Gamma_\gamma(\pi,\pi_k)$, which allows us to pull back~\eqref{eq:md-mu} to the policy space as a KL-regularized MDP~\citep{geist2019theory} with reference policy $\pi_k$.

Let $\mu_k \leftrightarrow \pi_k$. By the chain rule,
\[
\nabla_\mu f(\mu_k)(s,a)
= -\tfrac{1}{1-\gamma}\,
\langle \mathbf g_k,\mathbf r(s,a)\rangle,\quad
\mathbf g_k:=\nabla_{\mathbf J}u(\mathbf J(\pi_k),\omega)\in\mathbb R^m_{++}.
\]
Define $r_k(s,a):=\langle \mathbf g_k,\mathbf r(s,a)\rangle$. Then~\eqref{eq:md-mu} yields
\begin{equation}
\label{eq:reg-mdp}
\pi_{k+1}\in\arg\max_{\pi\in\Pi}\;
\mathbb E_\pi
\left[\sum_{t\ge0}\gamma^t r_k(s_t,a_t)\right]
- L\,\Gamma_\gamma(\pi,\pi_k).
\end{equation}

\subsection{Solving the regularized MDP: soft Bellman evaluation + multiplicative policy improvement}
\label{sss:soft_bellman_pi}
Our method can be implemented as a policy-iteration scheme that alternates between:
(i) dynamic multi-objective $\to$ single-objective weighting (via $\mathbf g_k$), and
(ii) KL-regularized greedy improvement with respect to the resulting scalar reward, as is discussed in the following.

Equation~\eqref{eq:reg-mdp} is a special case of regularized MDPs with a closed-form optimal policy~\citep{geist2019theory}. 
We set $\alpha:=L(1-\gamma)$.

\paragraph{Policy evaluation (soft Bellman fixed point)}
Given $\pi_k$ and $\alpha>0$, define
\begin{equation}
    \label{def:chi}
    \chi(s';Q,\pi_k)
    :=
    \alpha\log\sum_{b\in\mathcal A}\pi_k(b|s')\exp\!\left(\frac{Q(s',b)}{\alpha}\right).
\end{equation}
Then the $Q$-function corresponding to~\eqref{eq:reg-mdp} satisfies the soft Bellman fixed point
\begin{equation}
    \label{eq:soft-bellman}
    Q(s,a)
    =
    r_k(s,a)
    +\gamma\mathbb E_{s'\sim P(\cdot|s,a)}\!\left[\chi(s';Q,\pi_k)\right].
\end{equation}
We denote its solution by $Q_{k+1}$. This operator is a $\gamma$-contraction and thus admits a unique fixed point; see~\citet{geist2019theory}.

\paragraph{Policy improvement (KL-prox / multiplicative weights)}
Given $Q_{k+1}$, the optimizer of~\eqref{eq:reg-mdp} is obtained in closed form, state-wise:
\footnotesize
\begin{equation}
    \label{eq:pi-update}
    \pi_{k+1}(a|s)
    \propto
    \pi_k(a|s)\exp\!\left(\frac{Q_{k+1}(s,a)}{\alpha}\right).
\end{equation}
\normalsize
This is a KL-regularized greedy update with base measure $\pi_k$, and can be viewed as the policy-space representation of the mirror-descent step~\eqref{eq:md-mu}.

\medskip
Putting everything together, our proposed method is summarized as Algorithm~\ref{alg:cmdpi} in Appendix~\ref{app:exp-toy}.
Since it is derived from mirror descent applied to (the negative of) a concave utility, we refer to it as \emph{Concave Mirror Descent Policy Iteration (CMDPI)}.

\paragraph{Connection to CAPQL}
Our method is related to CAPQL~\citep{lu2023multiobjective} in the following sense:
(i) when the scalarization is linear, e.g.,
$f(\mu) = \frac{1}{1-\gamma}\left\langle \mu, \omega^\top \mathbf r(s,a)\right\rangle$,
and (ii) when the reference policy is fixed to the uniform distribution, i.e., $\pi_k = U(\Delta_{\mathcal A})$.
In this case, $\alpha$ corresponds to the coefficient of the entropy term: larger $\alpha$ imposes stronger regularization and may deviate further from optimizing the original (unregularized) objective.

\paragraph{Finite-step continuity}
Appendix~\ref{app:continuity-omega-pi} shows that exact tabular CMDPI preserves
preference-continuity of the policy for every finite iteration, assuming a
continuous full-support initialization. While the result is tabular, this
regularity motivates our preference-conditioned deep implementation.

\subsection{Convergence guarantee of the objective value}
By Lemma~\ref{lem:om-rel_smooth}, the results in Section~\ref{ss:rel-smooth}, and the definition $f(\mu)=-u(\bar{\mathbf J}(\mu),\omega)$, we obtain
\begin{equation}
    u(\mathbf J(\pi_k),\omega)\rightarrow u(\mathbf J^\omega,\omega)
\quad (k\to\infty).
\end{equation}
By compactness of $\mathcal J$ and uniqueness of the optimal objective vector,
this objective-value convergence further implies
\begin{equation}
    \mathbf J(\pi_k)\rightarrow \mathbf J^\omega
    \quad (k\to\infty).
\end{equation}

\subsection{Deep Actor-Critic Implementation Sketch}

Based on the equivalence established in Section~\ref{ss:om2pi-kl}, we propose a deep actor--critic algorithm for multi-objective RL (Algorithm~\ref{alg:pcsac}). Both the critic and actor can be naturally extended to function approximation by leveraging~\eqref{eq:soft-bellman} and~\eqref{eq:pi-update}. In particular, we adopt an off-policy actor--critic optimization style akin to Off-PAC~\citep{degris2012offpac} and, more specifically, Soft Actor--Critic (SAC)~\citep{haarnoja2018sac}.

Our goal is to build a \emph{single} function approximator that continuously represents solutions across preference vectors $\omega\in\Omega^m_\delta$.
Concretely, the critic is a neural network $\mathbf Q_\phi$ that takes $(s,a,\omega)$ as input and outputs an $m$-dimensional vector, while the actor is a neural network $\pi_\theta$ that takes $(s,\omega)$ as input and outputs a distribution over $\mathcal A$. Here $\phi$ and $\theta$ are the critic and actor parameters, respectively.

\paragraph{Critic learning}
The critic minimizes the squared Bellman residual:
\begin{equation}
    L_c(\phi)
    =
    \mathbb{E}_{s,a\sim\mathcal D,\ \omega\sim U(\cdot)}
    \left[\sum_{k=1}^m \big(y^{(k)}-Q_\phi^{(k)}(s,a,\omega)\big)^2\right],
\end{equation}
where $\mathcal D$ is a replay buffer and $U(\cdot)$ denotes the uniform distribution over $\Omega^m_\delta$.
The target component $y^{(k)}$ is the $k$-th entry of the target vector
\begin{equation}
    \mathbf y
    :=
    \mathbf r
    +\gamma \mathbf Q_{\bar\phi}(s',a',\omega)
    -\alpha \log \pi_\theta(a'|s',\omega)\,\mathbf 1,
\end{equation}
constructed from a replay sample $(s,a,\mathbf r,s')$, an action $a'\sim\pi_\theta(\cdot|s',\omega)$, and a target critic $\mathbf Q_{\bar\phi}$, which are commonly used in Deep RL field~\citep{haarnoja2018sac,Mnih2015-bw,fujimoto2018addressing}.

We justify this target as follows. As in SAC~\citep{haarnoja2018sac,geist2019theory}, assume a uniform reference policy, i.e., $\pi_k=U(\Delta_{\mathcal A})$.
Further suppose that, for any ${\bar \omega} \in \Omega^m_\delta$, $\pi_\theta$ closely approximates the policy induced by~\eqref{eq:pi-update}:
\[
    \pi^{\bar\phi}(\cdot|s,\omega)
    := {1\over Z(s)}\exp\!\left({\bar \omega}^\top\,\mathbf{Q}_{\bar\phi}(s,\cdot,\omega)/\alpha\right),
\]
where $Z(s)$ is a normalization constant.
Since $\omega^\top \mathbf 1=1$ and $\pi^{\bar\phi}(\cdot|s,\omega)$ is the maximizer for $\max_\pi \mathbb{E}_{a\sim\pi(\cdot|s)} [ q(s,\cdot) /\alpha ] - \Gamma_\gamma(\pi(\cdot|s),U)$~\citep{geist2019theory} with $q(s,\cdot) = {\bar \omega}^\top\,\mathbf{Q}_{\bar\phi}(s,\cdot,\omega)$, by analogous arguments to Section~\ref{sss:soft_bellman_pi}, we have
\[
    {\bar \omega}^\top \mathbf y
    \approx
    {\bar \omega}^\top \mathbf r
    +\gamma\,\chi\!\left(s';{\bar \omega}^\top \mathbf Q_{\bar\phi},U\right),
\]
where $\chi$ is defined in~\eqref{def:chi}. Hence $\mathbf y$ can be interpreted as an off-policy target value for the projected (scalarized) return along each ${\bar \omega}$.
With $\tilde{\mathbf{V}}_{{\bar \phi}} := \mathbf{Q}_{{\bar \phi}}(s, \tilde{a}, \omega) - \alpha \log \pi_\theta(\tilde{a}|s)\mathbf{1}$ and ${\bar \omega} = \mathbf{g}(\omega) := \nabla_{\mathbf{f}} u(\tilde{\mathbf{V}}_{{\bar \phi}}, \omega)$, we expect that $\mathbf{g}(\omega)^\top \mathbf y$ is the target for the policy evaluation~\eqref{eq:soft-bellman}.

\paragraph{Actor learning}
Equation~\eqref{eq:pi-update} is equivalent to minimizing the KL divergence to a target distribution proportional to $\pi_k\exp(Q/\alpha)$. Thus, as in SAC, for a parametric policy $\pi_\theta$ we can write
\begin{align}
    \arg\min_\theta\ 
    \mathbb E_{s\sim\mathcal D}\!\left[\mathrm{KL}\big(\pi_\theta(\cdot|s,\omega)\ \big|\ \pi^{\phi}(\cdot|s,\omega)\big)\right]
    =
    \arg\max_{\theta}\
    \mathbb{E}_{s\sim\mathcal D}
    \left[ {\bar \omega}^\top \tilde{\mathbf{V}}_{\phi}(s) \right],
\end{align}
for any ${\bar \omega}$.
Accordingly, we define the actor loss
\begin{equation}
    L_\pi(\theta)
    :=
    -\mathbb{E}_{s\sim\mathcal D}
    \left[\mathbf{g}(\omega)\!^\top \tilde{\mathbf{V}}_{\phi}(s)\right].
\end{equation}
We summarize the resulting policy optimization algorithm for deep MORL in Algorithm~\ref{alg:pcsac} (Appendix~\ref{app:impl}), 
referred to as \emph{PCSAC}.

\section{Numerical Experiments}

We empirically evaluate the proposed methods in three stages.
First, in Section~\ref{ss:exp-toy}, we study a tabular MOMDP to examine whether the proposed CMDPI update exhibits the intended planning behavior in a controlled setting.
Second, in Section~\ref{ss:exp-gym}, we evaluate discrete-action MO-Gymnasium benchmarks to test whether the proposed deep methods, PCSAC and CMDPI, scale beyond the tabular setting and improve Pareto-front approximation quality.
Finally, we evaluate continuous-control MO-Gymnasium tasks to investigate whether partially incorporating the proposed STCH-based method (PCSAC) remains effective in continuous-action domains.\footnote{
The full CMDPI update requires a KL-proximal policy optimization step derived from the mirror-descent formulation.
In continuous-action settings with expressive actor parameterizations, implementing this update stably within an off-policy actor--critic framework becomes substantially more delicate.
We therefore evaluate a partially incorporated STCH-based variant (PCSAC) in the continuous-control experiments.
}

\subsection{Tabular MOMDP (planning)}
\label{ss:exp-toy}

We study a two-objective tabular MOMDP ($|\mathcal S|=4$, $|\mathcal A|=2$) to isolate a key limitation of existing PCS methods: linear scalarization recovers only vertex solutions, while regularized methods may produce biased Pareto coverage. 
In a fully known planning setting, we compare linear scalarization, CAPQL~\citep{lu2023multiobjective}, and the proposed CMDPI. 
For $100$ preference vectors $\omega$, we compute the converged objective vectors and visualize them in Figure~\ref{fig:highlights}.

Linear scalarization recovers only vertex solutions, while CAPQL produces non-vertex solutions but exhibits a bias--coverage trade-off. 
In contrast, CMDPI consistently recovers Pareto-optimal solutions with more uniform (denser) coverage of the front. 
Full details and sweeps are provided in Appendix~\ref{app:exp-toy}.

\subsection{MO-Gymnasium}
\label{ss:exp-gym}

\paragraph{Discrete-Action Tasks}

\begin{table*}[t]
\centering
\caption{
Average ranks on eight discrete-action MO-Gymnasium tasks.
Lower is better.
The reported uncertainty is computed by stratified sampling over tasks and seeds.
CMDPI denotes the deep actor--critic implementation of the CMDPI update.
For each aggregate result, we report the average rank followed by $\pm$ the 95\% bootstrap confidence interval over eight random seeds and tasks.
Best and second-best results are highlighted in bold and gray, respectively.
}
\label{tab:discrete-rank}
\small
\begin{tabular}{lcccccc}
\toprule
 & LS & CAPQL & PreCo & C-MORL & PCSAC & CMDPI \\
\midrule
HV & $4.25 \pm 0.36$ & $4.12 \pm 0.33$ & $3.75 \pm 0.54$ & $4.12 \pm 0.69$ & \cellcolor{lightgray}$3.00 \pm 0.72$ & $\mathbf{1.75 \pm 0.33}$ \\
EUM & $3.88 \pm 0.53$ & $3.50 \pm 0.49$ & \cellcolor{lightgray} $3.38 \pm 0.57$ & $3.62 \pm 0.67$ & $3.50 \pm 0.74$ & $\mathbf{3.12 \pm 0.54}$ \\
SP & $4.75 \pm 0.35$ & $3.25 \pm 0.44$ & $4.75 \pm 0.46$ & $\mathbf{2.38 \pm 0.69}$ & \cellcolor{lightgray} $2.62 \pm 0.62$ & $3.25 \pm 0.47$ \\
\bottomrule
\end{tabular}
\end{table*}

We first evaluate eight representative discrete-action MO-Gymnasium benchmarks~\citep{felten_toolkit_2023} to investigate whether the proposed deep methods, PCSAC and CMDPI, scale beyond the tabular setting.

We compare against linear scalarization (LS), CAPQL~\citep{lu2023multiobjective}, PreCo~\citep{yangpreference}, and C-MORL~\citep{liu2025cmorl}.
PCSAC and CMDPI are our proposed STCH-based methods.
Here, CMDPI denotes a deep actor--critic implementation of the CMDPI update, where the critic target incorporates KL regularization toward the previous policy following the mirror-descent derivation.

To reduce implementation confounders, LS, CAPQL, PCSAC, and CMDPI are implemented on the same PreCo codebase.
C-MORL is included as a recent Pareto-front discovery baseline, although it is not a single preference-conditioned policy method.

All methods are trained for $2$M environment interaction steps on each task using eight random seeds.
We evaluate each trained agent by sweeping preference vectors and computing hypervolume (HV), expected utility metric (EUM)~\citep{hayes2022practical}, and sparsity (SP).
Table~\ref{tab:discrete-rank} summarizes the average ranks across the eight discrete-action tasks.
CMDPI achieves the best average rank on HV and EUM, while PCSAC also improves over LS and CAPQL in HV.
These results suggest that the STCH-based methods improves Pareto-front approximation quality, while the additional CMDPI-style regularization further improves HV/EUM performance.

CMDPI does not achieve the best SP rank; however, since SP measures geometric spacing rather than dominance or utility, we treat HV and EUM as the primary metrics.
Full task-wise results are provided in Appendix~\ref{app:exp-detail-discrete}.

\paragraph{Continuous-Action Tasks}

\begin{table*}[th]
\centering
\caption{
Average ranks on eight continuous-action MuJoCo tasks from MO-Gymnasium.
Lower is better.
Model-based methods are evaluated at $500$K real-environment steps, while model-free methods are evaluated at $2$M steps.
For each aggregate result, we report the average rank followed by $\pm$ the 95\% bootstrap confidence interval over eight random seeds and tasks.
Best results are highlighted in bold.
}
\label{tab:continuous-rank}

\resizebox{\textwidth}{!}{%
    \begin{tabular}{l cccc ccc}
    \toprule

    & \multicolumn{4}{c}{Model-Free Methods} & \multicolumn{3}{c}{Model-Based Methods} \\
    \cmidrule(lr){2-5} \cmidrule(lr){6-8}
    
    Metrics & PDMORL & COLA & CAPQL-MF & PCSAC-MF & GPI-PD & CAPQL-MB & PCSAC-MB \\
    \midrule
    HV & $3.25 \pm 0.32$ & $\mathbf{2.00 \pm 0.41}$ & $2.75 \pm 0.38$ & $\mathbf{2.00 \pm 0.33}$ & $3.00 \pm 0.01$ & $1.75 \pm 0.17$ & $\mathbf{1.25 \pm 0.17}$ \\
    EUM & $3.25 \pm 0.33$ & $\mathbf{2.00 \pm 0.40}$ & $2.62 \pm 0.36$ & $2.12 \pm 0.37$ & $3.00 \pm 0.01$ & $1.75 \pm 0.17$ & $\mathbf{1.25 \pm 0.17}$ \\
    SP & $3.62 \pm 0.18$ & $2.12 \pm 0.44$ & $\mathbf{1.75 \pm 0.25}$ & $2.50 \pm 0.32$ & $2.25 \pm 0.28$ & $2.00 \pm 0.28$ & $\mathbf{1.75 \pm 0.28}$ \\
    \bottomrule
    \end{tabular}
} %
\end{table*}

We next evaluate eight representative continuous-action MuJoCo benchmarks from MO-Gymnasium~\citep{felten_toolkit_2023} to investigate whether the proposed STCH-based update remains effective in continuous-control domains.

We compare PD-MORL~\citep{basaklar2022pd}, COLA~\citep{li2026cola}, CAPQL-MF, PCSAC-MF, GPI-PD~\citep{alegre2023sample}, CAPQL-MB, and PCSAC-MB.
Here, MF and MB denote model-free and model-based variants, respectively.
CAPQL-MF corresponds to the linear-scalarization actor--critic baseline in our implementation.

Model-based methods, namely GPI-PD, CAPQL-MB, and PCSAC-MB, are evaluated with $500$K real-environment interaction steps, while model-free methods are evaluated with $2$M steps.
This protocol follows the standard sample-efficiency motivation of model-based RL while accounting for their higher computational cost per environment step.

We evaluate each trained agent using the same HV, EUM, and SP metrics as in the discrete-action experiments.
Table~\ref{tab:continuous-rank} summarizes the average ranks on the continuous-action tasks, separately aggregated for model-free and model-based methods.

Among the model-free methods, PCSAC-MF achieves the best average HV rank, while COLA attains the strongest EUM rank.
Among the model-based methods, PCSAC-MB achieves the best average rank on both HV and EUM.

These results suggest that the proposed STCH-based methods remains effective in continuous-control domains across both model-free and model-based settings.
In particular, the model-based PCSAC-MB variant consistently improves 
over the corresponding linear-scalarization baseline (CAPQL-MB) on the primary metrics.
Full task-wise results are provided in Appendix~\ref{app:exp-detail-continuous}.

Finally, we conduct sensitivity analyses on the utopia-point scale and the relative-smoothness/KL scale parameter.
Additional details and complete sensitivity results are provided in Appendix~\ref{app:sensitivity}.

\section{Related Work}
\paragraph{Coverage sets vs.\ preference sweeping (what ``dense'' means here).}
Coverage sets---CCS and PCS---and representative computation methods are surveyed in MORL \citep{roijers2013survey,hayes2022practical}.
CCS-oriented planning typically returns a \emph{set} sufficient for any (unknown) linear weight, e.g., via linear-support style enumeration of supporting solutions \citep{rojiers2013ccs,roijers2014cg}.
This is different from our question: whether \emph{sweeping preferences} yields \emph{stable objective-space coverage}.
Even with a convex attainable set, a fixed preference can select a whole supporting face, making the preference-to-solution map set-valued and potentially causing abrupt switching under infinitesimal preference changes \citep{boyd2004convex,abels2019dynamic}.
Our work isolates this ``selection/regularity'' issue and shows that STCH admits a unique, stable objective-point selection in the tabular polytope regime.

\paragraph{Nonlinear scalarization and SER/ESR.}
Beyond linear scalarization, MORL distinguishes
\emph{scalarized expected returns} (SER: scalarize the expected return vector) and
\emph{expected scalarized returns} (ESR: take expectation of a nonlinear utility) \citep{roijers2013survey}.
ESR is closely tied to risk-sensitive/dynamic-programming viewpoints and faces time-consistency and optimization challenges \citep{howard1972risk,ruszczynski2010risk,peng2025esr}, while recent work develops practical algorithms for concave/nonlinear objectives \citep{Agarwal2022-lx,geist2022concave}.
However, for general nonlinear scalarizations it is still nontrivial whether a preference induces a \emph{unique} objective point and whether preference sweeping avoids unstable switching.
In contrast, we prove that under STCH with a polyhedral attainable set, the optimal objective vector is unique and $\omega\mapsto \mathbf J^\omega$ is Lipschitz on $\Omega_\delta^m$ (SER viewpoint), making dense objective-space coverage analyzable.

\paragraph{Deep MORL and an optimization view (single models, stability, and mirror descent).}
Classical approaches grow an explicit \emph{set of policies} to approximate coverage sets \citep{Van-Moffaert2014-bk,pirotta2015continuouspareto,reymond2019paretodqn},
whereas preference-conditioned deep methods aim to output a trade-off directly from a single model \citep{yang2019generalized,reymond2022pareto,basaklar2022pd}, including hypernetwork-style ``many policies in one'' \citep{liu2025pslmorl} and meta-policy formulations \citep{yangpreference}.
Other recent work broadens solution notions beyond linear scalarization (e.g., Pareto stationarity) \citep{lu2023multiobjective}.
Yet, scalarization-grounded guarantees that preference sweeping stably fills the objective space (via uniqueness/continuity rather than implicit generalization) remain limited.
On the optimization side, mirror descent and regularized (soft) policy-iteration schemes have been clarified for regularized MDPs \citep{geist2019theory,neu2017unified} and concave-utility RL \citep{geist2022concave,moreno2024efficient}.
Our CMDPI derives from mirror descent over occupancy measures using relative smoothness \citep{lu2017relsmooth}, yielding an $O(1/k)$ guarantee without biasing the STCH objective and providing an explicit route to $\omega$-wise regularity of finite-iteration policies.

\section{Conclusion and Future Work}
We established that, for discounted tabular MOMDPs under STCH, each preference $\omega$ induces a \emph{unique} optimal objective vector $\mathbf J^\omega$, and the mapping $\omega\mapsto \mathbf J^\omega$ is Lipschitz on $\Omega^m_\delta$.
This makes preference sweeping a principled route to dense \emph{objective-space} coverage.

We then derived CMDPI as mirror descent over discounted occupancy measures, yielding an $O(1/k)$ convergence guarantee \emph{without biasing the target objective}.
Empirically, CMDPI recovers broad Pareto-front coverage in tabular MOMDPs with improved robustness, and CMDPI-style updates improve sample efficiency on continuous-control MORL benchmarks relative to preference-conditioned baselines.

For future work, we highlight two directions.
(i) Extend the analysis to function approximation by revisiting approximate-DP error propagation through our ``soft fixed point + exponential improvement'' structure \citep{munos2003error}.
(ii) Characterize the existence/selection and $\omega$-continuity of limiting policies $\pi_\infty(\omega)$, potentially via homotopy/temperature-scheduling frameworks \citep{li2024homotopic}.

\clearpage

\bibliographystyle{unsrtnat}
\bibliography{ref}

\clearpage

\appendix

\section*{APPENDIX}
\label{secALLappendix}

\section{Uniqueness of STCH Solutions in the Objective Space}
\label{app:stch_unique}

In the main text, we treat the policy $\pi$ as the decision variable and consider
\[
  \max_{\pi\in\Pi} u(\mathbf J(\pi),\omega).
\]
Since the utility $u(\cdot,\omega)$ depends only on the objective vector, we can equivalently optimize over the
achievable objective set $\mathcal J:=\{\mathbf J(\pi):\pi\in\Pi\}\subset\mathbb R^m$:
\[
  \max_{\pi\in\Pi} u(\mathbf J(\pi),\omega)
  \;=\;
  \max_{\mathbf c\in\mathcal J} u(\mathbf c,\omega).
\]
Therefore, it suffices to establish the \emph{uniqueness of the optimal objective vector} in the objective space; we do not
require the optimal policy to be unique (multiple policies may attain the same optimal objective vector).

Below we work in a general multi-objective optimization setting, without restricting to MOMDPs.
Let $\mathcal C\subset\mathbb R^m$ be a nonempty compact convex polytope.
Fix any $I\in\mathbb R^m$ (a ``utopia'' point) that upper-bounds $\mathcal C$ componentwise.
Define
\[
  F_\omega(\mathbf c)
  := \exp\!\left(\frac{-u(\mathbf c,\omega)}{\tau}\right)
  = \sum_{i=1}^m \exp\!\left(\frac{\omega_i(I_i-c_i)}{\tau}\right).
\]
By the monotonicity of the exponential function,
\[
  \arg\max_{\mathbf c\in\mathcal C} u(\mathbf c,\omega)
  \;=\;
  \arg\min_{\mathbf c\in\mathcal C} F_\omega(\mathbf c).
\]

\begin{proposition}[Uniqueness of STCH solutions and Pareto optimality in the objective space]
\label{lem:stch_unique_obj}
Fix $\omega\in\mathbb R^m_{++}$ and let $\mathcal C\subset\mathbb R^m$ be a nonempty compact convex polytope.
Then the problem
\[
  \max_{\mathbf c\in\mathcal C} u(\mathbf c,\omega)
\]
admits a \emph{unique} optimal solution $\mathbf c^\omega\in\mathcal C$.
Moreover, $\mathbf c^\omega$ is Pareto optimal (for maximization).
\end{proposition}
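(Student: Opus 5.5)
The plan is to work with the transformed objective $F_\omega$ introduced just before the statement rather than with $u(\cdot,\omega)$ directly. The paper notes that $u(\cdot,\omega)$ is not strictly concave in general: log-sum-exp is affine along the direction $\mathbf 1$ in its argument. The monotone reparametrization $F_\omega=\exp(-u/\tau)$, however, is a \emph{separable} sum of univariate exponentials. Since $\exp$ is strictly increasing, the maximizers of $u(\cdot,\omega)$ over $\mathcal C$ coincide with the minimizers of $F_\omega$ over $\mathcal C$. It therefore suffices to prove existence and uniqueness of the minimizer of $F_\omega$ on $\mathcal C$, and then show that this minimizer is Pareto optimal.

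\emph{Existence.} $F_\omega$ is continuous on $\mathbb R^m$ and $\mathcal C$ is nonempty and compact. By Weierstrass, a minimizer $\mathbf c^\omega\in\mathcal C$ exists.

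\emph{Uniqueness.} I will show that $F_\omega$ is strictly convex on all of $\mathbb R^m$. Each summand $c_i\mapsto \exp(\omega_i(I_i-c_i)/\tau)$ has second derivative $(\omega_i/\tau)^2\exp(\omega_i(I_i-c_i)/\tau)>0$, because $\omega_i>0$. Hence the Hessian of $F_\omega$ is diagonal with strictly positive entries, i.e. positive definite everywhere, and $F_\omega$ is strictly convex.

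Alternatively, this can be argued directly without Hessians. Take $\mathbf c\neq\mathbf c'$ and pick a coordinate $j$ with $c_j\neq c'_j$. Strict convexity of the $j$-th univariate exponential gives strict inequality in that term of Jensen's inequality, while all other terms satisfy the non-strict inequality.

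Now suppose $\mathbf c,\mathbf c'\in\mathcal C$ are two distinct minimizers. By convexity of $\mathcal C$, the midpoint $(\mathbf c+\mathbf c')/2$ lies in $\mathcal C$, and by strict convexity it has a strictly smaller value of $F_\omega$. This is a contradiction, so the minimizer is unique.

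\emph{Pareto optimality.} Suppose some $\mathbf d\in\mathcal C$ dominates $\mathbf c^\omega$, i.e. $d_i\ge c^\omega_i$ for all $i$ and $d_j>c^\omega_j$ for some $j$. Each summand of $F_\omega$ is strictly decreasing in its own coordinate, again because $\omega_i>0$. Hence $F_\omega(\mathbf d)<F_\omega(\mathbf c^\omega)$, contradicting minimality. Equivalently, one can cite~\eqref{prop:grad_plus}: $u(\cdot,\omega)$ is strictly increasing componentwise, so $u(\mathbf d,\omega)>u(\mathbf c^\omega,\omega)$.

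The role of the polytope assumption is only to supply compactness and convexity; uniqueness does not need polyhedrality. The only real subtlety, and the step I would be careful to present clearly, is the uniqueness step. The naive route through $u$ fails, since $u$ lacks strict concavity. The key observation is that strict convexity holds after the monotone transformation $\exp(-u/\tau)$, and that such transformations preserve the argmin.
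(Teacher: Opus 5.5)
Your proof is correct and follows the paper's argument essentially step for step: existence by compactness and continuity, Pareto optimality from strict componentwise monotonicity, and uniqueness via strict convexity of the separable transform $F_\omega=\exp(-u/\tau)$, whose Hessian is diagonal with strictly positive entries. Your midpoint argument (and the Jensen-based alternative) simply spells out the step the paper states in one line, namely that a strictly convex function has a unique minimizer over a convex set.
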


\begin{proof}
\emph{(Existence.)} Since $\mathcal C$ is compact and $u(\cdot,\omega)$ is continuous, an optimum exists.

\emph{(Pareto optimality.)} Because $\omega\in\mathbb R^m_{++}$, the utility $u(\cdot,\omega)$ is strictly increasing
in each component. Hence, if $\mathbf c'\ge \mathbf c$ and $\mathbf c'\neq \mathbf c$, then
$u(\mathbf c',\omega)>u(\mathbf c,\omega)$. Therefore, the optimizer $\mathbf c^\omega$ cannot be dominated by any other feasible point,
and is Pareto optimal. This is also a special case of the general fact proved in \citet{Lin2024stch} that STCH optima with strictly
positive weights are Pareto optimal.

\emph{(Uniqueness.)} Since the maximization is equivalent to $\min_{\mathbf c\in\mathcal C}F_\omega(\mathbf c)$,
it suffices to show that $F_\omega$ is strictly convex.
Indeed,
\[
  F_\omega(\mathbf c)
  =\sum_{i=1}^m \exp\!\left(\frac{\omega_i I_i}{\tau}\right)\exp\!\left(-\frac{\omega_i}{\tau}c_i\right),
\]
so its Hessian is diagonal:
\[
  \nabla^2 F_\omega(\mathbf c)
  = \mathrm{diag}\!\left(
     \left(\frac{\omega_1}{\tau}\right)^2 e^{\omega_1(I_1-c_1)/\tau},\;
     \ldots,\;
     \left(\frac{\omega_m}{\tau}\right)^2 e^{\omega_m(I_m-c_m)/\tau}
  \right).
\]
Each diagonal entry is strictly positive because $\omega_i>0$ and the exponential term is positive; hence
$\nabla^2 F_\omega(\mathbf c)\succ 0$ for all $\mathbf c$, which implies that $F_\omega$ is (globally) strictly convex.
Therefore the minimizer over the convex set $\mathcal C$ is unique, and so is the maximizer of $u$.
\end{proof}

\medskip

Next we show that every Pareto optimal point of a polytope is optimal for some STCH preference.
We first recall the definition of the normal cone. For a convex set $\mathcal C$ and a point $\mathbf c^\star\in\mathcal C$, define
\[
  N_{\mathcal C}(\mathbf c^\star)
  := \left\{\mathbf w\in\mathbb R^m:
    \langle \mathbf w,\mathbf c-\mathbf c^\star\rangle \le 0 \ \ \forall \mathbf c\in\mathcal C
  \right\}.
\]

\begin{lemma}[For polytopes, Pareto optimality implies the existence of a strictly positive normal]
\label{lem:posnormal_from_pareto_polytope}
Let $\mathcal C\subset\mathbb R^m$ be a nonempty compact convex polytope, and let $\mathbf c^\star\in\mathcal C$ be Pareto optimal
(for maximization), i.e., there is no $\mathbf c\in\mathcal C$ such that $\mathbf c\ge \mathbf c^\star$ and $\mathbf c\neq \mathbf c^\star$.
Then
\[
  N_{\mathcal C}(\mathbf c^\star)\cap \mathbb R^m_{++}\neq\varnothing.
\]
\end{lemma}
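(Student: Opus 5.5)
The plan is to use the polyhedral structure of $\mathcal C$, which is exactly what lets us upgrade from a nonnegative normal to a strictly positive one. For a general compact convex set this fails: think of a Pareto point on a disk whose only normal is a coordinate direction. So the argument must pass through a finite description of $\mathcal C$ and a theorem of the alternative.

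Write $\mathcal C=\{\mathbf c: A\mathbf c\le \mathbf b\}$ with finitely many rows $\mathbf a_j^\top$. Let $\mathcal A(\mathbf c^\star)$ be the set of active indices. For a polyhedron, the tangent cone at $\mathbf c^\star$ is
\[
T:=\{\mathbf d:\ \mathbf a_j^\top \mathbf d\le 0\ \ \forall j\in\mathcal A(\mathbf c^\star)\},
\]
and $\mathbf c^\star+t\mathbf d\in\mathcal C$ for all small $t>0$ whenever $\mathbf d\in T$. The normal cone is the finitely generated cone
\[
N_{\mathcal C}(\mathbf c^\star)=\operatorname{cone}\{\mathbf a_j: j\in\mathcal A(\mathbf c^\star)\}.
\]
This is standard Farkas/polyhedral theory, and it is the key place where polyhedrality enters.

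Next I translate Pareto optimality into $T\cap\mathbb R^m_{+}=\{0\}$. Suppose some $\mathbf d\ge 0$ with $\mathbf d\neq 0$ lay in $T$. Then $\mathbf c^\star+t\mathbf d\in\mathcal C$ for small $t>0$, and this point dominates $\mathbf c^\star$, which is a contradiction.

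Now apply a Gordan/Stiemke-type alternative, which I expect to be the main step. Consider the linear system $\mathbf a_j^\top\mathbf d\le 0$ for active $j$, $\mathbf d\ge 0$, $\mathbf d\ne 0$. It has no solution. Stiemke's lemma, in the form ``either there is $\mathbf d\ge0$, $\mathbf d\ne0$ with $M\mathbf d\le 0$ for $M$ the matrix of active rows, or there is $\mathbf y\ge 0$ with $M^\top\mathbf y>0$ componentwise,'' then yields $\mathbf y\ge0$ with $\mathbf w:=\sum_j y_j\mathbf a_j\in\mathbb R^m_{++}$.

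To be safe, I would derive this alternative directly by LP duality. Consider the problem $\max \mathbf 1^\top\mathbf d$ subject to $M\mathbf d\le 0$ and $0\le \mathbf d\le \mathbf 1$. Its optimal value is $0$ by the previous step. The dual constraint then reads $M^\top\mathbf y+\mathbf z\ge \mathbf 1$ with dual value $\mathbf 1^\top\mathbf z=0$, where $\mathbf y,\mathbf z\ge0$. Hence $\mathbf z=0$ and $M^\top\mathbf y\ge\mathbf 1>0$.

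Finally, $\mathbf w$ lies in the cone generated by the active rows, so $\mathbf w\in N_{\mathcal C}(\mathbf c^\star)$. Concretely, for every $\mathbf c\in\mathcal C$ we have $\langle\mathbf w,\mathbf c-\mathbf c^\star\rangle=\sum_j y_j(\mathbf a_j^\top\mathbf c-b_j)\le 0$, using that the active rows satisfy $\mathbf a_j^\top\mathbf c^\star=b_j$. Therefore $N_{\mathcal C}(\mathbf c^\star)\cap\mathbb R^m_{++}\ne\varnothing$.

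If $\mathcal A(\mathbf c^\star)$ is empty, then $\mathbf c^\star$ is interior and $T=\mathbb R^m$, which contradicts Pareto optimality, so this case is excluded. The expected obstacle is purely bookkeeping: stating the correct theorem of the alternative so that it produces strict positivity.
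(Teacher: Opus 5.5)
Your proof is correct and follows essentially the same route as the paper. Both arguments take a polyhedral description with active constraints, use the fact that tangent-cone directions are feasible to turn Pareto optimality into $T\cap(\mathbb R^m_+\setminus\{\mathbf 0\})=\varnothing$, and apply a theorem of the alternative to get a strictly positive element of $\operatorname{cone}\{\mathbf a_j\}\subseteq N_{\mathcal C}(\mathbf c^\star)$. The paper argues by contradiction and only sketches that last step as a cone-separation argument, whereas you argue directly and derive the alternative explicitly by LP duality, which makes the key step fully rigorous.
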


\begin{proof}
We argue by contradiction. Assume $N_{\mathcal C}(\mathbf c^\star)\cap \mathbb R^m_{++}=\varnothing$.
Since $\mathcal C$ is a polytope, there exist a matrix $A$ and vector $b$ such that $\mathcal C=\{\mathbf c: A\mathbf c\le b\}$.
Let
\[
  \mathcal I := \{j:\ a_j^\top \mathbf c^\star = b_j\}
\]
be the set of active constraints at $\mathbf c^\star$ (where $a_j^\top$ denotes the $j$-th row of $A$).
A standard polyhedral fact is that
\[
  N_{\mathcal C}(\mathbf c^\star)=\mathrm{cone}\{a_j:\ j\in\mathcal I\},
  \qquad
  T_{\mathcal C}(\mathbf c^\star)=\{\mathbf d:\ a_j^\top \mathbf d\le 0\ \forall j\in\mathcal I\},
\]
where $T_{\mathcal C}$ denotes the tangent cone.

Under the assumption that the cone $\mathrm{cone}\{a_j:j\in\mathcal I\}$ does not intersect $\mathbb R^m_{++}$,
a separation argument for convex cones yields the existence of
$\mathbf d\in\mathbb R^m_+\setminus\{\mathbf 0\}$ such that $a_j^\top \mathbf d\le 0$ for all $j\in\mathcal I$;
that is,
\[
  \mathbf d\in T_{\mathcal C}(\mathbf c^\star)\cap\big(\mathbb R^m_+\setminus\{\mathbf 0\}\big).
\]
For polytopes, if $\mathbf d\in T_{\mathcal C}(\mathbf c^\star)$ then there exists $\varepsilon>0$ small enough such that
$\mathbf c^\star+\varepsilon\mathbf d\in\mathcal C$.
Since $\mathbf d\in\mathbb R^m_+\setminus\{\mathbf 0\}$, we have $\mathbf c^\star+\varepsilon\mathbf d\ge \mathbf c^\star$ and
$\mathbf c^\star+\varepsilon\mathbf d\neq \mathbf c^\star$, contradicting the Pareto optimality of $\mathbf c^\star$.
Hence the assumption is false, and the claim follows.
\end{proof}

\medskip

\begin{proposition}[Every Pareto optimal point is STCH-optimal for some strictly positive preference]
\label{lem:pareto2stch_obj}
Let $\mathcal C\subset\mathbb R^m$ be a nonempty compact convex polytope.
If $\mathbf c^\star\in\mathcal C$ is Pareto optimal (for maximization), then there exists $\omega\in\mathbb R^m_{++}$ such that
$\mathbf c^\star$ is an optimizer of
\[
  \max_{\mathbf c\in\mathcal C} u(\mathbf c,\omega).
\]
\end{proposition}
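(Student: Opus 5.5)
The plan is to combine Lemma~\ref{lem:posnormal_from_pareto_polytope} with the first-order optimality condition for concave maximization over a convex set. By that lemma there is a strictly positive normal vector $\mathbf w\in N_{\mathcal C}(\mathbf c^\star)\cap\mathbb R^m_{++}$. Since $u(\cdot,\omega)$ is concave and differentiable, the gradient inequality gives, for every $\mathbf c\in\mathcal C$,
\[
u(\mathbf c,\omega)\le u(\mathbf c^\star,\omega)+\langle\nabla_{\mathbf c}u(\mathbf c^\star,\omega),\mathbf c-\mathbf c^\star\rangle .
\]
Hence it suffices to find $\omega\in\mathbb R^m_{++}$ with $\nabla_{\mathbf c}u(\mathbf c^\star,\omega)\in N_{\mathcal C}(\mathbf c^\star)$. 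Because $N_{\mathcal C}(\mathbf c^\star)$ is a cone, it is enough to make the gradient a positive multiple of $\mathbf w$.

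Computing the gradient, with $d_i:=I_i-c^\star_i\ge 0$ (nonnegative because $\mathbf I$ upper-bounds $\mathcal C$), we get
\[
\partial_{c_i}u(\mathbf c^\star,\omega)=\frac{\omega_i e^{\omega_i d_i/\tau}}{S(\omega)},\qquad S(\omega):=\sum_{k}e^{\omega_k d_k/\tau}>0.
\]
So the requirement becomes $h_i(\omega_i):=\omega_i e^{\omega_i d_i/\tau}=\lambda w_i$ for all $i$, for some $\lambda>0$. The common factor $1/S(\omega)$ is then absorbed into the positive multiple.

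Each $h_i$ is continuous and strictly increasing on $(0,\infty)$, with $h_i(0^+)=0$ and $h_i(t)\to\infty$; here $d_i\ge0$ is exactly what guarantees monotonicity. Consequently, for every $\lambda>0$ there is a unique solution $\omega_i(\lambda)=h_i^{-1}(\lambda w_i)>0$. This already yields some $\omega\in\mathbb R^m_{++}$, which is what the proposition asks for.

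For the simplex-normalized version used in Proposition~\ref{lem:pareto2stch} (with $\omega\in\Omega^m_{++}$), I additionally choose $\lambda$ so that $\sum_i\omega_i(\lambda)=1$. The map $\lambda\mapsto\sum_i\omega_i(\lambda)$ is continuous and strictly increasing, tends to $0$ as $\lambda\downarrow0$, and tends to $\infty$ as $\lambda\to\infty$. The intermediate value theorem therefore gives a unique such $\lambda$.

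The only delicate points are the sign condition $d_i\ge0$, which ensures that each $h_i$ is invertible onto $(0,\infty)$, and the correct direction of the normal-cone inequality for maximization. Both are handled directly by the definitions given above, so I expect no substantive obstacle beyond invoking Lemma~\ref{lem:posnormal_from_pareto_polytope}.
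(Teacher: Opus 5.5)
Your proof is correct and follows the paper's argument. Both take a strictly positive normal $\mathbf w$ from Lemma~\ref{lem:posnormal_from_pareto_polytope}, invert the strictly increasing map $t\mapsto t e^{t d_i/\tau}$ (using $d_i\ge 0$) to make the gradient a positive multiple of $\mathbf w$, and conclude by first-order optimality; the only cosmetic difference is that the paper works with the translated convex surrogate $G_\omega(\mathbf f)=\sum_i e^{\omega_i f_i/\tau}$, $\mathbf f=\mathbf I-\mathbf c$, which avoids the common factor $1/S(\omega)$ you absorb. Your extra intermediate-value step fixing $\lambda$ so that $\sum_i\omega_i=1$ is a worthwhile addition: the paper leaves its constant $c>0$ free, and since rescaling $\omega$ changes the STCH optimizer, the simplex-normalized form stated in Proposition~\ref{lem:pareto2stch} is only implicit in the paper's proof.
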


\begin{proof}
For convenience we rewrite the problem as a minimization problem via a translation.
Let $\mathbf f := I-\mathbf c$, $\mathcal F:=\{I-\mathbf c:\mathbf c\in\mathcal C\}$, and $\mathbf f^\star:=I-\mathbf c^\star$.
Then $\mathcal F$ is also a nonempty compact convex polytope, and Pareto optimality of $\mathbf c^\star$ is equivalent to
Pareto minimality of $\mathbf f^\star$ (componentwise minimization). Moreover,
\[
  \max_{\mathbf c\in\mathcal C} u(\mathbf c,\omega)
  \quad\Longleftrightarrow\quad
  \min_{\mathbf f\in\mathcal F} \; G_\omega(\mathbf f),
\qquad
  G_\omega(\mathbf f):=\sum_{i=1}^m \exp\!\left(\frac{\omega_i f_i}{\tau}\right),
\]
since $u(\mathbf c,\omega)=-\tau\log G_\omega(I-\mathbf c)$ is a monotone transformation.

By Lemma~\ref{lem:posnormal_from_pareto_polytope} applied to $\mathcal C$ and $\mathbf c^\star$, there exists
$\lambda\in N_{\mathcal C}(\mathbf c^\star)\cap\mathbb R^m_{++}$.
Under the affine change of variables $\mathbf f=I-\mathbf c$, the corresponding normal direction at
$\mathbf f^\star$ is $-\lambda$, that is,
\[
  -\lambda\in N_{\mathcal F}(\mathbf f^\star).
\]
Let $a_i:=f_i^\star/\tau\ (\ge 0)$ and define for each $i$
\[
  \phi_i(t):=t\,e^{a_i t}\qquad(t>0).
\]
Since $a_i\ge 0$, $\phi_i$ is continuous, strictly increasing, and bijective from $(0,\infty)$ to $(0,\infty)$.
Hence, for any constant $c>0$, we can define
\[
  \omega_i := \phi_i^{-1}(c\,\lambda_i)\in(0,\infty).
\]

For this $\omega$, the gradient satisfies
\[
  \frac{\partial}{\partial f_i}G_\omega(\mathbf f)
  =\frac{\omega_i}{\tau}\exp\!\left(\frac{\omega_i f_i}{\tau}\right),
\]
and therefore at $\mathbf f^\star$,
\[
  \big(\nabla G_\omega(\mathbf f^\star)\big)_i
  =\frac{1}{\tau}\,\omega_i e^{a_i\omega_i}
  =\frac{1}{\tau}\,\phi_i(\omega_i)
  =\frac{c}{\tau}\,\lambda_i.
\]
Thus $\nabla G_\omega(\mathbf f^\star) = \alpha\,\lambda$ with $\alpha=c/\tau>0$. Therefore,
\[
  -\nabla G_\omega(\mathbf f^\star)
  =
  \alpha(-\lambda)
  \in N_{\mathcal F}(\mathbf f^\star),
\]
because $-\lambda\in N_{\mathcal F}(\mathbf f^\star)$ and the normal cone is a cone.
Equivalently,
\[
  \left\langle
    \nabla G_\omega(\mathbf f^\star),
    \mathbf f-\mathbf f^\star
  \right\rangle
  \ge 0
  \qquad
  \forall \mathbf f\in\mathcal F.
\]
Because $G_\omega$ is convex and $C^1$, this is the first-order optimality condition for convex minimization,
implying $\mathbf f^\star\in\arg\min_{\mathbf f\in\mathcal F} G_\omega(\mathbf f)$.
Returning to the original variables gives $\mathbf c^\star\in\arg\max_{\mathbf c\in\mathcal C}u(\mathbf c,\omega)$.
\end{proof}

\section{Lipschitz continuity of the STCH optimizer map}
\label{app:omega-lipsitz}

\begin{lemma}[Lipschitz continuity of the STCH optimizer map]
\label{lem:omega-lipschitz-general-en}
Let $\mathcal{C}\subset\mathbb{R}^m$ be a nonempty compact convex set (in particular, it may be a convex polytope).
Fix $\tau>0$, and assume that a reference point $\mathbf I\in\mathbb{R}^m$ strictly dominates $\mathcal{C}$ coordinatewise:
\[
I_i>\max_{\mathbf c\in\mathcal C} c_i\qquad (i=1,\dots,m).
\]
For any $\delta\in(0,1)$, define
\[
\Omega_\delta^m:=\Bigl\{\omega\in\mathbb{R}^m \,\Big|\, \sum_{i=1}^m\omega_i=1,\ \omega_i\ge\delta\ (i=1,\dots,m)\Bigr\}.
\]
Consider the smooth Tchebycheff utility
\[
u(\mathbf c,\omega):=-\tau\log\sum_{i=1}^m \exp\!\Bigl(\tfrac{\omega_i(I_i-c_i)}{\tau}\Bigr)
\quad (\mathbf c\in\mathcal C,\ \omega\in\Omega_\delta^m).
\]
For each $\omega\in\Omega_\delta^m$, let
\[
\mathbf c^\omega\in\arg\max_{\mathbf c\in\mathcal C} u(\mathbf c,\omega)
\]
and assume that this maximizer is unique (e.g., this is guaranteed by Propsition~\ref{lem:stch_unique_obj} in the main text).
Then the map $\phi:\Omega_\delta^m\to\mathcal C$ defined by $\phi(\omega):=\mathbf c^\omega$ is Lipschitz continuous on $\Omega_\delta^m$.
More precisely, there exists a constant $L<\infty$ such that for all $\omega,\omega'\in\Omega_\delta^m$,
\[
\|\mathbf c^\omega-\mathbf c^{\omega'}\|\ \le\ L\,\|\omega-\omega'\|,
\]
where $\|\cdot\|$ denotes any norm equivalent to the Euclidean norm.
\end{lemma}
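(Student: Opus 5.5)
We will not work with $u$ directly. Instead we use the equivalent convex reformulation from Appendix~\ref{app:stch_unique}: $\mathbf c^\omega$ is the unique minimizer over $\mathcal C$ of
\[
F_\omega(\mathbf c)=\sum_{i=1}^m \exp\!\Bigl(\tfrac{\omega_i(I_i-c_i)}{\tau}\Bigr),
\]
since $u=-\tau\log F_\omega$ is a monotone transform. The plan is the standard stability argument for parametrized strongly convex programs. We show that $F_\omega$ is strongly convex on $\mathcal C$ with a modulus independent of $\omega\in\Omega^m_\delta$, and that $\nabla_{\mathbf c}F_\omega(\mathbf c)$ is Lipschitz in $\omega$ uniformly over $\mathbf c\in\mathcal C$. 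We then combine the two first-order optimality conditions.

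\textbf{Step 1 (uniform strong convexity).} By compactness set $B:=\max_i\max_{\mathbf c\in\mathcal C}(I_i-c_i)<\infty$. The strict dominance assumption gives $I_i-c_i>0$ on $\mathcal C$, so every exponent $\omega_i(I_i-c_i)/\tau$ lies in $[0,B/\tau]$. The Hessian computed in the proof of Proposition~\ref{lem:stch_unique_obj} is diagonal with entries $(\omega_i/\tau)^2e^{\omega_i(I_i-c_i)/\tau}\ge(\delta/\tau)^2$. Hence $F_\omega$ is $\mu$-strongly convex on $\mathcal C$ with $\mu:=(\delta/\tau)^2$ for every $\omega\in\Omega^m_\delta$. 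We apply this along segments, which stay in $\mathcal C$ by convexity.

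\textbf{Step 2 (Lipschitz dependence of the gradient on $\omega$).} We have $\partial_{c_i}F_\omega(\mathbf c)=-(\omega_i/\tau)e^{\omega_i(I_i-c_i)/\tau}$, which depends only on $\omega_i$. Differentiating in $\omega_i$ gives $-\tfrac1\tau e^{\omega_i(I_i-c_i)/\tau}\bigl(1+\omega_i(I_i-c_i)/\tau\bigr)$. Using $\omega_i\le1$, its absolute value is at most $K_0:=\tfrac1\tau e^{B/\tau}(1+B/\tau)$. The simplex slice is convex, so the mean value theorem gives
\[
\|\nabla F_\omega(\mathbf c)-\nabla F_{\omega'}(\mathbf c)\|\le K\|\omega-\omega'\|
\]
for all $\mathbf c\in\mathcal C$, with $K$ a multiple of $K_0$ depending only on the chosen norm.

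\textbf{Step 3 (combine the variational inequalities).} Write $\mathbf c=\mathbf c^\omega$ and $\mathbf c'=\mathbf c^{\omega'}$. Optimality gives $\langle\nabla F_\omega(\mathbf c),\mathbf c'-\mathbf c\rangle\ge0$ and $\langle\nabla F_{\omega'}(\mathbf c'),\mathbf c-\mathbf c'\rangle\ge0$. Strong monotonicity of $\nabla F_\omega$ from Step 1, followed by the first inequality, gives
\[
\mu\|\mathbf c'-\mathbf c\|^2\le\langle\nabla F_\omega(\mathbf c')-\nabla F_\omega(\mathbf c),\mathbf c'-\mathbf c\rangle\le\langle\nabla F_\omega(\mathbf c'),\mathbf c'-\mathbf c\rangle .
\]
Adding the second inequality, the right-hand side is at most $\langle\nabla F_\omega(\mathbf c')-\nabla F_{\omega'}(\mathbf c'),\mathbf c'-\mathbf c\rangle$. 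By Cauchy--Schwarz and Step 2 this is at most $K\|\omega-\omega'\|\,\|\mathbf c'-\mathbf c\|$. Dividing through yields $\|\mathbf c^\omega-\mathbf c^{\omega'}\|\le (K/\mu)\|\omega-\omega'\|$, so $L=K\tau^2/\delta^2$ up to norm-equivalence constants.

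The only delicate point is making the constants uniform in $\omega$. This is exactly where $\delta>0$ enters: without it the strong-convexity modulus $(\omega_i/\tau)^2$ degenerates. Compactness of $\mathcal C$ is what bounds the exponentials in $K$. Everything else is routine.
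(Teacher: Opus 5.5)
Your proof is correct and follows the paper's argument essentially step by step. Both proofs use the same four ingredients: the reformulation as minimizing $\Phi$, uniform strong convexity from $\omega_i\ge\delta$, a uniform bound on the mixed derivative $\partial_{\omega_i}\partial_{c_i}\Phi$ via compactness and $\omega_i\le 1$, and a combination of the two first-order optimality conditions. The only differences are cosmetic: you write the optimality conditions as variational inequalities rather than invoking monotonicity of the normal cone, and you drop the factor $e^{\delta\alpha_*/\tau}$ from the strong-convexity modulus, using $\mu=(\delta/\tau)^2$ instead.
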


\begin{proof}
Since $-\tau\log(\cdot)$ is strictly decreasing, we have
\[
\arg\max_{\mathbf c\in\mathcal C}u(\mathbf c,\omega)
=\arg\min_{\mathbf c\in\mathcal C}\Phi(\mathbf c,\omega),
\qquad
\Phi(\mathbf c,\omega):=\sum_{i=1}^m \exp\!\Bigl(\tfrac{\omega_i(I_i-c_i)}{\tau}\Bigr).
\]
Thus, we treat $\mathbf c^\omega$ as the unique minimizer of $\Phi(\cdot,\omega)$.

By compactness of $\mathcal C$, define
\[
M_i^+:=\max_{\mathbf c\in\mathcal C} c_i,\qquad M_i^-:=\min_{\mathbf c\in\mathcal C} c_i,
\]
and set $\alpha_i:=I_i-M_i^+>0$, $\beta_i:=I_i-M_i^-<\infty$, and $\alpha_*:=\min_i\alpha_i$.
Then for any $\mathbf c\in\mathcal C$, we have $I_i-c_i\in[\alpha_i,\beta_i]$.

First, the Hessian of $\Phi(\cdot,\omega)$ with respect to $\mathbf c$ is diagonal:
\[
\nabla^2_{\mathbf c\mathbf c}\Phi(\mathbf c,\omega)
=\mathrm{diag}\Bigl(\Bigl(\tfrac{\omega_i}{\tau}\Bigr)^2\exp\!\Bigl(\tfrac{\omega_i(I_i-c_i)}{\tau}\Bigr)\Bigr)_{i=1}^m.
\]
For $\omega\in\Omega_\delta^m$, we have $\omega_i\ge\delta$ and $I_i-c_i\ge\alpha_*$, hence
\[
\Bigl(\tfrac{\omega_i}{\tau}\Bigr)^2\exp\!\Bigl(\tfrac{\omega_i(I_i-c_i)}{\tau}\Bigr)
\ \ge\
\Bigl(\tfrac{\delta}{\tau}\Bigr)^2\exp\!\Bigl(\tfrac{\delta\,\alpha_*}{\tau}\Bigr)
=: \mu\ >0.
\]
Therefore, for every $\omega\in\Omega_\delta^m$, the function $\Phi(\cdot,\omega)$ is $\mu$-strongly convex on $\mathcal C$, uniformly in $\omega$.
Equivalently, its gradient is $\mu$-strongly monotone (with respect to the same norm):
\begin{equation}
\label{eq:strongmono-en}
\big\langle \nabla_{\mathbf c}\Phi(\mathbf x,\omega)-\nabla_{\mathbf c}\Phi(\mathbf y,\omega),\ \mathbf x-\mathbf y\big\rangle
\ \ge\ \mu\|\mathbf x-\mathbf y\|^2
\qquad (\mathbf x,\mathbf y\in\mathcal C).
\end{equation}

Next, we bound the Lipschitz constant of $\nabla_{\mathbf c}\Phi$ with respect to $\omega$.
The gradient is
\[
\nabla_{\mathbf c}\Phi(\mathbf c,\omega)
=\Bigl(-\tfrac{\omega_i}{\tau}\exp\!\bigl(\tfrac{\omega_i(I_i-c_i)}{\tau}\bigr)\Bigr)_{i=1}^m,
\]
and the Jacobian $\nabla_\omega\nabla_{\mathbf c}\Phi(\mathbf c,\omega)$ is also diagonal. For each coordinate $i$,
\[
\frac{\partial}{\partial \omega_i}\Bigl(\frac{\partial \Phi}{\partial c_i}\Bigr)
= -\frac{1}{\tau}\Bigl(1+\tfrac{\omega_i(I_i-c_i)}{\tau}\Bigr)\exp\!\Bigl(\tfrac{\omega_i(I_i-c_i)}{\tau}\Bigr).
\]
Since $\omega_i\le 1$ and $I_i-c_i\le\beta_i$, we have $\tfrac{\omega_i(I_i-c_i)}{\tau}\le \tfrac{\beta_i}{\tau}$, and thus for all $(\mathbf c,\omega)\in\mathcal C\times\Omega_\delta^m$,
\[
\big\|\nabla_\omega\nabla_{\mathbf c}\Phi(\mathbf c,\omega)\big\|
\ \le\ 
\max_{i}\ \frac{1}{\tau}\Bigl(1+\tfrac{\beta_i}{\tau}\Bigr)\exp\!\Bigl(\tfrac{\beta_i}{\tau}\Bigr)
=:U<\infty.
\]
(For a diagonal matrix, the operator norm is bounded by the maximum absolute diagonal entry.)
Hence, by the mean value inequality, for any $\mathbf c\in\mathcal C$ and any $\omega,\omega'\in\Omega_\delta^m$,
\begin{equation}
\label{eq:gradlip-en}
\big\|\nabla_{\mathbf c}\Phi(\mathbf c,\omega)-\nabla_{\mathbf c}\Phi(\mathbf c,\omega')\big\|
\ \le\ U\,\|\omega-\omega'\|.
\end{equation}

We now use first-order optimality conditions written with the normal cone $N_{\mathcal C}(\cdot)$:
\[
\mathbf 0\in \nabla_{\mathbf c}\Phi(\mathbf c^\omega,\omega)+N_{\mathcal C}(\mathbf c^\omega),
\qquad
\mathbf 0\in \nabla_{\mathbf c}\Phi(\mathbf c^{\omega'},\omega')+N_{\mathcal C}(\mathbf c^{\omega'}).
\]
Thus, there exist $\mathbf v\in N_{\mathcal C}(\mathbf c^\omega)$ and $\mathbf v'\in N_{\mathcal C}(\mathbf c^{\omega'})$ such that
\[
\nabla_{\mathbf c}\Phi(\mathbf c^\omega,\omega)+\mathbf v=\mathbf 0,\qquad
\nabla_{\mathbf c}\Phi(\mathbf c^{\omega'},\omega')+\mathbf v'=\mathbf 0.
\]
Let $\mathbf d:=\mathbf c^\omega-\mathbf c^{\omega'}$. Taking the inner product of the difference of the two equations with $\mathbf d$ yields
\[
\big\langle \nabla_{\mathbf c}\Phi(\mathbf c^\omega,\omega)-\nabla_{\mathbf c}\Phi(\mathbf c^{\omega'},\omega'),\ \mathbf d\big\rangle
+\big\langle \mathbf v-\mathbf v',\ \mathbf d\big\rangle=0.
\]
By monotonicity of the normal cone, we have $\langle \mathbf v-\mathbf v',\mathbf d\rangle\ge 0$, hence
\[
\big\langle \nabla_{\mathbf c}\Phi(\mathbf c^\omega,\omega)-\nabla_{\mathbf c}\Phi(\mathbf c^{\omega'},\omega'),\ \mathbf d\big\rangle\le 0.
\]
Decomposing the left-hand side using the same $\omega$ gives
\begin{align*}
&\big\langle \nabla_{\mathbf c}\Phi(\mathbf c^\omega,\omega)-\nabla_{\mathbf c}\Phi(\mathbf c^{\omega'},\omega),\ \mathbf d\big\rangle
+\big\langle \nabla_{\mathbf c}\Phi(\mathbf c^{\omega'},\omega)-\nabla_{\mathbf c}\Phi(\mathbf c^{\omega'},\omega'),\ \mathbf d\big\rangle
\ \le\ 0.
\end{align*}
The first term is lower-bounded by $\mu\|\mathbf d\|^2$ via \eqref{eq:strongmono-en}.
The second term is bounded by Cauchy--Schwarz and \eqref{eq:gradlip-en}:
\[
\big\langle \nabla_{\mathbf c}\Phi(\mathbf c^{\omega'},\omega)-\nabla_{\mathbf c}\Phi(\mathbf c^{\omega'},\omega'),\ \mathbf d\big\rangle
\ \ge\ -\big\|\nabla_{\mathbf c}\Phi(\mathbf c^{\omega'},\omega)-\nabla_{\mathbf c}\Phi(\mathbf c^{\omega'},\omega')\big\|\,\|\mathbf d\|
\ \ge\ -U\|\omega-\omega'\|\,\|\mathbf d\|.
\]
Combining these inequalities yields
\[
\mu\|\mathbf d\|^2\ \le\ U\|\omega-\omega'\|\,\|\mathbf d\|.
\]
If $\mathbf d=\mathbf 0$ the claim is trivial; otherwise, dividing both sides by $\|\mathbf d\|$ gives
\[
\|\mathbf c^\omega-\mathbf c^{\omega'}\|=\|\mathbf d\|\ \le\ \frac{U}{\mu}\,\|\omega-\omega'\|.
\]
Therefore $\phi$ is Lipschitz continuous on $\Omega_\delta^m$ with constant $L:=U/\mu$.
\end{proof}

\section{Equivalence between the Bregman divergence and the discounted-occupancy-weighted conditional KL}
\label{app:mu-breg_pi-kl}

Define
\[
\mathrm{ri}(\mathcal M)
:={\mu\in\mathcal M:\ \mu(s,a)>0\ \forall (s,a)}.
\]
Assume that $p_0(s)>0$ for all $s$, and that there exists a full-support policy $\bar\pi$ (i.e., $\bar\pi(a\mid s)>0$ for all $(s,a)$). Then
\[
\mu^{\bar\pi}(s,a)\ \ge\ (1-\gamma)p_0(s)\bar\pi(a\mid s)\ >0,
\]
which implies $\mu^{\bar\pi}\in \mathrm{ri}(\mathcal M)$ and hence $\mathrm{ri}(\mathcal M)\neq\varnothing$.

If $\mu(s,a)>0$, then by the previous rewriting we can differentiate $\psi_\gamma$ directly, obtaining
\[
\frac{\partial \psi_\gamma(\mu)}{\partial \mu(s,a)}
=\log\frac{\mu(s,a)}{\rho_\mu(s)}
=\log \pi_\mu(a\mid s).
\]
Therefore,
\begin{equation}
    \label{eq:grad}
(\nabla\psi_\gamma(\mu))(s,a)=\log \pi_\mu(a\mid s)
\quad (\mu\in\mathrm{ri}(\mathcal M)).    
\end{equation}

Fix $\mu_0\in\mathrm{ri}(\mathcal M)$ and take an arbitrary $\mu\in\mathcal M$.
Using \eqref{eq:grad} and $\psi_\gamma(\mu)=\sum_{s,a}\mu(s,a)\log\pi_\mu(a\mid s)$, we have
\[
\begin{aligned}
D_{\psi_\gamma}(\mu|\mu_0)
&=\sum_{s,a}\mu(s,a)\log\pi_\mu(a|s)-\sum_{s,a}\mu_0(s,a)\log\pi_{\mu_0}(a|s)
\quad-\sum_{s,a}\log\pi_{\mu_0}(a|s)\big(\mu(s,a)-\mu_0(s,a)\big)\\
&=\sum_{s,a}\mu(s,a)\log\frac{\pi_\mu(a|s)}{\pi_{\mu_0}(a|s)}\\
&=\Gamma_\gamma(\mu,\mu_0),
\end{aligned}
\]
where the last line simply rewrites $\Gamma_\gamma(\mu,\mu_0)$ in terms of the policies induced by the occupancy measures, namely $\pi_\mu$ and $\pi_{\mu_0}$.

Hence,
\[
D_{\psi_\gamma}(\mu|\mu_0)=\Gamma_\gamma(\mu,\mu_0)
\quad
(\mu_0\in\mathrm{ri}(\mathcal M),\ \mu\in\mathcal M).
\]

\section{Proof of Lemma~\ref{lem:om-rel_smooth}}
\label{app:om-rel_smooth}

This appendix makes explicit the definitions of $f$ and the reference function $\psi_\gamma$ used in Lemma~\ref{lem:om-rel_smooth},
and clarifies how the (discounted) occupancy-measure theorem induces a unique correspondence from an occupancy measure to a stationary policy.
We then prove a concrete form of relative smoothness on the feasible set of occupancy measures.

\begin{proposition}[Relative smoothness on the occupancy-measure space (concrete form)]
\label{prop:om-rel_smooth}
Fix finite sets $\mathcal S,\mathcal A$ and a discount factor $\gamma\in(0,1)$.
Let $\mathcal M\subset\mathbb{R}^{|\mathcal S||\mathcal A|}$ be a feasible set of (discounted) occupancy measures.
Assume that $\mathcal M$ is nonempty, is a convex polytope, and has a nonempty relative interior $\mathrm{relint}(\mathcal M)$.
For each $\mu\in\mathrm{relint}(\mathcal M)$, define
\[
\rho_\mu(s):=\sum_{a\in\mathcal A}\mu(s,a),
\qquad
\pi_\mu(a|s):=\frac{\mu(s,a)}{\rho_\mu(s)}
\]
(which is well-defined since $\rho_\mu(s)>0$ on $\mathrm{relint}(\mathcal M)$).

Let $\mathbf r(s,a)\in\mathbb{R}^m$ be componentwise bounded and set
$R_{1,\max}:=\max_{s,a}\|\mathbf r(s,a)\|_1<\infty$.
Define the linear map
\[
\mathbf J(\mu):=\frac{1}{1-\gamma}\sum_{s,a}\mu(s,a)\mathbf r(s,a)\in\mathbb{R}^m.
\]
Fix a temperature $\tau>0$ and a preference vector $\omega\in\mathbb{R}^m_{++}$, and let $u(\cdot,\omega)$ be the Smooth Tchebycheff utility.
Define
\[
g(\mathbf z):=-u(\mathbf z,\omega),
\qquad
f(\mu):=g(\mathbf J(\mu)).
\]
Moreover, define the reference function (negative conditional entropy)
\[
\psi_\gamma(\mu):=\sum_{s,a}\mu(s,a)\log\frac{\mu(s,a)}{\rho_\mu(s)}
\qquad(\mu\in\mathrm{relint}(\mathcal M)),
\]
and its Bregman divergence $D_{\psi_\gamma}(\nu|\mu)$.

Then there exists a constant $L_{\mathrm{rel}}>0$ such that, for all
$\mu,\nu\in\mathrm{relint}(\mathcal M)$,
\[
f(\nu)\le
f(\mu)+\langle\nabla f(\mu),\nu-\mu\rangle
+L_{\mathrm{rel}}\,D_{\psi_\gamma}(\nu|\mu).
\]
In particular, one may take
\[
L_{\mathrm{rel}}
=\frac{\omega_{\max}^2 R_{1,\max}^2}{4\tau(1-\gamma)^4},
\qquad \omega_{\max}:=\max_{k\in[m]}\omega_k.
\]
Hence $f$ is relatively smooth with respect to $\psi_\gamma$.
\end{proposition}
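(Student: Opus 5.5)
The plan is to factor $f=g\circ\mathbf J$. I will prove a quadratic upper bound for $g$ in the sup-norm, push it through the linear map $\mathbf J$ to get an $\ell_1$ bound in $\mu$, and then lower-bound $D_{\psi_\gamma}(\nu|\mu)$ by a multiple of $\|\nu-\mu\|_1^2$. That last bound is a Pinsker-type inequality for the conditional-entropy geometry, obtained via Lemma~\ref{lem:mu-breg_pi-kl}. Since $\mathbf J$ is linear, Taylor's theorem with integral remainder gives
\[
f(\nu)-f(\mu)-\langle\nabla f(\mu),\nu-\mu\rangle=\int_0^1(1-t)\,\Delta^\top\nabla^2 g(\mathbf z_t)\,\Delta\,dt,
\]
where $\Delta:=\mathbf J(\nu)-\mathbf J(\mu)$ and $\mathbf z_t$ lies on the segment between $\mathbf J(\mu)$ and $\mathbf J(\nu)$. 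So it suffices to bound $\Delta^\top\nabla^2 g\,\Delta$ uniformly.

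\textbf{Step 1 (smoothness of $g$).} Write $g(\mathbf z)=\tau\,\mathrm{LSE}(\mathbf y)$ with $y_k=\omega_k(I_k-z_k)/\tau$. Then
\[
\Delta^\top\nabla^2 g(\mathbf z)\Delta=\tfrac1\tau\,\mathrm{Var}_{p(\mathbf z)}(\omega\circ\Delta),
\]
where $p(\mathbf z)$ is the softmax distribution over the $m$ coordinates. A variance is at most a quarter of the squared range (Popoviciu), and the range of $\omega_k\Delta_k$ is at most $2\,\omega_{\max}\|\Delta\|_\infty$. Hence
\[
\Delta^\top\nabla^2 g\,\Delta\le \omega_{\max}^2\|\Delta\|_\infty^2/\tau ,
\]
uniformly in $\mathbf z$. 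Integrating against $(1-t)$ contributes a further factor $1/2$. Next, $\|\Delta\|_\infty\le \frac{R_{1,\max}}{1-\gamma}\|\nu-\mu\|_1$ follows directly from the definition of $\mathbf J$; I will track the factor $R_{1,\max}$ versus $\max\|\mathbf r\|_\infty$ loosely, since either works.

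\textbf{Step 2 (occupancy perturbation bound, main obstacle).} I need
\[
\|\nu-\mu\|_1\le\tfrac{1}{1-\gamma}\sum_s\rho_\nu(s)\,\|\pi_\nu(\cdot|s)-\pi_\mu(\cdot|s)\|_1 .
\]
I will derive it from the flow equations $\rho_\pi=(1-\gamma)p_0+\gamma P_\pi^\top\rho_\pi$.
\begin{itemize}
\item Subtracting the two equations gives $\rho_\nu-\rho_\mu=\gamma P_\mu^\top(\rho_\nu-\rho_\mu)+\gamma(P_\nu-P_\mu)^\top\rho_\nu$.
\item Since $P_\mu^\top$ is an $\ell_1$ contraction, $\|\rho_\nu-\rho_\mu\|_1\le\frac{\gamma}{1-\gamma}\sum_s\rho_\nu(s)\|\pi_\nu-\pi_\mu\|_1$.
\item Decompose $\nu(s,a)-\mu(s,a)=\rho_\nu(s)\bigl(\pi_\nu-\pi_\mu\bigr)(a|s)+(\rho_\nu-\rho_\mu)(s)\,\pi_\mu(a|s)$.
\item Adding the two contributions gives the factor $1+\frac{\gamma}{1-\gamma}=\frac1{1-\gamma}$.
\end{itemize}
This is the step where the weighting by $\rho_\nu$, rather than $\rho_\mu$, must be handled carefully, so that it matches the weighting appearing in $\Gamma_\gamma$.

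\textbf{Step 3 (Pinsker and assembly).} By Jensen's inequality (since $\rho_\nu$ is a probability vector) and then Pinsker,
\[
\Bigl(\sum_s\rho_\nu\|\pi_\nu-\pi_\mu\|_1\Bigr)^2\le\sum_s\rho_\nu\|\pi_\nu-\pi_\mu\|_1^2\le 2\sum_s\rho_\nu\,\mathrm{KL}(\pi_\nu\|\pi_\mu)=2\,D_{\psi_\gamma}(\nu|\mu),
\]
where the last equality is Lemma~\ref{lem:mu-breg_pi-kl}. Chaining Steps 1--3 gives
\[
f(\nu)-f(\mu)-\langle\nabla f(\mu),\nu-\mu\rangle\le C\,\frac{\omega_{\max}^2R_{1,\max}^2}{\tau(1-\gamma)^4}\,D_{\psi_\gamma}(\nu|\mu)
\]
for an absolute constant $C$. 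Obtaining exactly the stated $1/4$ requires bookkeeping of the constants in Popoviciu and Pinsker, possibly using the sharper range bound $\max_k\omega_k\Delta_k-\min_k\omega_k\Delta_k$ directly. I would adjust the constant there and leave the structure of the argument unchanged.
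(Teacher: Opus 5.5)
Your proof is correct for the existence of some $L_{\mathrm{rel}}$ and follows the paper's proof. Both arguments bound the curvature of $f=\mathbf J$-composed-with-$g$ through the softmax Hessian of $g$, then use $\|\nu-\mu\|_1\le\frac{1}{1-\gamma}\sum_s\rho_\nu(s)\|\pi_\nu(\cdot|s)-\pi_\mu(\cdot|s)\|_1$, then Jensen, Pinsker and $D_{\psi_\gamma}=\sum_s\rho_\nu\,\mathrm{KL}$. The only difference in Step 1 is presentation: you use the integral Taylor remainder in $\mathbf z$-space, where the paper uses an $\ell_1\to\ell_\infty$ Lipschitz gradient plus the descent lemma. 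Your flow-equation derivation in Step 2 is a complete proof of the inequality the paper only sketches.

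The one thing to fix is the constant, and your diagnosis of where it is lost is off. As written you obtain $L_{\mathrm{rel}}=\omega_{\max}^2R_{1,\max}^2/(\tau(1-\gamma)^4)$, four times the stated value. The loss is not in Popoviciu or Pinsker, which are already sharp. It comes from bounding the range of $\omega_k\Delta_k$ by $2\omega_{\max}\|\Delta\|_\infty$. The maximum and minimum are attained at distinct coordinates, or else the range is zero, so the range is at most $\omega_{\max}\|\Delta\|_1$. This gives $\Delta^\top\nabla^2 g\,\Delta\le\frac{\omega_{\max}^2}{4\tau}\|\Delta\|_1^2$, which is the same content as the paper's entrywise bound $|(\nabla^2 g)_{kl}|\le\omega_{\max}^2/(4\tau)$. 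Combine this with $\|\Delta\|_1\le\frac{R_{1,\max}}{1-\gamma}\|\nu-\mu\|_1$, the factor $\tfrac12$ from the remainder, and $\|\nu-\mu\|_1^2\le\frac{2}{(1-\gamma)^2}D_{\psi_\gamma}(\nu|\mu)$. The result is exactly $\frac{\omega_{\max}^2R_{1,\max}^2}{4\tau(1-\gamma)^4}$.

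This is also why $R_{1,\max}$, not $\max_{s,a}\|\mathbf r(s,a)\|_\infty$, appears in the statement. Your $\ell_\infty$ route gives a valid but different constant, so "either works" holds for existence but not for the stated value.
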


\begin{proof}
\textbf{(Bridging the notation: uniqueness of $\mu\mapsto \pi_\mu$).}
If $\mu\in\mathrm{relint}(\mathcal M)$, then $\mu(s,a)>0$ holds, and hence
$\rho_\mu(s)=\sum_a\mu(s,a)>0$ so that $\pi_\mu(\cdot|s)$ is a well-defined probability distribution.
Moreover, since $\mu(s,a)=\rho_\mu(s)\pi_\mu(a|s)$, the conditional distribution $\pi_\mu$ induced by a given $\mu$ is unique.
Therefore $f(\mu)=g(\mathbf J(\mu))$ is uniquely defined as a function of $\mu$.

\medskip
We proceed in two steps:
(1) $\ell_1$-smoothness of $f$ (Lipschitz continuity of the gradient in the dual norm $\ell_\infty$),
and (2) an inequality controlling $\|\nu-\mu\|_1^2$ by $D_{\psi_\gamma}(\nu|\mu)$.
Combining them yields the desired relative smoothness inequality.

\medskip
\textbf{(1) $\ell_1$-smoothness: $\|\nabla f(\nu)-\nabla f(\mu)\|_\infty\le L_1\|\nu-\mu\|_1$.}
Since $g(\mathbf z)=\tau\log\sum_{k=1}^m\exp(\omega_k(I_k-z_k)/\tau)$ is a log-sum-exp function,
its Hessian can be written as
\[
\nabla^2 g(\mathbf z)=\frac{1}{\tau}\Omega\bigl(\mathrm{Diag}(p)-pp^\top\bigr)\Omega,
\]
where $p$ is the softmax vector and $\Omega=\mathrm{diag}(\omega)$.
For each $k,l$, we have $|(\mathrm{Diag}(p)-pp^\top)_{kl}|\le 1/4$, hence
\[
\bigl|(\nabla^2 g(\mathbf z))_{kl}\bigr|
\le \frac{\omega_{\max}^2}{4\tau}.
\]
By the mean value theorem, for any $\Delta\mathbf z$ and any component $k$,
\[
\bigl|(\nabla g(\mathbf z+\Delta\mathbf z)-\nabla g(\mathbf z))_k\bigr|
\le \sum_{l=1}^m \bigl|(\nabla^2 g(\tilde{\mathbf z}))_{kl}\bigr|\,|\Delta z_l|
\le \frac{\omega_{\max}^2}{4\tau}\|\Delta\mathbf z\|_1,
\]
that is,
\begin{equation}
\label{eq:g-grad-lip}
\|\nabla g(\mathbf z+\Delta\mathbf z)-\nabla g(\mathbf z)\|_\infty
\le \frac{\omega_{\max}^2}{4\tau}\|\Delta\mathbf z\|_1.
\end{equation}

Since $\mathbf J(\mu)$ is linear, the chain rule gives, for each $(s,a)$,
\[
\frac{\partial f}{\partial \mu(s,a)}(\mu)
=\left\langle \nabla g(\mathbf J(\mu)), \frac{1}{1-\gamma}\mathbf r(s,a)\right\rangle.
\]
Thus,
\[
\left|\frac{\partial f}{\partial \mu(s,a)}(\nu)-\frac{\partial f}{\partial \mu(s,a)}(\mu)\right|
\le \frac{\|\mathbf r(s,a)\|_1}{1-\gamma}
\bigl\|\nabla g(\mathbf J(\nu))-\nabla g(\mathbf J(\mu))\bigr\|_\infty.
\]
Using $\|\mathbf r(s,a)\|_1\le R_{1,\max}$ and \eqref{eq:g-grad-lip}, we obtain
\[
\|\nabla f(\nu)-\nabla f(\mu)\|_\infty
\le \frac{R_{1,\max}}{1-\gamma}\cdot\frac{\omega_{\max}^2}{4\tau}
\|\mathbf J(\nu)-\mathbf J(\mu)\|_1.
\]
Moreover, by the definition of $\mathbf J$ and the triangle inequality,
\[
\|\mathbf J(\nu)-\mathbf J(\mu)\|_1
=\left\|\frac{1}{1-\gamma}\sum_{s,a}(\nu-\mu)(s,a)\mathbf r(s,a)\right\|_1
\le \frac{R_{1,\max}}{1-\gamma}\|\nu-\mu\|_1.
\]
Hence,
\[
\|\nabla f(\nu)-\nabla f(\mu)\|_\infty
\le L_1\|\nu-\mu\|_1,
\qquad
L_1:=\frac{\omega_{\max}^2R_{1,\max}^2}{4\tau(1-\gamma)^2}.
\]

This $\ell_1$-smoothness implies the standard descent lemma:
\begin{equation}
\label{eq:descent-l1}
f(\nu)\le f(\mu)+\langle\nabla f(\mu),\nu-\mu\rangle+\frac{L_1}{2}\|\nu-\mu\|_1^2.
\end{equation}

\medskip
\textbf{(2) Controlling $\|\nu-\mu\|_1^2$ by $D_{\psi_\gamma}(\nu|\mu)$.}
The gradient of $\psi_\gamma$ is
$\nabla\psi_\gamma(\mu)(s,a)=\log\mu(s,a)-\log\rho_\mu(s)$, and one can verify that
\begin{equation}
\label{eq:Dpsi-conditionalKL}
D_{\psi_\gamma}(\nu|\mu)
=\sum_{s\in\mathcal S}\rho_\nu(s)\,
\mathrm{KL}\!\left(\pi_\nu(\cdot|s)\,\middle\|\,\pi_\mu(\cdot|s)\right).
\end{equation}

Using the occupancy-measure representation $\mu=\mu_{\pi_\mu}$ and $\nu=\mu_{\pi_\nu}$,
\[
\mu_{\pi}(s,a)=(1-\gamma)\sum_{t=0}^\infty \gamma^t\,d_t^\pi(s)\pi(a|s),
\qquad
\rho_{\pi}(s)=(1-\gamma)\sum_{t=0}^\infty \gamma^t\,d_t^\pi(s),
\]
where $d_t^\pi$ is the state distribution at time $t$.
Let $\Delta(s):=\|\pi_\nu(\cdot|s)-\pi_\mu(\cdot|s)\|_1$.
A standard argument using the non-expansiveness of Markov kernels in $\ell_1$ yields
\begin{equation}
\label{eq:mu-tv-policy-tv}
\|\nu-\mu\|_1
\le \frac{1}{1-\gamma}\sum_{s\in\mathcal S}\rho_\nu(s)\Delta(s).
\end{equation}
(Proof sketch: expand both occupancy measures as $(1-\gamma)\sum_t\gamma^t$ and bound the
distribution shift at each time step by the accumulated action-distribution shifts,
leading to the geometric factor $1/(1-\gamma)$.)

Applying Cauchy--Schwarz to \eqref{eq:mu-tv-policy-tv} gives
\[
\|\nu-\mu\|_1^2
\le \frac{1}{(1-\gamma)^2}
\left(\sum_{s}\rho_\nu(s)\Delta(s)\right)^2
\le \frac{1}{(1-\gamma)^2}\sum_{s}\rho_\nu(s)\Delta(s)^2.
\]
By Pinsker's inequality $\|\pi-\pi'\|_1^2\le 2\mathrm{KL}(\pi\|\pi')$ applied to each state $s$ and
\eqref{eq:Dpsi-conditionalKL}, we obtain
\[
\sum_s\rho_\nu(s)\Delta(s)^2
\le 2\sum_s\rho_\nu(s)\mathrm{KL}\!\left(\pi_\nu(\cdot|s)\,\middle\|\,\pi_\mu(\cdot|s)\right)
=2D_{\psi_\gamma}(\nu|\mu).
\]
Therefore,
\begin{equation}
\label{eq:l1-by-Dpsi}
\|\nu-\mu\|_1^2\le \frac{2}{(1-\gamma)^2}D_{\psi_\gamma}(\nu|\mu).
\end{equation}

\medskip
\textbf{(3) Combining the two inequalities.}
Substituting \eqref{eq:l1-by-Dpsi} into \eqref{eq:descent-l1} yields
\[
f(\nu)\le f(\mu)+\langle\nabla f(\mu),\nu-\mu\rangle
+\frac{L_1}{2}\cdot \frac{2}{(1-\gamma)^2}D_{\psi_\gamma}(\nu|\mu),
\]
that is,
\[
f(\nu)\le f(\mu)+\langle\nabla f(\mu),\nu-\mu\rangle
+L_{\mathrm{rel}}D_{\psi_\gamma}(\nu|\mu),
\qquad
L_{\mathrm{rel}}=\frac{L_1}{(1-\gamma)^2}
=\frac{\omega_{\max}^2R_{1,\max}^2}{4\tau(1-\gamma)^4}.
\]
This proves Proposition~\ref{prop:om-rel_smooth}. Since it is a concrete instantiation of
relative smoothness with reference function $\psi_\gamma$, it implies Lemma~\ref{lem:om-rel_smooth}.
\end{proof}

\section{Continuity Guarantee for Finite-Step Tabular CMDPI}
\label{app:continuity-omega-pi}

\begin{proposition}[Continuity of finite-step tabular CMDPI]
\label{prop:cmdpi_finite_step_continuity}
Consider a finite discounted multi-objective MDP with finite state and action
spaces $\mathcal S$ and $\mathcal A$, discount factor $\gamma\in(0,1)$, and
bounded vector-valued rewards. Let $\Omega \subset \mathbb R^m_{++}$ be a
preference set, and suppose that the scalarization $u(\mathbf J,\omega)$ is
continuously differentiable in $\mathbf J$ and that
\[
(\mathbf J,\omega)
\mapsto
\nabla_{\mathbf J} u(\mathbf J,\omega)
\]
is continuous on its domain. Assume that the initialization
$\omega\mapsto \pi_0(\omega)$ is continuous and has full support, i.e.,
$\pi_0(a\mid s;\omega)>0$ for all $(s,a,\omega)$.

For fixed $\alpha>0$, suppose that the exact tabular CMDPI recursion defines
$\pi_{k+1}$ from $\pi_k$ as follows. First, set
\[
\mathbf g_k(\omega)
=
\nabla_{\mathbf J}
u\!\left(\mathbf J(\pi_k(\omega)),\omega\right),
\]
and define the scalarized reward
\[
r_k^\omega(s,a)
=
\mathbf g_k(\omega)^\top \mathbf r(s,a).
\]
Then let $Q_{k+1}^{\omega}$ be the unique fixed point of the soft Bellman
equation
\[
Q(s,a)
=
r_k^\omega(s,a)
+
\gamma
\mathbb E_{s'\sim P(\cdot\mid s,a)}
\left[
\chi\!\left(s';Q,\pi_k(\omega)\right)
\right],
\]
where
\[
\chi(s;Q,\pi)
=
\alpha
\log
\sum_{b\in\mathcal A}
\pi(b\mid s)
\exp\!\left(\frac{Q(s,b)}{\alpha}\right).
\]
Finally, update the policy by
\[
\pi_{k+1}(a\mid s;\omega)
=
\frac{
\pi_k(a\mid s;\omega)
\exp\!\left(Q_{k+1}^{\omega}(s,a)/\alpha\right)
}{
\sum_{b\in\mathcal A}
\pi_k(b\mid s;\omega)
\exp\!\left(Q_{k+1}^{\omega}(s,b)/\alpha\right)
}.
\]
Then, for every finite $k\ge 0$, the map
\[
\omega \mapsto \pi_k(\omega)
\]
is continuous.
\end{proposition}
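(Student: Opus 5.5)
We argue by induction on $k$. The induction hypothesis is that $\omega\mapsto\pi_k(\omega)$ is continuous and has full support, i.e., $\pi_k(a\mid s;\omega)>0$ for all $(s,a,\omega)$. The base case $k=0$ is exactly the assumption on $\pi_0$. Full support has to be carried along because the soft Bellman operator contains $\log\sum_b\pi_k(b\mid s)e^{Q/\alpha}$. The update then preserves full support automatically: $\pi_{k+1}$ is a normalized product of strictly positive terms.

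For the inductive step, we decompose one CMDPI iteration into a composition of maps and check continuity of each.

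\emph{(i) $\omega\mapsto \mathbf J(\pi_k(\omega))$.} In the tabular case,
\[
\mathbf V^\pi=(I-\gamma P_\pi)^{-1}\mathbf r_\pi ,
\]
where $P_\pi$ and $\mathbf r_\pi$ are linear in $\pi$. The matrix $I-\gamma P_\pi$ is invertible for every stochastic $P_\pi$, so $\pi\mapsto \mathbf J(\pi)$ is continuous (indeed rational). Composing with the continuous map $\omega\mapsto\pi_k(\omega)$ gives continuity of $\omega\mapsto\mathbf J(\pi_k(\omega))$.

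\emph{(ii) $\omega\mapsto \mathbf g_k(\omega)$ and $\omega\mapsto r_k^\omega$.} By the assumed joint continuity of $(\mathbf J,\omega)\mapsto\nabla_{\mathbf J}u$, the map $\omega\mapsto \mathbf g_k(\omega)$ is continuous. Since $\mathbf r$ is fixed, $r_k^\omega=\mathbf g_k(\omega)^\top\mathbf r$ is continuous as well.

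\emph{(iii) Continuity of the fixed point $Q^\omega_{k+1}$. This is the main step.} Write the soft Bellman operator as $T_{r,\pi}Q$ with parameters $(r,\pi)$ ranging over rewards and full-support policies. By the cited result of Geist et al., $T_{r,\pi}$ is a $\gamma$-contraction in $\|\cdot\|_\infty$ for every fixed $(r,\pi)$. We would use the standard parametric contraction argument:
\[
\|Q^{\omega}-Q^{\omega'}\|_\infty
\le \|T_{\omega}Q^{\omega}-T_{\omega}Q^{\omega'}\|_\infty+\|T_{\omega}Q^{\omega'}-T_{\omega'}Q^{\omega'}\|_\infty
\le \gamma\|Q^{\omega}-Q^{\omega'}\|_\infty+\|T_{\omega}Q^{\omega'}-T_{\omega'}Q^{\omega'}\|_\infty .
\]
Rearranging,
\[
\|Q^{\omega}-Q^{\omega'}\|_\infty\le(1-\gamma)^{-1}\|T_\omega Q^{\omega'}-T_{\omega'}Q^{\omega'}\|_\infty .
\]
It then suffices to show that $T_\omega Q\to T_{\omega'}Q$ as $\omega\to\omega'$, uniformly for $Q$ in a bounded set. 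This needs two ingredients:
\begin{itemize}
\item[(a)] A uniform bound on the fixed points. We have $\|Q^\omega\|_\infty\le \|r_k^\omega\|_\infty/(1-\gamma)$, because $\chi(s;Q,\pi)\le\max_b Q(s,b)$ and $\chi(s;Q,\pi)\ge\min_b Q(s,b)$. The reward bound is locally uniform, since $\mathbf g_k$ is continuous.
\item[(b)] Continuity of $(r,\pi,Q)\mapsto r+\gamma P\chi(\cdot;Q,\pi)$. The expression $\chi$ is continuous in $(\pi,Q)$ as long as the argument of the logarithm stays positive. For any probability vector $\pi(\cdot\mid s)$, positivity of the exponentials makes the sum positive, so $\chi$ is continuous on the relevant compact neighbourhoods. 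Uniform continuity on compacts then gives the required uniformity in $Q$.
\end{itemize}
An alternative to this step is the implicit function theorem, using that $I-\partial_Q T$ is invertible because $\partial_Q T$ is $\gamma$ times a stochastic matrix. The contraction argument avoids any smoothness requirement in $\omega$, so we would use it.

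\emph{(iv) The policy update.} $\pi_{k+1}(a\mid s;\omega)$ is a quotient of continuous functions of $(\pi_k(\omega),Q^\omega_{k+1})$. The denominator is strictly positive by full support of $\pi_k$. Hence $\omega\mapsto\pi_{k+1}(\omega)$ is continuous, and it has full support. This closes the induction.
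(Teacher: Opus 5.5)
Your proof is correct and follows the paper's argument essentially step for step: induction on $k$ with full support carried along, continuity of $\mathbf g_k$ and $r_k^\omega$, the parametric-contraction bound for $Q_{k+1}^\omega$, and positivity of the normalizer in the multiplicative update. The one difference is that your uniform-in-$Q$ ingredients (a)--(b) are unnecessary: the right-hand side of your bound is evaluated at $Q^{\omega'}$, the fixed point at the limit preference, so continuity of $\omega\mapsto T_\omega Q$ for each fixed $Q$ suffices, which is exactly the simplification the paper uses.
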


\begin{proof}
We prove the claim by induction on $k$. 
The induction step consists of the following three parts:
\begin{itemize}
    \item[(i)] continuity of the scalarized reward
    $\omega\mapsto r_k^\omega$;
    \item[(ii)] continuity of the Bellman fixed point
    $\omega\mapsto Q_{k+1}^{\omega}$;
    \item[(iii)] continuity of the updated policy
    $\omega\mapsto\pi_{k+1}(\omega)$.
\end{itemize}
The only nontrivial step is (ii), because $Q_{k+1}^{\omega}$ is defined
implicitly as the fixed point of a $\omega$-dependent Bellman operator. We
handle this step using a standard perturbation argument for parameterized
contractions.

The base case $k=0$ holds by the assumption that
$\omega\mapsto\pi_0(\omega)$ is continuous.

\paragraph{(i) Continuity of the scalarized reward.}
Assume that $\omega\mapsto\pi_k(\omega)$ is continuous. Since the state and
action spaces are finite and $\gamma<1$, the objective vector
$\mathbf J(\pi)$ is continuous as a function of the tabular policy $\pi$.
Hence
\[
\omega
\mapsto
\mathbf g_k(\omega)
=
\nabla_{\mathbf J}
u\!\left(\mathbf J(\pi_k(\omega)),\omega\right)
\]
is continuous. Since the vector reward is bounded and finite-dimensional,
$\omega\mapsto r_k^\omega$ is also continuous.

\paragraph{(ii) Continuity of the Bellman fixed point.}
For fixed $\pi$, the map
\[
Q
\mapsto
\chi(s;Q,\pi)
\]
is $1$-Lipschitz with respect to the sup norm. Therefore, the soft Bellman
operator
\[
(T_{k,\omega}Q)(s,a)
:=
r_k^\omega(s,a)
+
\gamma
\mathbb E_{s'\sim P(\cdot\mid s,a)}
\left[
\chi\!\left(s';Q,\pi_k(\omega)\right)
\right]
\]
is a $\gamma$-contraction in $Q$. Thus, for each $\omega$, it has a unique
fixed point $Q_{k+1}^{\omega}$.

It remains to show that $\omega\mapsto Q_{k+1}^{\omega}$ is continuous.
Let $\omega'\to\omega$. Since $Q_{k+1}^{\omega'}$ and
$Q_{k+1}^{\omega}$ are fixed points of $T_{k,\omega'}$ and
$T_{k,\omega}$, respectively, and all operators have the same contraction
modulus $\gamma$, we have
\[
\begin{aligned}
\left\|
Q_{k+1}^{\omega'}
-
Q_{k+1}^{\omega}
\right\|_\infty
&=
\left\|
T_{k,\omega'} Q_{k+1}^{\omega'}
-
T_{k,\omega} Q_{k+1}^{\omega}
\right\|_\infty
\\
&\le
\left\|
T_{k,\omega'} Q_{k+1}^{\omega'}
-
T_{k,\omega'} Q_{k+1}^{\omega}
\right\|_\infty
+
\left\|
T_{k,\omega'} Q_{k+1}^{\omega}
-
T_{k,\omega} Q_{k+1}^{\omega}
\right\|_\infty
\\
&\le
\gamma
\left\|
Q_{k+1}^{\omega'}
-
Q_{k+1}^{\omega}
\right\|_\infty
+
\left\|
T_{k,\omega'} Q_{k+1}^{\omega}
-
T_{k,\omega} Q_{k+1}^{\omega}
\right\|_\infty .
\end{aligned}
\]
Rearranging gives
\[
\left\|
Q_{k+1}^{\omega'}
-
Q_{k+1}^{\omega}
\right\|_\infty
\le
\frac{1}{1-\gamma}
\left\|
T_{k,\omega'} Q_{k+1}^{\omega}
-
T_{k,\omega} Q_{k+1}^{\omega}
\right\|_\infty .
\]
Here the right-hand side is evaluated at the fixed vector
$Q_{k+1}^{\omega}$. Therefore, it is enough to show that
$\omega\mapsto T_{k,\omega}Q$ is continuous for each fixed $Q$.

This follows from the continuity of $\omega\mapsto r_k^\omega$, the induction
hypothesis that $\omega\mapsto\pi_k(\omega)$ is continuous, and the joint
continuity of $\chi(s;Q,\pi)$ in $(Q,\pi)$. Hence
\[
\left\|
T_{k,\omega'} Q_{k+1}^{\omega}
-
T_{k,\omega} Q_{k+1}^{\omega}
\right\|_\infty
\to 0
\qquad
(\omega'\to\omega),
\]
and consequently
\[
Q_{k+1}^{\omega'}
\to
Q_{k+1}^{\omega}.
\]

\paragraph{(iii) Continuity of the multiplicative policy update.}
Finally, the policy update
\[
\pi_{k+1}(a\mid s;\omega)
=
\frac{
\pi_k(a\mid s;\omega)
\exp\!\left(Q_{k+1}^{\omega}(s,a)/\alpha\right)
}{
\sum_{b\in\mathcal A}
\pi_k(b\mid s;\omega)
\exp\!\left(Q_{k+1}^{\omega}(s,b)/\alpha\right)
}
\]
is a ratio of continuous functions. The denominator is strictly positive
because $\pi_k(\cdot\mid s;\omega)$ has full support and the exponential term
is strictly positive. Therefore,
$\omega\mapsto\pi_{k+1}(\omega)$ is continuous. Moreover, the same update
preserves full support, so the induction continues. This proves the claim for
all finite $k$.
\end{proof}

\section{Implementation Details of Preference-Conditioned Single Actor-Critic}
\label{app:impl}

\begin{algorithm}[tb]
\caption{Preference-conditioned Single Actor-Critic}
\label{alg:pcsac}
\begin{algorithmic}
\REQUIRE Preference-conditioned actor: $\pi_\theta$ and its target actor $\pi_{\bar{\theta}}$, Preference-conditioned vector critic ($m$-dimension outputs): $\mathbf{Q}_\phi(s, a, \omega)$ and its target network, Replay buffer $\mathcal{D}$, and parameters $\alpha$, $\tau$
\FOR{$k=0,1,\cdots$}
    \STATE Store samples with $\pi_\theta$ into the $\mathcal{D}$
    \STATE $(s, a, \mathbf{r}, s') \sim \mathcal{D}$ and sample $\omega$ uniformly
    \STATE $a' \sim \pi_\theta(\cdot|s', \omega)$
    \STATE $\mathbf{y} = \mathbf{r} + \gamma \mathbf{Q}_{\bar{\phi}}(s', a', \omega) - \alpha \log \pi_\theta(a'|s')$
    \STATE $L_\phi = \frac{1}{2}\|\mathbf{y} - \mathbf{Q}_{\phi}(s, a, \omega)\|_2^2$
    \STATE $\tilde{a} \sim \pi_\theta(\cdot|s, \omega)$
    \STATE $\mathbf{g} = \nabla u(\mathbf{Q}_{\phi}(s, \tilde{a}, \omega), \omega)$
    \STATE $L_\theta = - \mathbf{g}^\top \mathbf{Q}_{\phi}(s, \tilde{a}, \omega)$ with 
    \STATE update $\pi_\theta$ based on the gradient of $L_\theta$ and $\mathbf{Q}_{\phi}$ based on $L_\phi$
\ENDFOR
\end{algorithmic}
\end{algorithm}

\paragraph{Base implementations.}
We implement both the model-free (MF) and model-based (MB) variants on top of open-source codebases.
As MF baseline, we start from Soft Actor-Critic (SAC) \citep{haarnoja2018sac}.
As MB baseline, we start from the DHMBPO implementation \citep{kubo2025double}, which combines MBPO \citep{janner2019trust} with SVG-style policy optimization \citep{pmlr-v144-amos21a} and uses deep ensembles for learned dynamics.
Importantly, the DHMBPO codebase already contains a SAC implementation; hence, we keep the network architectures (actor/critic), optimizer choices, and most training hyperparameters identical across MF/MB, and only change the components required for multi-objective preference-conditioning and scalarization.

\paragraph{Preference-conditioned actor and vector critic.}
We modify the actor and critic networks so that they are conditioned on a preference vector $\omega$.
The actor is $\pi_\theta(a\mid s,\omega)$, and the critic outputs an $m$-dimensional vector
$\mathbf{Q}_\phi(s,a,\omega)\in\mathbb{R}^m$.
In all our MuJoCo experiments, $m=2$ (see Appendix~\ref{app:exp} for the environment list).

\paragraph{Preference sampling during training.}
During data collection in the real environment, we sample a preference vector $\omega$ at the beginning of each episode and keep it fixed within the episode.
During each policy optimization step (including model rollouts in DHMBPO, i.e., Distribution rollout and Training rollout), we additionally sample a fresh $\omega$ from the same distribution whenever the actor/critic is evaluated.
Concretely, $\omega$ is sampled using PyTorch \texttt{RelaxedOneHotCategorical} with temperature $1$ and uniform class probabilities $\mathbf{1}/m$.
This sampling can yield near-zero (and in principle zero) coordinates, but we did not observe numerical errors in our experiments.

\paragraph{Vector SAC targets.}
Given a mini-batch $(s,a,\mathbf{r},s')$ from the replay buffer and a sampled preference $\omega$, we compute the target action $a'\sim\pi_\theta(\cdot\mid s',\omega)$ and define the vector target
\begin{equation}
\mathbf{y}
=
\mathbf{r}
+
\gamma\,\mathbf{Q}_{\bar{\phi}}(s',a',\omega)
-
\alpha\,\log \pi_\theta(a'\mid s',\omega)\,\mathbf{1},
\end{equation}
where $\mathbf{1}\in\mathbb{R}^m$ broadcasts the entropy term to each reward dimension.
We use the squared loss $L_\phi=\tfrac12\|\mathbf{y}-\mathbf{Q}_\phi(s,a,\omega)\|_2^2$.

\paragraph{Scalarized actor update via utility gradients.}
To update the actor, we convert the vector critic output into a scalar objective using the gradient of the utility.
Let $\tilde{a}\sim \pi_\theta(\cdot\mid s,\omega)$ and $\mathbf{z}=\mathbf{Q}_\phi(s,\tilde{a},\omega)$.
We compute
\begin{equation}
\mathbf{g} = \nabla_{\mathbf{z}} u(\mathbf{z},\omega),
\end{equation}
and use the following actor loss:
\begin{equation}
L_\theta = - \mathbf{g}^\top \mathbf{Q}_\phi(s,\tilde{a},\omega).
\end{equation}
For linear scalarization, $\mathbf{g}=\omega$ and this recovers the standard linear weighted objective.
For the proposed STCH scalarization, $\mathbf{g}$ depends on $\mathbf{z}$ and $\omega$ through $u$.
In our implementation, we treat $\mathbf{g}$ as a constant with respect to $\theta$ (i.e., we stop gradients through $\mathbf{g}$) to avoid second-order derivatives.
The complete training loop is summarized in Algorithm~\ref{alg:pcsac}.

\paragraph{Reward normalization and a utopia point estimation.}
We normalize rewards to stabilize training across objectives and environments.
At each policy optimization step, we compute the sample mean and standard deviation of \emph{all} real-environment reward samples collected up to that time, and normalize each reward component in the sampled mini-batch accordingly.
Using the normalized mini-batch rewards, we estimate an ``ideal point'' $\mathbf{I}\in\mathbb{R}^m$ as follows:
we compute the 99th percentile (component-wise) of the normalized rewards within the mini-batch,
multiply it by $1/(1-\gamma)$, and then update $\mathbf{I}$ by an exponential moving average (EMA)
\begin{equation}
\mathbf{I} \leftarrow (1-\beta)\mathbf{I} + \beta\,\widehat{\mathbf{I}},
\qquad \beta=3\times 10^{-4},
\end{equation}
with initialization $\mathbf{I}=\mathbf{1}$.
Similarly, we compute the 1st percentile and maintain its EMA to obtain a lower bound vector.
When computing the critic targets, we clip the target critic output using these bounds (this clipping is inherited from the DHMBPO implementation).

\paragraph{Temperature parameters.}
Our method introduces a single additional hyperparameter, the PCSAC temperature $\tau$, and we test $\tau\in\{0.1,0.01,0.001\}$.
We fix SAC's entropy temperature $\alpha$ to $0.01$ for all environments, motivated by the fact that we normalize rewards (and thus avoid environment-specific tuning of $\alpha$).

\paragraph{Hyperparameters.}
All remaining hyperparameters are shared with DHMBPO \citep{kubo2025double} unless otherwise stated.
Table~\ref{tab:hyper} lists the values used in our experiments.

\section{Detail of Numerical Experiments}
\subsection{Planning experiment details for the toy tabular MOMDP}
\label{app:exp-toy}

\begin{figure}[tb]
    \centering
    \includegraphics[width=0.5\linewidth]{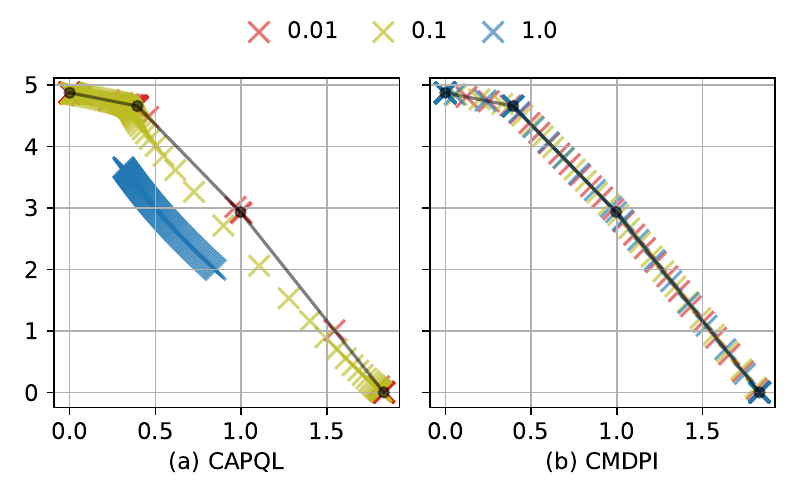}
    \caption{
    Scatter plots of the converged objective vectors in a two-objective MOMDP, obtained by (a) the regularized method CAPQL~\citep{lu2023multiobjective} and (b) the proposed method  CMDPI.
    The black solid curve indicates the Pareto front, and the black dots indicate its vertices, which are recovered by the linear-utility baseline.
    The legend is shared across the two panels because we use the same set of hyperparameter values for CAPQL ($\alpha$) and our method ($\tau$), i.e., we pair $\alpha = 1/\tau$ and sweep the same three values in both panels.
    CAPQL exhibits a trade-off between achieving a dense front and accurately recovering a subset of Pareto-optimal points.
    In contrast, our method recovers Pareto-optimal solutions accurately and more uniformly (visually denser) across all tested hyperparameter values.
    }
    \label{fig:tabluar}
\end{figure}

We report implementation details of the planning-based comparison in the toy two-objective MOMDP used in Fig.~\ref{fig:tabluar}.
All methods are executed in the \emph{planning} setting: the transition kernel $P$ and reward functions $\mathbf r$ are fully known,
the horizon is infinite with discount $\gamma=0.8$, and all quantities (e.g., $\mathbf J(\pi)$) are computed exactly by matrix operations.
No sampling-based learning is involved; the only stochasticity comes from the random initialization of policies for iterative methods.

We summarize the CMDPI algorithm in Algorithm~\ref{alg:cmdpi}.
\begin{algorithm}[tb]
\caption{Concave Mirror Descent Policy Iteration}
\label{alg:cmdpi}
\begin{algorithmic}
\REQUIRE preference $\omega\in\mathbb R^m_{++}$, temperature $\alpha>0$, initial policy $\pi_0$ with full support.
\FOR{$k=0,1,\dots,K-1$}
    \STATE Compute/estimate $\mathbf J(\pi_k)$. \hfill (objective evaluation)
    \STATE \textbf{Dynamic scalarization weight:} $\mathbf g_k=\nabla_{\mathbf J}u\big(\mathbf J(\pi_k),\omega\big)\in\mathbb R^m_{++}$.
    \STATE \textbf{Scalar reward:} $r_k(s,a)=\langle \mathbf g_k,\mathbf r(s,a)\rangle$.
    \STATE \textbf{Soft policy evaluation:} solve the fixed point~\eqref{eq:soft-bellman} to obtain $Q_{k+1}$.
    \STATE \textbf{Policy improvement:} update $\pi_{k+1}$ by~\eqref{eq:pi-update}.
\ENDFOR
\end{algorithmic}
\end{algorithm}

\paragraph{Environment.}
We consider a discounted MOMDP with $\mathcal S=\{0,1,2,3\}$ and $\mathcal A=\{0,1\}$.
The initial distribution is uniform: $p_0(s)=1/|\mathcal S|$ for all $s$.
The transition kernel is specified by two row-stochastic matrices $P^a\in\mathbb R^{|\mathcal S|\times|\mathcal S|}$ where
$P^a_{s,s'}=P(s'\mid s,a)$.
For action $a=0$,
\[
P^{0}=
\begin{bmatrix}
1 & 0 & 0 & 0\\
1 & 0 & 0 & 0\\
0 & 1 & 0 & 0\\
0 & 0 & 1 & 0
\end{bmatrix},
\]
and for action $a=1$,
\[
P^{1}=
\begin{bmatrix}
0.1 & 0.9 & 0 & 0\\
0.1 & 0 & 0.9 & 0\\
0 & 0.1 & 0 & 0.9\\
0 & 0 & 0.1 & 0.9
\end{bmatrix}.
\]
Intuitively, action $0$ moves ``left'' deterministically (with reflection at the left boundary),
and action $1$ moves ``right'' with probability $0.9$ (and ``left'' with probability $0.1$), with reflection at the right boundary.

The two-dimensional reward vector is $\mathbf r(s,a)=(r_1(s,a),r_2(s,a))\in\mathbb R^2$ and is given by
\[
\mathbf r(s,0)=
\begin{cases}
(0,0) & s=0\\
(0,2) & s=1\\
(0,1.6) & s=2\\
(1,0) & s=3
\end{cases}
\qquad
\mathbf r(s,1)=
\begin{cases}
(0,0) & s=0\\
(0,0) & s=1\\
(0,0) & s=2\\
(0,1) & s=3.
\end{cases}
\]
This parameterization is chosen so that the attainable objective set is visibly nontrivial and the Pareto boundary is not axis-aligned.

\paragraph{Preference vectors and STCH reference point.}
We use $100$ preference vectors on the simplex, equally spaced and including endpoints:
\[
\omega^{(i)}=\Bigl(\frac{i}{99},\,1-\frac{i}{99}\Bigr),\quad i=0,1,\dots,99.
\]
Although the theory often assumes $\omega\in\mathbb R^2_{++}$, in this environment the Pareto boundary has no segments parallel to coordinate axes,
so including endpoints $\omega^{(0)}=(0,1)$ and $\omega^{(99)}=(1,0)$ does not introduce degeneracy in practice.

For the STCH utility, we set the reference point componentwise as
\[
I_\ell = \max_{s\in\mathcal S,\ a\in\mathcal A}\left|\frac{r_\ell(s,a)}{1-\gamma}\right|,\quad \ell\in\{1,2\}.
\]
With $\gamma=0.8$ and the above rewards, this yields $I_1=\frac{1}{0.2}=5$ and $I_2=\frac{2}{0.2}=10$.

\paragraph{Compared planning methods.}
For each preference vector $\omega^{(i)}$, we independently run the following methods and obtain a stationary policy $\pi$.

\emph{(i) Linear scalarization + value iteration (reference points/curve).}
We define the scalar reward $r_\omega(s,a)=\langle \omega, \mathbf r(s,a)\rangle$ and solve
$\max_{\pi\in\Pi}\mathbb E_{s\sim p_0}[V^\pi_\omega(s)]$ by standard discounted value iteration in the tabular model.
Let $\pi^{\mathrm{lin}}_{\omega}$ be an optimal policy and $\mathbf J(\pi^{\mathrm{lin}}_{\omega})$ its (vector) objective.
The black dots in Fig.~\ref{fig:tabluar} indicate the distinct objective vectors obtained by this linear-scalarization planning
(which correspond to supported extreme points in this toy problem).
The black solid curve is drawn by sorting these points along the boundary and connecting them by piecewise-linear interpolation.

\emph{(ii) CAPQL.}
We implement the CAPQL planning update of \citet{lu2023multiobjective} in the tabular setting with known $P$ and $\mathbf r$,
and iterate the associated operator until convergence to obtain a stationary policy.
We sweep three regularization strengths $\alpha$ (as shown in Fig.~\ref{fig:tabluar}).

\emph{(iii) Proposed CMDPI under STCH utility.}
We run the proposed mirror-descent policy iteration algorithm for maximizing the STCH utility
$u(\mathbf J(\pi),\omega)$ with smoothing parameter $\tau>0$.
Following the implementation used in the main experiments, we set the CMDPI step parameter as $\alpha = 1/\tau$.
We sweep three values of $\tau$ (as shown in Fig.~\ref{fig:tabluar}).

For CAPQL and the proposed method, policies are initialized randomly.
In our implementation, for each state $s$ we sample $\pi_0(\cdot\mid s)$ independently from the uniform distribution over the simplex
(equivalently, $\pi_0(\cdot\mid s)\sim\mathrm{Dirichlet}(1,1)$), and then run deterministic planning iterations.

\paragraph{Exact evaluation of objective vectors.}
For every returned stationary policy $\pi$, we compute its objective vector $\mathbf J(\pi)$ exactly (no rollouts).
Let $P_\pi\in\mathbb R^{|\mathcal S|\times|\mathcal S|}$ denote the induced transition matrix
$P_\pi(s,s')=\sum_{a}\pi(a\mid s)P(s'\mid s,a)$.
We compute the discounted state occupancy $\rho_\pi\in\mathbb R^{|\mathcal S|}$ as the unique solution of
\[
\rho_\pi^\top = (1-\gamma)p_0^\top + \gamma \rho_\pi^\top P_\pi,
\]
and then form the discounted occupancy measure $\mu_\pi(s,a)=\rho_\pi(s)\pi(a\mid s)$.
Finally, we compute
\[
\mathbf J(\pi)=\frac{1}{1-\gamma}\sum_{s\in\mathcal S}\sum_{a\in\mathcal A}\mu_\pi(s,a)\,\mathbf r(s,a),
\]
which matches the definition used throughout the paper.

\paragraph{Visualization.}
For each method and each hyperparameter setting (three values for CAPQL and three values for the proposed method),
we plot the $100$ exact objective vectors $\{\mathbf J(\pi_{\omega^{(i)}})\}_{i=0}^{99}$ as a scatter plot,
where different colors correspond to different hyperparameters.
The black dots and the black solid line are produced by linear scalarization + value iteration as described above.

\subsection{Experimental Protocol and Baselines on MO-Gymnasium}
\label{app:exp}

\subsubsection{Experimental Protocol for Discrete-Action MO-Gymnasium Tasks}
\label{app:exp-discrete}

\paragraph{Environments.}
We evaluate on eight MO-Gymnasium~\cite{felten2023toolkit} tasks with discrete action spaces:
\texttt{deep-sea-treasure-concave-v0},
\texttt{deep-sea-treasure-v0},
\texttt{fishwood-v0},
\texttt{four-room-v0},
\texttt{fruit-tree-v0},
\texttt{minecart-v0},
\texttt{mo-lunar-lander-v3}, and
\texttt{mo-reacher-v5}.
The reward dimension ranges from two to six.
We use the default episode-length settings of MO-Gymnasium and train each method for $2$M environment interaction steps.

\paragraph{Baselines and codebases.}
We compare LS (linear scalarization), CAPQL~\cite{lu2023multiobjective}, PreCo~\cite{yangpreference}, C-MORL~\cite{liu2025cmorl}, PCSAC, and CMDPI.
LS, CAPQL, PreCo, PCSAC, and CMDPI are run within the same PreCo's official codebase to reduce implementation-level confounders.
PCSAC and CMDPI are our STCH-based preference-conditioned actor--critic variants.
For the PreCo-based implementations, we apply a common per-objective running reward normalization in critic learning, using the empirical mean and standard deviation of collected reward vectors.
This normalization is used only during training; all reported return vectors and metrics are computed from unnormalized episodic returns.

C-MORL is included as a strong multi-policy Pareto-front discovery baseline, although it is not a single preference-conditioned policy.
We use the official public implementation and modify the interaction accounting and logging needed to align the training budget and post-hoc metric computation with the other methods.

\paragraph{Training seeds.}
We run eight random seeds for each method and environment.

\paragraph{Evaluation and metric computation.}
At each evaluation checkpoint, we collect unnormalized episodic return vectors from each method and save them for post-hoc evaluation.
For preference-conditioned methods, we perform preference-sweep evaluation by conditioning the policy on a fixed rollout preference set and recording the resulting return vectors.
For C-MORL, which outputs a set of policies rather than a single preference-conditioned policy, we evaluate the policies in the discovered set to obtain return vectors.

We do not use the metric values logged by the original C-MORL implementation.
Instead, after all runs are completed, we compute HV, EUM, and SP for all methods from the saved unnormalized return vectors using the same post-processing script.
HV is computed with a common environment-wise reference point.
EUM is computed using a common EUM evaluation preference set shared by all methods.

\paragraph{Evaluation preferences.}
We distinguish between two preference sets.
The rollout preference set is used as input to preference-conditioned policies during evaluation rollouts and therefore applies only to LS, CAPQL, PreCo, PCSAC, and CMDPI.
The EUM evaluation preference set is used only for metric aggregation and is shared by all methods, including C-MORL.
For C-MORL, the EUM preferences are used only at the aggregation stage because C-MORL returns a policy set rather than a preference-conditioned policy.
All preference sets are generated by the deterministic simplex-grid routine used in PreCo-based codebase and are fixed across methods, checkpoints, and seeds.

\paragraph{Reference points.}
We distinguish between reference points used for reported HV computation and reference values used internally by C-MORL.
For reported HV, we use a common environment-wise reference point shared by all methods.
Following the same protocol as in the continuous-action experiments, we determine this reference point a posteriori as the component-wise minimum over the unnormalized evaluation return vectors collected across all methods, checkpoints, and random seeds.

C-MORL additionally uses internal reference values in its online multi-policy selection procedure.
These internal values are used only during C-MORL training and are never used to compute the reported HV, EUM, or SP values.
When the official C-MORL implementation provides environment-specific internal values, we keep them; otherwise, we use a conservative lower-bound vector with all entries set to $-10^4$.

\paragraph{Counting environment interactions.}
We count only environment interactions used for training and exclude evaluation rollouts for all methods.
For C-MORL, we count interactions across parallel sampling workers when necessary, so that the logged sampling steps reflect the actual number of environment transitions used for training.
This adjustment affects only sample-efficiency accounting.

\paragraph{Hyperparameters.}
PCSAC uses the off-policy replay-buffer optimization scheme with entropy coefficient $\alpha_{\mathrm{ent}}=0.3$ tuned out of $\{0.03, 0.1, 0.3\}$.
CMDPI uses the on-policy optimization scheme motivated by the KL-regularized formulation, with tuned smoothness/KL parameter 0.001 out of  $\{0.1, 0.01, 0.001\}$.
For numerical stability of the KL-regularized actor update, the CMDPI actor optimizer uses Adam with weight decay $10^{-4}$.
For C-MORL, we tune the fraction of the total $2$M-step budget allocated to the initialization stage 0.4 out of $\{0.2, 0.4, 0.5\}$.
All other shared optimizer and network hyperparameters in the PreCo-based implementations follow the default settings of the codebase.

\subsubsection{Experimental Protocol for Continuous-Action MO-Gymnasium Tasks}
\label{app:exp-continuous}

\paragraph{Environments.}
We evaluate our method on eight continuous-control multi-objective MuJoCo
environments from MO-Gymnasium (all v5).
All tasks are configured with two or three reward dimensions.
Specifically, we use
\texttt{mo-halfcheetah-v5},
\texttt{mo-hopper-v5},
\texttt{mo-swimmer-v5},
\texttt{mo-walker2d-v5},
and truncated-observation variants of Ant and Humanoid, denoted
\texttt{TruncAnt} and \texttt{TruncHumanoid}, following the standard
``truncated state'' setting used in previous works~\citep{chua2018deep,janner2019trust,kubo2025double}.
For the Pareto-front visualization experiments, we use two-objective versions
of \texttt{mo-hopper-v5} and \texttt{TruncAnt} by removing one reward dimension
from their original reward vectors.

\paragraph{Baselines and codebases.}
Besides our MF/MB baselines built on SAC/DHMBPO, we run additional state-of-the-art MORL baselines using their publicly available implementations.
We note that CAPQL and our MF-Linear baseline are conceptually similar (both are preference-conditioned actor-critic with linear scalarization), but differ in codebase and in our additional reward normalization described in Appendix~\ref{app:impl}.

\paragraph{Training seeds.}
For our baselines and proposed method, we run $8$ random seeds for each environment.

\paragraph{Reference point}
We set for each environments reference point by taking the minimum expected reward for each dimension over all runs; all over the methods and preference-sweep evaluation steps and seeds.

\paragraph{Evaluation schedule (preference-sweep evaluation).}
During training, we periodically perform \emph{preference-sweep evaluation}.
We evaluate every $25{,}000$ real-environment interaction steps by default.
Some external implementations cannot be paused at exactly multiples of $25{,}000$ steps; in those cases, we evaluate at the first available checkpoint after each multiple.
For plotting sample-efficiency curves, we post-process the logged evaluations by binning them into $25{,}000$-step bins:

\paragraph{Evaluation preferences.}
For each environment with $m=2$ objectives, we pre-generate 20 approximately equally spaced preference vectors on the simplex using
\texttt{morl\_baselines.common.weights.equally\_spaced\_weights} from MORL-Baselines~\citep{felten2023toolkit}.
At a given training checkpoint and seed, for each preference vector $\omega$,
we condition the actor on $\omega$ and roll out the policy.
We record the undiscounted episodic return vectors and average them over episodes to obtain an estimate of the expected return vector for that $(\text{checkpoint},\text{seed})$.

\paragraph{Counting environment interactions (sample-efficiency).}
To compare sample-efficiency fairly across methods, we count \emph{all} real-environment interaction steps used for training, excluding only evaluation rollouts.
For model-based methods, imaginary rollouts inside learned models are \emph{not} counted.
We also do not count gradient update steps or replay-buffer sampling frequency (i.e., the update-to-data ratio does not change the step count).

\paragraph{Accounting for additional interaction phases in baselines.}
Some baselines use extra real-environment interaction phases beyond standard off-policy training.
For GPI-PD \citep{alegre2023sample}, we include the real-environment interactions used to identify the optimal preference vector as part of the sample count.
For PD-MORL \citep{basaklar2022pd}, the method relies on a \emph{key preference set} for preference alignment:
it defines key preferences as the unit vectors (one per objective) plus the uniform preference (e.g., for $m=2$: $\{[1,0],[0.5,0.5],[0,1]\}$),
trains using fixed key preferences to obtain corresponding solutions, and uses them to normalize the solution space and fit an interpolator that projects preferences for alignment.
We follow the official PD-MORL procedure and count the real-environment interactions used in this key-preference phase (200K steps in our setup) as part of the sample count.

\paragraph{Hyperparameters.}
Table~\ref{tab:hyper} lists the hyperparameters shared across our MF/MB baselines and the proposed method.
All unspecified parameters follow DHMBPO \citep{kubo2025double}.

\begin{table}[tb]
    \centering
    \caption{Hyperparameters for our SAC/DHMBPO-based implementations (MF/MB Linear and MF/MB PCSAC).
    From top to bottom: (i) the PCSAC-specific parameter shared by our MF/MB variants, (ii) hyperparameters shared across our MF and MB implementations, and (iii) MB-only hyperparameters shared across our MB implementations.
    }
    \label{tab:hyper}

    \begin{tabular}{cc}
    \toprule
    Hyper-parameter & Value \\
    \midrule
    PCSAC temperature $\tau$ & $\{0.1, 0.01, 0.001\}$ \\
    \midrule
    Discount factor & 0.995 \\
    Seed steps & 5000 \\
    Batch size & 256 \\
    Update-to-data ratio & 1 \\
    Replay buffer size & 1M \\
    Learning rate for the actor, critics and $\alpha$ & $3\cdot10^{-4}$ \\
    Fixed value of $\alpha$ & 0.01 \\
    Momentum coefficient $c$ for target critic & 0.995 \\
    Ensemble size of critic & 5 \\
    \midrule
    Length of distribution rollout  & 20 \\
    Length of training rollout  & 5 \\
    Ensemble size of model  & 8 \\
    Optimizer for training model  & AdamW \citep{loshchilov2018decoupled} \\
    Learning rate for model  & $1\cdot10^{-3}$ \\
    \bottomrule
    \end{tabular}

\end{table}

\subsection{Detailed Results}
\subsubsection{Discrete action tasks}
\label{app:exp-detail-discrete}

\begin{figure}[tb]
    \centering
    \begin{subfigure}[t]{0.32\linewidth}
        \centering
        \includegraphics[width=\linewidth]{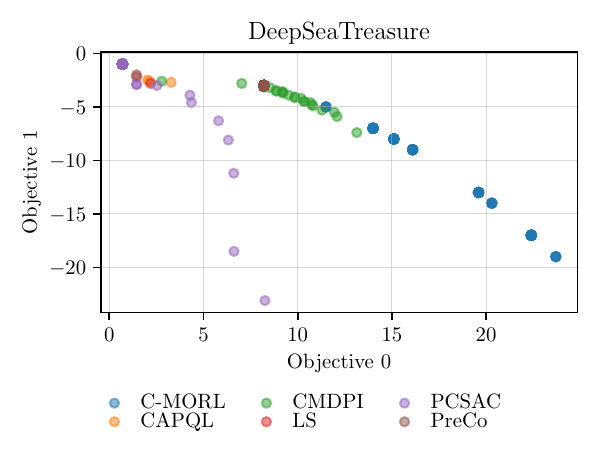}
        \caption{\texttt{DeepSeaTreasure}}
        \label{fig:disc-scatter-task1}
    \end{subfigure}
    \hfill
    \begin{subfigure}[t]{0.32\linewidth}
        \centering
        \includegraphics[width=\linewidth]{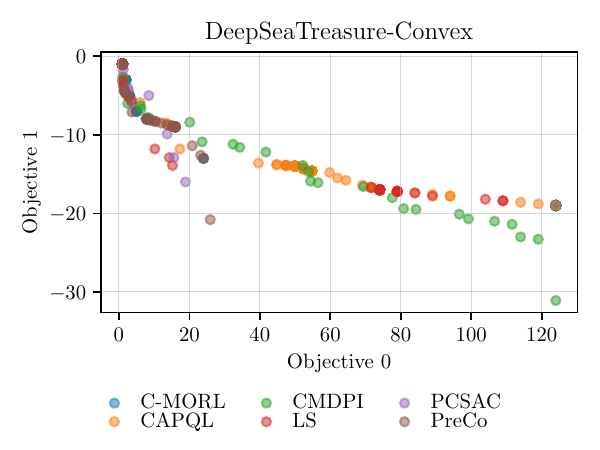}
        \caption{\texttt{DeepSeaTreasure-Concave}}
        \label{fig:disc-scatter-task2}
    \end{subfigure}
    \hfill
    \begin{subfigure}[t]{0.32\linewidth}
        \centering
        \includegraphics[width=\linewidth]{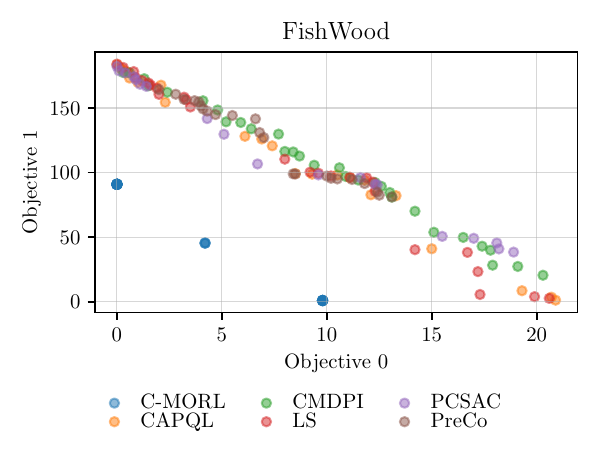}
        \caption{\texttt{FishWood}}
        \label{fig:disc-scatter-task3}
    \end{subfigure}

    \vspace{0.5em}

    \begin{subfigure}[t]{0.32\linewidth}
        \centering
        \includegraphics[width=\linewidth]{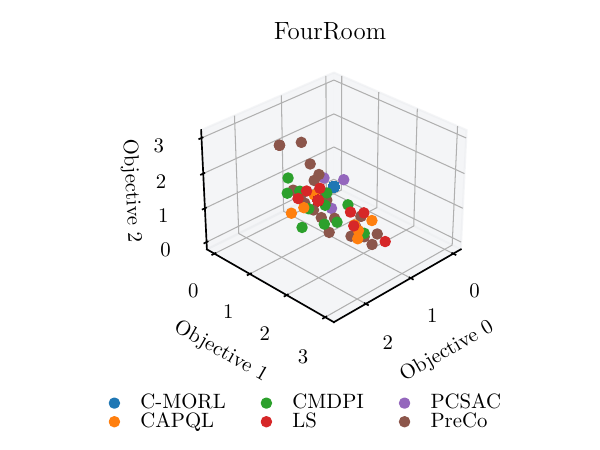}
        \caption{\texttt{FourRoom}}
        \label{fig:disc-scatter-task4}
    \end{subfigure}
    \begin{subfigure}[t]{0.32\linewidth}
        \centering
        \includegraphics[width=\linewidth]{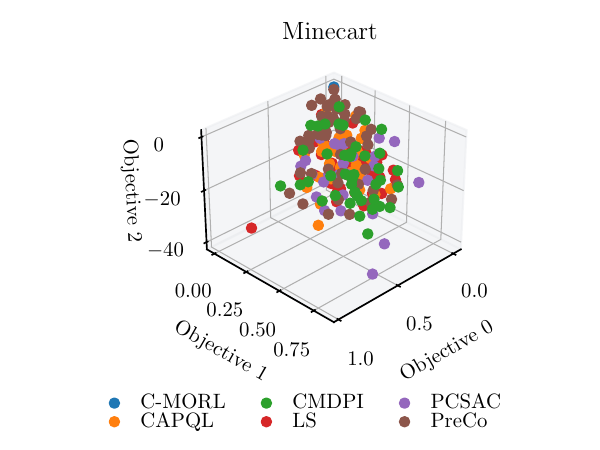}
        \caption{\texttt{Minecart}}
        \label{fig:disc-scatter-task5}
    \end{subfigure}
    \hfill

    \caption{
    Final-step return scatter plots for five two- or three-objective tasks.
    }
    \label{fig:disc-scatter-plots}
\end{figure}

Figures~\ref{fig:disc-scatter-plots} present the final-step return scatter plots 
for five two- or three-objective tasks.
In addition, Table~\ref{tbl:disc_each} reports task-wise metrics for the
eight discrete-action MO-Gymnasium benchmarks.

\begin{table}
\caption{Comparison of final-step metrics, EUM, HV, and Sparsity on discrete action tasks from MO-gymnasium .}
\label{tbl:disc_each}
\resizebox{\textwidth}{!}{%
\begin{tabular}{lc|cccccc}
\toprule
Environments & Metrics & LS & CAPQL & PreCo & C-MORL & PCSAC & Deep CMDPI \\
\midrule
\multirow[c]{3}{*}{DST} & EUM $\uparrow$ ($10^{0}$) & $2.60 \pm 0.00$ & $2.60 \pm 0.00$ & $2.60 \pm 0.00$ & $\mathbf{6.68 \pm 0.10}$ & $-41.83 \pm 6.10$ & $3.18 \pm 0.10$ \\
 & HV $\uparrow$ ($10^{3}$) & $0.80 \pm 0.00$ & $0.80 \pm 0.00$ & $0.80 \pm 0.00$ & $\mathbf{1.99 \pm 0.03}$ & $0.01 \pm 0.01$ & $1.01 \pm 0.04$ \\
 & SP $\downarrow$ ($10^{0}$) & $\mathbf{0.00 \pm 0.00}$ & $\mathbf{0.00 \pm 0.00}$ & $\mathbf{0.00 \pm 0.00}$ & $65.69 \pm 4.79$ & $0.27 \pm 0.27$ & $0.29 \pm 0.08$ \\
\cline{1-8}
\multirow[c]{3}{*}{DST-C} & EUM $\uparrow$ ($10^{1}$) & $0.33 \pm 0.99$ & $2.05 \pm 0.80$ & $0.08 \pm 0.08$ & $3.36 \pm 0.30$ & $-3.03 \pm 0.92$ & $\mathbf{4.15 \pm 0.48}$ \\
 & HV $\uparrow$ ($10^{3}$) & $2.01 \pm 1.28$ & $4.30 \pm 1.55$ & $0.36 \pm 0.26$ & $6.74 \pm 0.51$ & $0.18 \pm 0.15$ & $\mathbf{8.31 \pm 0.84}$ \\
 & SP $\downarrow$ ($10^{-1}$) & $56.45 \pm 45.70$ & $374.15 \pm 214.38$ & $\mathbf{7.61 \pm 7.61}$ & $12369.38 \pm 1051.80$ & $145.97 \pm 145.97$ & $1260.36 \pm 458.87$ \\
\cline{1-8}
\multirow[c]{3}{*}{FishWood} & EUM $\uparrow$ ($10^{1}$) & $9.16 \pm 0.02$ & $9.16 \pm 0.04$ & $7.93 \pm 0.10$ & $4.59 \pm 0.01$ & $\mathbf{9.22 \pm 0.02}$ & $7.65 \pm 0.48$ \\
 & HV $\uparrow$ ($10^{3}$) & $0.56 \pm 0.19$ & $0.35 \pm 0.14$ & $0.77 \pm 0.05$ & $0.05 \pm 0.03$ & $0.94 \pm 0.06$ & $\mathbf{1.56 \pm 0.05}$ \\
 & SP $\downarrow$ ($10^{1}$) & $134.80 \pm 60.33$ & $233.61 \pm 118.24$ & $\mathbf{4.08 \pm 1.06}$ & $176.73 \pm 16.74$ & $612.73 \pm 176.80$ & $36.26 \pm 6.93$ \\
\cline{1-8}
\multirow[c]{3}{*}{Fruit} & EUM $\uparrow$ ($10^{0}$) & $4.27 \pm 0.02$ & $4.30 \pm 0.02$ & $4.75 \pm 0.08$ & $4.41 \pm 0.08$ & $\mathbf{4.99 \pm 0.01}$ & $4.85 \pm 0.02$ \\
 & HV $\uparrow$ ($10^{4}$) & $0.53 \pm 0.01$ & $0.53 \pm 0.01$ & $1.42 \pm 0.13$ & $0.09 \pm 0.02$ & $\mathbf{1.55 \pm 0.01}$ & $1.52 \pm 0.02$ \\
 & SP $\downarrow$ ($10^{-4}$) & $7.07 \pm 0.34$ & $11.97 \pm 5.22$ & $6.78 \pm 0.70$ & $194325.24 \pm 10559.86$ & $\mathbf{3.19 \pm 0.39}$ & $4.05 \pm 0.44$ \\
\cline{1-8}
\multirow[c]{3}{*}{Lunar} & EUM $\uparrow$ ($10^{1}$) & $1.21 \pm 0.06$ & $\mathbf{1.24 \pm 0.05}$ & $1.15 \pm 0.05$ & $-0.44 \pm 0.63$ & $-0.11 \pm 0.15$ & $-0.86 \pm 0.08$ \\
 & HV $\uparrow$ ($10^{10}$) & $0.00 \pm 0.00$ & $0.47 \pm 0.30$ & $0.46 \pm 0.25$ & $2.91 \pm 0.90$ & $\mathbf{3.77 \pm 0.24}$ & $0.78 \pm 0.00$ \\
 & SP $\downarrow$ ($10^{1}$) & $\mathbf{1.38 \pm 1.38}$ & $172.98 \pm 124.98$ & $14.97 \pm 3.63$ & $13598.59 \pm 7974.63$ & $44.10 \pm 4.56$ & $18.57 \pm 2.91$ \\
\cline{1-8}
\multirow[c]{3}{*}{Minecart} & EUM $\uparrow$ ($10^{-1}$) & $-64.51 \pm 0.58$ & $-64.35 \pm 0.98$ & $\mathbf{-6.81 \pm 0.96}$ & $-9.90 \pm 8.21$ & $-61.68 \pm 2.15$ & $-41.07 \pm 4.67$ \\
 & HV $\uparrow$ ($10^{1}$) & $0.00 \pm 0.00$ & $0.00 \pm 0.00$ & $1.75 \pm 0.88$ & $0.00 \pm 0.00$ & $2.43 \pm 0.72$ & $\mathbf{6.51 \pm 0.69}$ \\
 & SP $\downarrow$ ($10^{0}$) & $\mathbf{0.00 \pm 0.00}$ & $\mathbf{0.00 \pm 0.00}$ & $0.86 \pm 0.37$ & $\mathbf{0.00 \pm 0.00}$ & $126.73 \pm 61.52$ & $8.50 \pm 3.63$ \\
\cline{1-8}
\multirow[c]{3}{*}{Reacher} & EUM $\uparrow$ ($10^{1}$) & $2.28 \pm 0.07$ & $2.29 \pm 0.07$ & $2.13 \pm 0.00$ & $2.29 \pm 0.02$ & $\mathbf{2.54 \pm 0.03}$ & $2.31 \pm 0.03$ \\
 & HV $\uparrow$ ($10^{6}$) & $1.36 \pm 0.19$ & $1.40 \pm 0.19$ & $1.06 \pm 0.01$ & $1.29 \pm 0.03$ & $\mathbf{2.08 \pm 0.07}$ & $1.47 \pm 0.08$ \\
 & SP $\downarrow$ ($10^{-2}$) & $4.28 \pm 1.97$ & $5.93 \pm 2.34$ & $\mathbf{2.00 \pm 0.50}$ & $9286.84 \pm 1410.67$ & $15.47 \pm 1.89$ & $5.77 \pm 1.61$ \\
\cline{1-8}
\multirow[c]{3}{*}{Room} & EUM $\uparrow$ ($10^{0}$) & $1.96 \pm 0.05$ & $1.96 \pm 0.06$ & $\mathbf{2.15 \pm 0.04}$ & $0.00 \pm 0.00$ & $1.39 \pm 0.09$ & $1.90 \pm 0.15$ \\
 & HV $\uparrow$ ($10^{1}$) & $0.74 \pm 0.08$ & $0.72 \pm 0.09$ & $\mathbf{1.02 \pm 0.06}$ & $0.00 \pm 0.00$ & $0.31 \pm 0.06$ & $0.82 \pm 0.14$ \\
 & SP $\downarrow$ ($10^{0}$) & $0.03 \pm 0.01$ & $0.03 \pm 0.01$ & $0.09 \pm 0.02$ & $\mathbf{0.00 \pm 0.00}$ & $0.13 \pm 0.03$ & $0.11 \pm 0.03$ \\
\cline{1-8}
\bottomrule
\end{tabular}
}
\end{table}

\subsubsection{Continuous action tasks}
\label{app:exp-detail-continuous}
We show, from Figure~\ref{fig:ec_hv_mf} to Figure~\ref{fig:ec_eum_mb},  for both of model-free and model-based methods, the sample efficiency curve with respect to the Hypervolume and EUM, and in Figure~\ref{fig:fronts_mf} and Figure~\ref{fig:fronts_mb}, final-step return scatters on 2-dimensional six tasks.
Additionally, we summarize qualitative metrics at Final step metrics in Table~\ref{tbl:full}.

\begin{figure}[tb]
    \centering
    \includegraphics[width=\linewidth]{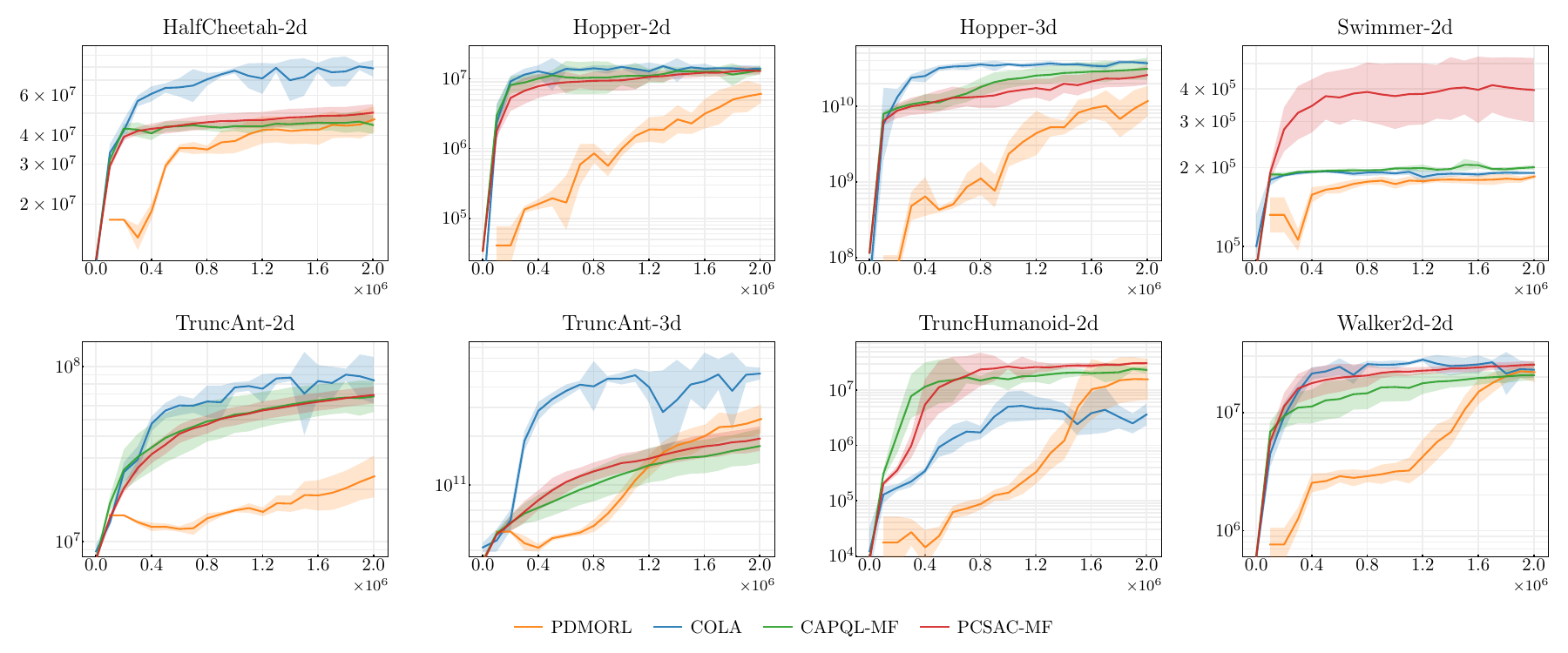}
    \caption{
      Hypervolume of model-free methods over envionment steps on eight continuous action tasks from MO-gymnasium.
      All statistics are obtained from 8 independent runs and the mean and standard deviation are reported.
      PD-MORL allocates an initial interaction budget to identify key preferences~\citep{basaklar2022pd}; hence its curve starts slightly later on the x-axis.
    }
    \label{fig:ec_hv_mf}
\end{figure}

\begin{figure}[tb]
    \centering
    \includegraphics[width=\linewidth]{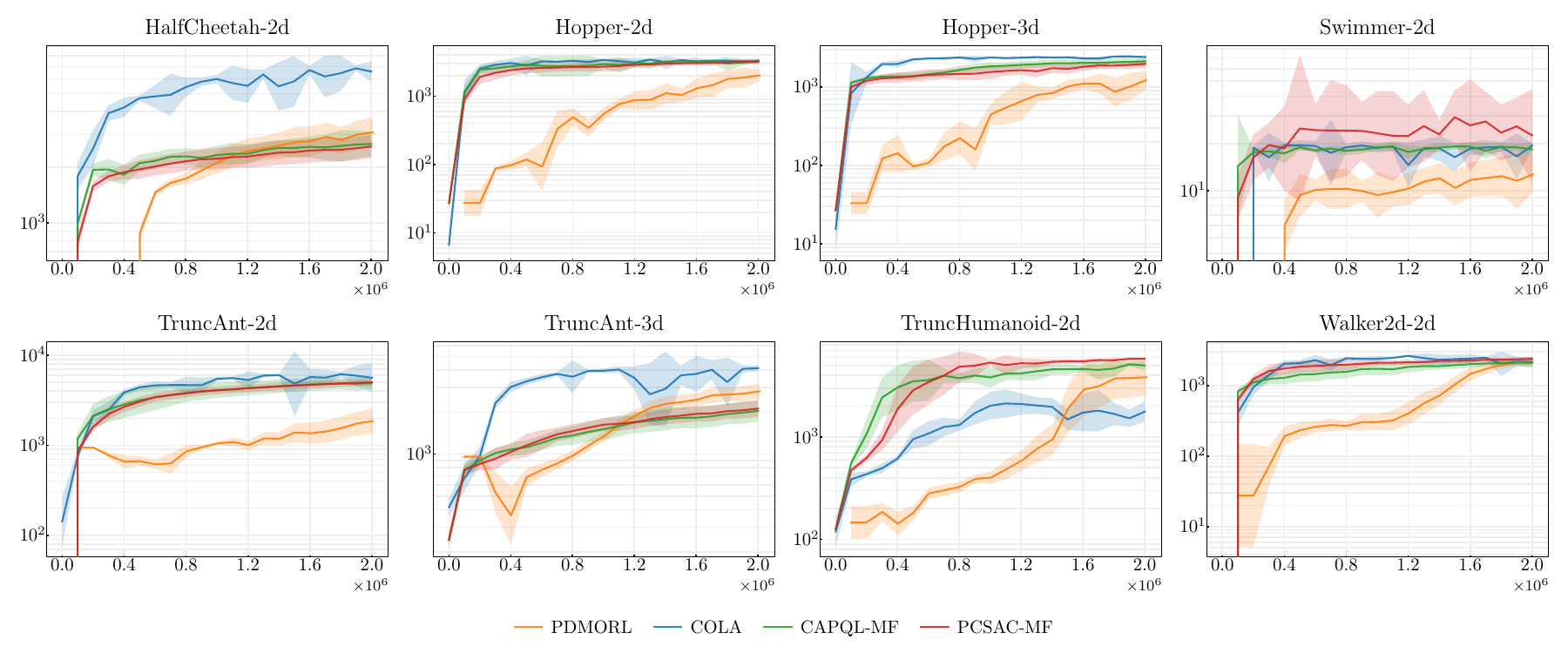}
    \caption{
      Expected Utility Metric of model-free methods over envionment steps on eight continuous action tasks from MO-gymnasium.
      All statistics are obtained from 8 independent runs and the mean and standard deviation are reported.
      PD-MORL allocates an initial interaction budget to identify key preferences~\citep{basaklar2022pd}; hence its curve starts slightly later on the x-axis.
    }
    \label{fig:ec_eum_mf}
\end{figure}

\begin{figure}[tb]
    \centering
    \includegraphics[width=\linewidth]{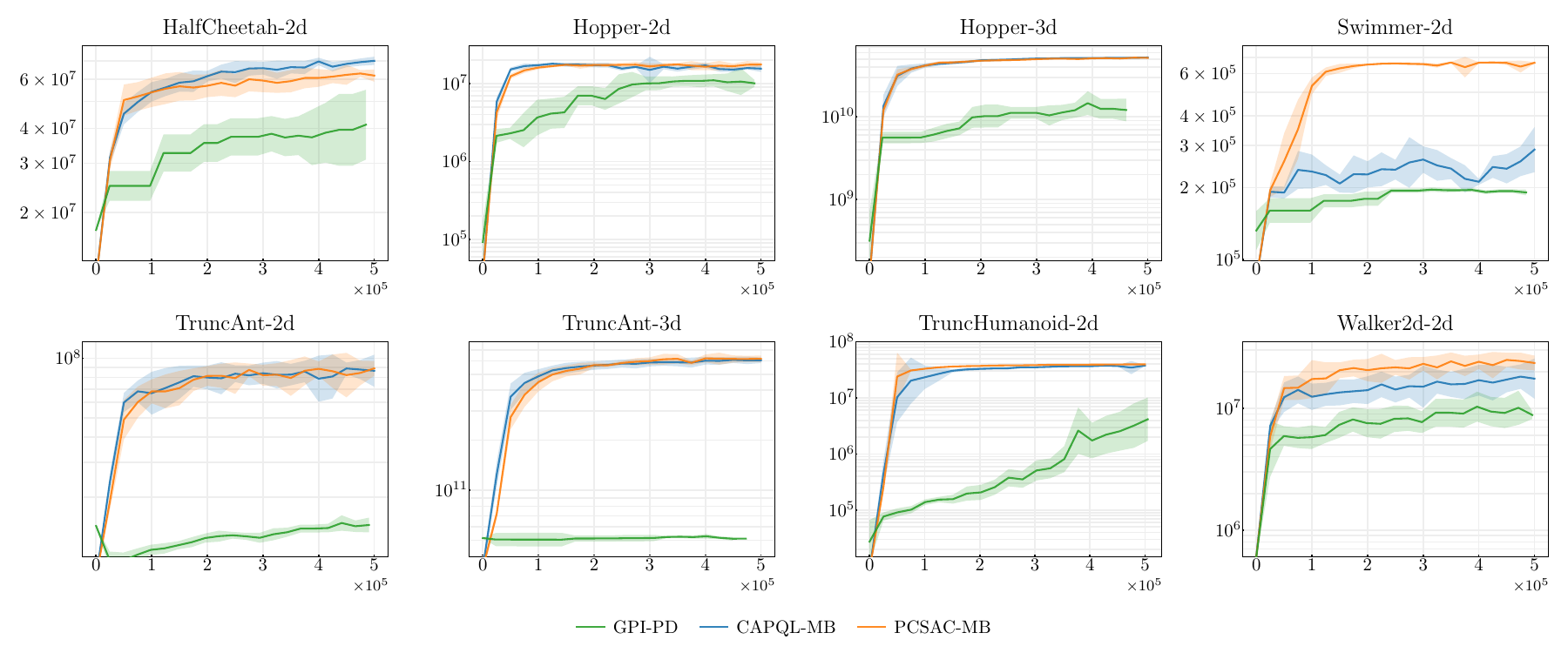}
    \caption{
      Hypervolume of model-based methods over envionment steps on eight continuous action tasks from MO-gymnasium.
      All statistics are obtained from 8 independent runs and the mean and standard deviation are reported.
    }
    \label{fig:ec_hv_mb}
\end{figure}

\begin{figure}[tb]
    \centering
    \includegraphics[width=\linewidth]{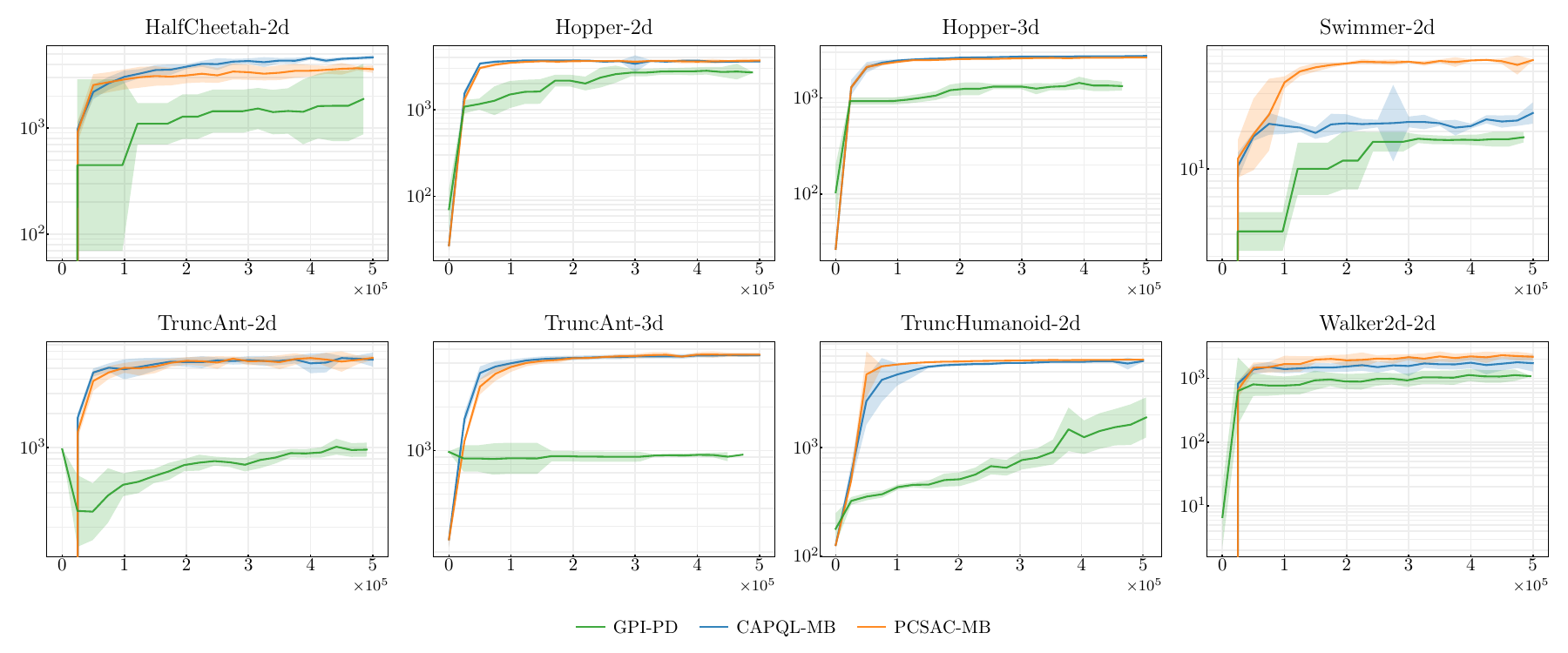}
    \caption{
      Expected Utility Metric of model-based methods over envionment steps on eight continuous action tasks from MO-gymnasium.
      All statistics are obtained from 8 independent runs and the mean and standard deviation are reported.
    }
    \label{fig:ec_eum_mb}
\end{figure}

\begin{figure}[tb]
    \centering
    \includegraphics[width=\linewidth]{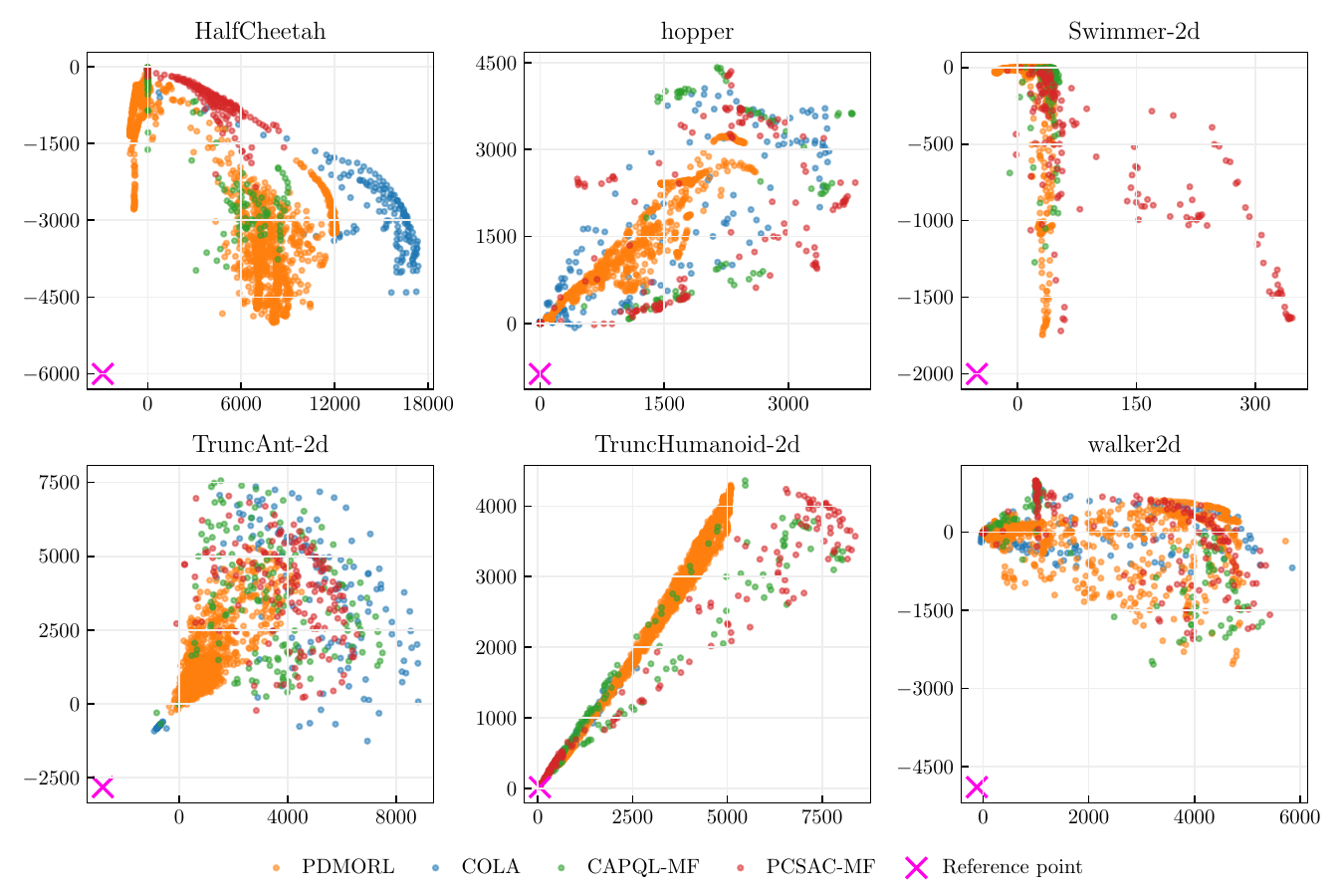}
    \caption{Final-step (2M steps) sum-of-reward-vectors scatter plots for model-free methods on 2-dimensional tasks.}
    \label{fig:fronts_mf}
\end{figure}

\begin{figure}[tb]
    \centering
    \includegraphics[width=\linewidth]{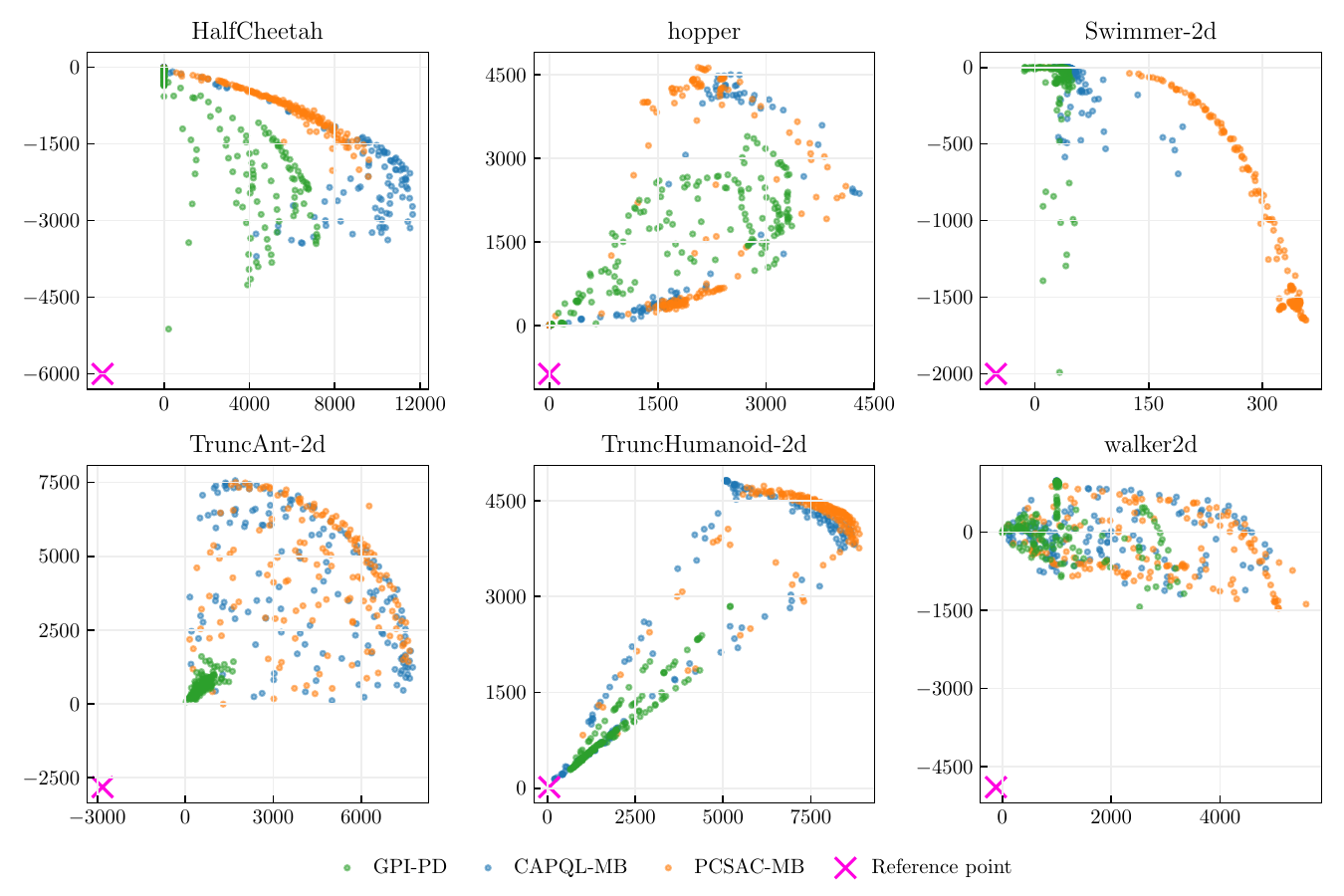}
    \caption{Final-step (500K steps) sum-of-reward-vectors scatter plots for model-based methods on 2-dimensional tasks.}
    \label{fig:fronts_mb}
\end{figure}

\begin{table}[tbh]
\caption{Comparison of final-step metrics, EUM, HV, and Sparsity on continuous acttion tasks from MO-gymnasium.}
\label{tbl:full}
\resizebox{\textwidth}{!}{%
\centering
\begin{tabular}{lc|ccccccc}
\toprule
 &  & PDMORL & COLA & CAPQL-MF & PCSAC-MF & GPI-PD & CAPQL-MB & PCSAC-MB \\
\midrule
\multirow[c]{3}{*}{HalfCheetah-2d} & EUM $\uparrow$ ($10^{3}$) & $3.39 \pm 0.32$ & $\mathbf{7.02 \pm 0.08}$ & $2.66 \pm 0.20$ & $2.57 \pm 0.20$ & $1.74 \pm 0.27$ & $4.64 \pm 0.10$ & $3.63 \pm 0.13$ \\
 & HV $\uparrow$ ($10^{7}$) & $5.43 \pm 0.33$ & $\mathbf{8.19 \pm 0.10}$ & $4.43 \pm 0.20$ & $5.03 \pm 0.23$ & $4.01 \pm 0.28$ & $7.00 \pm 0.12$ & $6.25 \pm 0.15$ \\
 & SP $\downarrow$ ($10^{6}$) & $0.59 \pm 0.04$ & $5.63 \pm 1.91$ & $5.48 \pm 1.27$ & $\mathbf{0.28 \pm 0.06}$ & $1.03 \pm 0.26$ & $4.91 \pm 1.06$ & $0.35 \pm 0.03$ \\
\cline{1-9}

\multirow[c]{3}{*}{Hopper-2d} & EUM $\uparrow$ ($10^{3}$) & $2.07 \pm 0.18$ & $2.99 \pm 0.35$ & $3.32 \pm 0.09$ & $3.15 \pm 0.10$ & $2.80 \pm 0.09$ & $3.59 \pm 0.06$ & $\mathbf{3.67 \pm 0.06}$ \\
 & HV $\uparrow$ ($10^{7}$) & $0.62 \pm 0.09$ & $1.27 \pm 0.18$ & $1.35 \pm 0.10$ & $1.29 \pm 0.07$ & $1.10 \pm 0.06$ & $1.56 \pm 0.08$ & $\mathbf{1.74 \pm 0.07}$ \\
 & SP $\downarrow$ ($10^{6}$) & $\mathbf{0.01 \pm 0.00}$ & $0.22 \pm 0.10$ & $1.13 \pm 0.48$ & $0.48 \pm 0.20$ & $0.20 \pm 0.11$ & $0.32 \pm 0.19$ & $1.00 \pm 0.19$ \\
\cline{1-9}

\multirow[c]{3}{*}{Hopper-3d} & EUM $\uparrow$ ($10^{3}$) & $1.26 \pm 0.20$ & $2.24 \pm 0.15$ & $2.09 \pm 0.12$ & $1.96 \pm 0.12$ & $1.31 \pm 0.10$ & $\mathbf{2.74 \pm 0.02}$ & $2.63 \pm 0.01$ \\
 & HV $\uparrow$ ($10^{10}$) & $1.27 \pm 0.29$ & $3.28 \pm 0.38$ & $3.08 \pm 0.32$ & $2.60 \pm 0.34$ & $1.21 \pm 0.20$ & $\mathbf{5.23 \pm 0.06}$ & $5.14 \pm 0.04$ \\
 & SP $\downarrow$ ($10^{6}$) & $0.03 \pm 0.01$ & $1.87 \pm 1.70$ & $0.05 \pm 0.01$ & $0.09 \pm 0.02$ & $0.02 \pm 0.01$ & $0.02 \pm 0.00$ & $\mathbf{0.01 \pm 0.00}$ \\
\cline{1-9}

\multirow[c]{3}{*}{Swimmer-2d} & EUM $\uparrow$ ($10^{1}$) & $1.40 \pm 0.19$ & $1.85 \pm 0.06$ & $1.88 \pm 0.07$ & $2.29 \pm 0.85$ & $1.76 \pm 0.06$ & $2.80 \pm 0.32$ & $\mathbf{7.51 \pm 0.05}$ \\
 & HV $\uparrow$ ($10^{5}$) & $1.89 \pm 0.02$ & $1.91 \pm 0.02$ & $2.01 \pm 0.02$ & $3.92 \pm 0.56$ & $1.92 \pm 0.02$ & $2.88 \pm 0.35$ & $\mathbf{6.67 \pm 0.04}$ \\
 & SP $\downarrow$ ($10^{4}$) & $0.14 \pm 0.06$ & $\mathbf{0.01 \pm 0.00}$ & $3.14 \pm 1.64$ & $5.25 \pm 2.14$ & $0.82 \pm 0.78$ & $0.34 \pm 0.12$ & $2.04 \pm 0.09$ \\
\cline{1-9}

\multirow[c]{3}{*}{TruncAnt-3d} & EUM $\uparrow$ ($10^{3}$) & $2.92 \pm 0.17$ & $4.21 \pm 0.06$ & $2.04 \pm 0.19$ & $2.13 \pm 0.17$ & $0.93 \pm 0.01$ & $4.53 \pm 0.05$ & $\mathbf{4.54 \pm 0.07}$ \\
 & HV $\uparrow$ ($10^{11}$) & $2.72 \pm 0.24$ & $5.00 \pm 0.14$ & $1.75 \pm 0.23$ & $1.96 \pm 0.22$ & $0.52 \pm 0.01$ & $6.06 \pm 0.12$ & $\mathbf{6.10 \pm 0.13}$ \\
 & SP $\downarrow$ ($10^{5}$) & $\mathbf{0.02 \pm 0.02}$ & $7.45 \pm 0.80$ & $0.57 \pm 0.11$ & $0.29 \pm 0.07$ & $5.34 \pm 2.27$ & $0.06 \pm 0.01$ & $0.15 \pm 0.05$ \\
\cline{1-9}

\multirow[c]{3}{*}{TruncAnt-2d} & EUM $\uparrow$ ($10^{3}$) & $2.11 \pm 0.42$ & $5.42 \pm 0.88$ & $4.94 \pm 0.36$ & $5.06 \pm 0.21$ & $1.03 \pm 0.09$ & $5.93 \pm 0.34$ & $\mathbf{6.17 \pm 0.15}$ \\
 & HV $\uparrow$ ($10^{7}$) & $2.67 \pm 0.51$ & $8.14 \pm 1.15$ & $6.78 \pm 0.68$ & $6.96 \pm 0.38$ & $1.50 \pm 0.07$ & $8.64 \pm 0.63$ & $\mathbf{8.97 \pm 0.35}$ \\
 & SP $\downarrow$ ($10^{6}$) & $0.34 \pm 0.09$ & $0.86 \pm 0.21$ & $0.79 \pm 0.16$ & $0.44 \pm 0.06$ & $\mathbf{0.14 \pm 0.09}$ & $2.10 \pm 1.04$ & $3.61 \pm 2.97$ \\
\cline{1-9}

\multirow[c]{3}{*}{TruncHumanoid-2d} & EUM $\uparrow$ ($10^{3}$) & $4.00 \pm 0.44$ & $1.70 \pm 0.34$ & $4.93 \pm 0.46$ & $5.85 \pm 0.05$ & $2.17 \pm 0.38$ & $6.28 \pm 0.04$ & $\mathbf{6.46 \pm 0.02}$ \\
 & HV $\uparrow$ ($10^{7}$) & $1.69 \pm 0.24$ & $0.35 \pm 0.13$ & $2.32 \pm 0.30$ & $3.09 \pm 0.06$ & $0.50 \pm 0.16$ & $3.78 \pm 0.04$ & $\mathbf{3.95 \pm 0.03}$ \\
 & SP $\downarrow$ ($10^{5}$) & $0.00 \pm 0.00$ & $\mathbf{0.00 \pm 0.00}$ & $1.42 \pm 0.78$ & $0.82 \pm 0.20$ & $\mathbf{0.00 \pm 0.00}$ & $2.76 \pm 0.44$ & $0.52 \pm 0.08$ \\
\cline{1-9}

\multirow[c]{3}{*}{Walker2d-2d} & EUM $\uparrow$ ($10^{3}$) & $2.31 \pm 0.14$ & $2.37 \pm 0.13$ & $2.11 \pm 0.09$ & $\mathbf{2.39 \pm 0.08}$ & $1.12 \pm 0.09$ & $1.73 \pm 0.24$ & $2.08 \pm 0.18$ \\
 & HV $\uparrow$ ($10^{7}$) & $2.41 \pm 0.14$ & $2.47 \pm 0.13$ & $2.08 \pm 0.09$ & $\mathbf{2.54 \pm 0.09}$ & $1.01 \pm 0.15$ & $1.75 \pm 0.29$ & $2.23 \pm 0.22$ \\
 & SP $\downarrow$ ($10^{6}$) & $\mathbf{0.25 \pm 0.19}$ & $1.49 \pm 0.64$ & $1.75 \pm 0.29$ & $0.99 \pm 0.38$ & $0.25 \pm 0.12$ & $0.42 \pm 0.10$ & $0.58 \pm 0.20$ \\
\cline{1-9}
\bottomrule
\end{tabular}
} %
\end{table}

\subsection{Sensitivity analysis}
\label{app:sensitivity}
\subsubsection{Utopia point estimation}
\label{app:ur}
Since utopia-point estimation is one of the main practical gaps in the deep setting, we ran an additional sensitivity study by replacing the estimated utopia point $\hat{I}$ with $\lambda\hat{I}$ on 8 discrete-action MO-Gymnasium tasks.
Across the tested range as shonw in Table~\ref{tbl:ur_cmdpi} and Table~\ref{tbl:ur_pcsac}, neither CMDPI nor PCSAC showed a consistently strong degradation pattern across HV / EUM / SP.
The best $\lambda$ varied somewhat by metric, but we did not observe extreme brittleness.

\begin{table}
\centering
\caption{Senstivity of utopia point estimation bias for CMDPI}
\label{tbl:ur_cmdpi}
\begin{tabular}{llllll}
\toprule
value & 0.1 & 0.5 & 1.0 & 2.0 & 5.0 \\
\midrule
EUM & 3.4 $\pm$ 0.34 & 2.5 $\pm$ 0.42 & 3 $\pm$ 0.43 & 2.8 $\pm$ 0.3 & 3.4 $\pm$ 0.31 \\
HV & 3.2 $\pm$ 0.48 & 3.4 $\pm$ 0.3 & 2.9 $\pm$ 0.43 & 2.2 $\pm$ 0.49 & 3.2 $\pm$ 0.31 \\
SP & 2.8 $\pm$ 0.47 & 3.4 $\pm$ 0.48 & 2.2 $\pm$ 0.3 & 4 $\pm$ 0.35 & 2.6 $\pm$ 0.38 \\
\bottomrule
\end{tabular}
\end{table}

\begin{table}
\centering
\caption{Senstivity of utopia point estimation bias for PCSAC}
\label{tbl:ur_pcsac}
\begin{tabular}{llllll}
\toprule
value & 0.1 & 0.5 & 1.0 & 2.0 & 5.0 \\
\midrule
EUM & 3.9 $\pm$ 0.45 & 3.6 $\pm$ 0.55 & 2.4 $\pm$ 0.27 & 2.4 $\pm$ 0.34 & 2.8 $\pm$ 0.54 \\
HV & 2.5 $\pm$ 0.39 & 3.2 $\pm$ 0.68 & 2.6 $\pm$ 0.27 & 2.9 $\pm$ 0.4 & 3.7 $\pm$ 0.57 \\
SP & 2.6 $\pm$ 0.57 & 2.1 $\pm$ 0.39 & 2.8 $\pm$ 0.59 & 3.4 $\pm$ 0.35 & 4.1 $\pm$ 0.37 \\
\bottomrule
\end{tabular}
\end{table}

\subsubsection{Relative smoothness constant}
In practice, temperature parameter originating from the relative smoothness constant is treated as a hyperparameter. To address this concern empirically, in the similar way as experiments in~\ref{app:ur}, we swept the temperature parameter for both of CMDPI and PCSAC (here, denotated as $\alpha_{\text{ent}}$) and again found no consistently strong sensitivity across the tested range (Table~\ref{tbl:rs_cmdpi} and Table~\ref{tbl:rs_pcsac}).

\begin{table}
\centering
\caption{Sensitivity of the temperature $\alpha$ for CMDPI}
\label{tbl:rs_cmdpi}
\begin{tabular}{llllll}
\toprule
value & 0.01 & 0.1 & 1.0 & 10.0 & 100.0 \\
\midrule
EUM & 3.6 $\pm$ 0.39 & 2.3 $\pm$ 0.52 & 2.1 $\pm$ 0.25 & 3 $\pm$ 0.42 & 4 $\pm$ 0.43 \\
HV & 4.2 $\pm$ 0.35 & 2.6 $\pm$ 0.51 & 1.9 $\pm$ 0.38 & 2.9 $\pm$ 0.33 & 3.5 $\pm$ 0.38 \\
SP & 2.6 $\pm$ 0.54 & 2.4 $\pm$ 0.46 & 3.1 $\pm$ 0.4 & 3.2 $\pm$ 0.23 & 3.6 $\pm$ 0.49 \\
\bottomrule
\end{tabular}

\end{table}

\begin{table}
\centering
\caption{Sensitivity of the temperature $\alpha_{\text{ent}}$ for PCSAC}
\label{tbl:rs_pcsac}
\begin{tabular}{llllll}
\toprule
value & 0.01 & 0.1 & 1.0 & 10.0 & 100.0 \\
\midrule
EUM & 2.7 $\pm$ 0.45 & 2.2 $\pm$ 0.47 & 2.4 $\pm$ 0.3 & 4.4 $\pm$ 0.33 & 3.3 $\pm$ 0.47 \\
HV & 3.2 $\pm$ 0.53 & 2.4 $\pm$ 0.54 & 2.6 $\pm$ 0.54 & 3.4 $\pm$ 0.45 & 3.4 $\pm$ 0.38 \\
SP & 2.9 $\pm$ 0.47 & 2.9 $\pm$ 0.34 & 2.3 $\pm$ 0.52 & 3.6 $\pm$ 0.29 & 3.2 $\pm$ 0.38 \\
\bottomrule
\end{tabular}
\end{table}

\subsection{Compute Resources}

Experiments were conducted on an internal compute cluster;
a typical node has an Intel Xeon Gold 6138 CPU, an NVIDIA GeForce RTX 2080 Ti GPU, and about 360 GB RAM.
Each continuous-control run uses 1 CPU core, 1 GPU, and approximately 2 GB of main memory.

\paragraph{Continuous-Action Tasks}

For continuous control tasks (8 tasks), each (task, seed) run takes approximately 24 hours.
With 8 seeds, this corresponds to about 192 GPU hours per task and 1,536 GPU hours per method across all tasks.
Across 7 methods, this yields approximately 10,752 GPU hours.
Including implementation iterations and hyperparameter tuning (approximately 2--3 rounds),
the total cost is roughly 20,000--30,000 GPU hours.

In practice, we parallelize multiple seeds on a single GPU when possible,
reducing the effective wall-clock time to approximately one-third to one-half of the total GPU hours,
i.e., roughly 7,000--15,000 hours.

\paragraph{Discrete-Action Tasks}

For discrete tasks (8 tasks), experiments are conducted on CPU only, with each (task, seed) run taking approximately 30 minutes.
With 8 seeds, this corresponds to about 4 CPU hours per task and 32 CPU hours per method.
Across 6 methods, this yields approximately 192 CPU hours.
Including implementation iterations and hyperparameter tuning (approximately 2--3 rounds),
the total cost is approximately 400--600 CPU hours,
which remains negligible compared to continuous-control experiments.

\section{Limitations}

Our theoretical analysis is limited to the tabular setting, 
and bridging the gap between theory and deep implementations remains an important direction for future work.

(i) The assumptions of a tabular MOMDP and full support of the initial state distribution are introduced to ensure that CMDPI covers all points on a convex Pareto front. 
These assumptions are primarily technical and do not preclude applicability beyond this setting. 
In particular, since STCH is known to recover Pareto-optimal points even on non-convex fronts, 
it is an important open question to characterize the weakest conditions under which preference-to-solution uniqueness 
and convergence guarantees can be extended beyond the tabular and full-support assumptions.

(ii) While our theory guarantees a single preference-conditioned policy in the tabular case, 
extending this property to practical MORL settings with function approximation may introduce failure modes. 
A systematic theoretical and empirical analysis of such failure modes is left for future work.

(iii) Our analysis relies on smooth and strictly concave utility functions. 
We do not investigate, either theoretically or empirically, 
the behavior of CMDPI and STCH under other classes of utility functions (e.g., quadratic utilities), 
which may exhibit different optimization characteristics.

\end{document}